\newtheorem{theorem}{Theorem}
\newtheorem{lemma}{Lemma}
\newtheorem{definition}{Definition}
\newtheorem{proposition}{Proposition}
\def\A{\mathbf{A}}
\def\C{\mathbf{C}}
\def\e{\mathbf{e}}
\def\f{\mathbf{f}}
\def\E{\mathbf{E}}
\def\R{\mathbf{R}}
\def\p{\mathbf{p}}
\def\X{\mathbf{X}}
\def\x{\mathbf{x}}
\def\t{\mathbf{t}}
\def\G{\mathbf{G}}
\def\Q{\mathbf{Q}}
\def\h{\mathbf{h}}
\DeclareMathOperator*{\vect}{vec}
\DeclareMathOperator*{\mat}{mat}
\DeclareMathOperator*{\trace}{trace}
\DeclareMathOperator*{\rank}{rank}
\DeclareMathOperator*{\acq}{ACQ}
\begin{document}
%
\title{An Efficient Solution to Non-Minimal Case Essential Matrix Estimation}
%
%
%
%

\author{Ji Zhao
\IEEEcompsocitemizethanks{\IEEEcompsocthanksitem J. Zhao is with TuSimple, Beijing, China.\protect\\
E-mail: zhaoji84@gmail.com
}
}

%
%

\markboth{}
{Shell \MakeLowercase{\textit{et al.}}: Bare Demo of IEEEtran.cls for Computer Society Journals}
%



\IEEEtitleabstractindextext{%
\begin{abstract}
Finding relative pose between two calibrated images is a fundamental task in computer vision. Given five point correspondences, the classical five-point methods can be used to calculate the essential matrix efficiently. For the case of $N$ ($N > 5$) inlier point correspondences, which is called $N$-point problem, existing methods are either inefficient or prone to local minima. In this paper, we propose a certifiably globally optimal and efficient solver for the $N$-point problem. First we formulate the problem as a quadratically constrained quadratic program (QCQP). Then a certifiably globally optimal solution to this problem is obtained by semidefinite relaxation. This allows us to obtain certifiably globally optimal solutions to the original non-convex QCQPs in polynomial time. The theoretical guarantees of the semidefinite relaxation are also provided, including tightness and local stability. To deal with outliers, we propose a robust $N$-point method using M-estimators. Though global optimality cannot be guaranteed for the overall robust framework, the proposed robust $N$-point method can achieve good performance when the outlier ratio is not high. Extensive experiments on synthetic and real-world datasets demonstrated that our $N$-point method is $2\sim3$ orders of magnitude faster than state-of-the-art methods. Moreover, our robust $N$-point method outperforms state-of-the-art methods in terms of robustness and accuracy. 
\end{abstract}

\begin{IEEEkeywords}
Relative pose estimation, essential manifold, non-minimal solver, robust estimation, quadratically constrained quadratic program, semidefinite programming, convex optimization
\end{IEEEkeywords}}

\maketitle

\IEEEdisplaynontitleabstractindextext

%
\IEEEpeerreviewmaketitle

\IEEEraisesectionheading{\section{Introduction}}

\IEEEPARstart{F}{INDNG} relative pose between two images using 2D-2D point correspondences is a cornerstone in geometric vision. It makes a basic building block in many structure-from-motion (SfM), visual odometry, and simultaneous localization and mapping (SLAM) systems~\cite{hartley2003multiple}. 
Relative pose estimation is a difficult problem since it is by nature non-convex and known to be plagued by local minima and ambiguous solutions. Relative pose for uncalibrated and calibrated cameras are usually characterized by fundamental matrix and essential matrix, respectively~\cite{hartley2003multiple}. A matrix is a fundamental matrix if and only if it has two non-zero singular values. An essential matrix has an additional property that the two nonzero singular values are equal. Due to these strict constraints, essential matrix estimation is arguably thought to be more challenging than fundamental matrix estimation. 
This paper focuses on optimal essential matrix estimation. 

Due to the scale ambiguity of translation, relative pose for calibrated cameras has $5$ degrees-of-freedom (DoFs), including $3$ for rotation and $2$ for translation. Except for degenerate configurations, $5$ point correspondences are hence enough to determine the relative pose. 
Given five point correspondences, the five-point methods using essential matrix~\cite{nister2004efficient,stewenius2006recent} or rotation matrix parametrization~\cite{kneip2012finding} can be used to calculate the relative pose efficiently.
The aforementioned solvers are the so-called \emph{minimal solvers}. When point correspondences contain outliers, minimal solvers are usually integrated into a hypothesize-and-test framework, such as RANSAC~\cite{Fischler81}, to find the solution corresponding to the maximal consensus set. Hence this framework can provide high robustness.

Once the maximal consensus set has been found, the standard RANSAC optionally re-estimates a model by using all inliers to reduce the influence of noise~\cite{hartley2003multiple,raguram2013usac}.  
Thus a \emph{non-minimal solver} is needed in this procedure. This RANSAC framework is called the gold standard algorithm~\cite{hartley2003multiple}.
Figure~\ref{fig:teaser} illustrates this framework by taking line fitting as an example. 
In addition to the usage as post-processing, the non-minimal solvers can also be integrated more tightly into RANSAC variants. For example, LO-RANSAC~\cite{chum2003locally} attempts to enlarge the consensus set of an initial RANSAC estimate by generating hypotheses from \emph{larger-than-minimal subsets} of the consensus set. The rationale is that hypotheses fitted on a larger number of inliers typically lead to better estimates with higher support.

In this paper, the non-minimal case relative pose estimation is called \emph{$N$-point problem}, and its solver is called \emph{$N$-point method}.  
As it has been investigated in~\cite{chesi2009camera,kneip2013direct,briales2018certifiably}, $N$-point methods usually lead to more accurate results than five-point methods. 
Thus the $N$-point method is useful for scenarios that require accurate pose estimation, such as visual odometry and image-based visual servoing. 
The well-known direct linear transformation (DLT) technique~\cite{hartley2003multiple} with proper normalization~\cite{hartley1995defence} can be used to estimate the essential matrix using $8$ or more point correspondences. However, DLT ignores the inherent nonlinear constraints of the essential matrix. To deal with this problem, an essential matrix is recovered after an approximated essential matrix is obtained from the DLT solution~\cite{hartley2003multiple}. 
An eigenvalue-based formulation and its variant were proposed to solve $N$-point problem~\cite{kneip2013direct,briales2018certifiably}. 
However, all the aforementioned methods fail to guarantee global optimality and efficiency simultaneously.

\begin{figure}[tbp]
	\begin{center}
	{
		\includegraphics[width=0.99\linewidth]{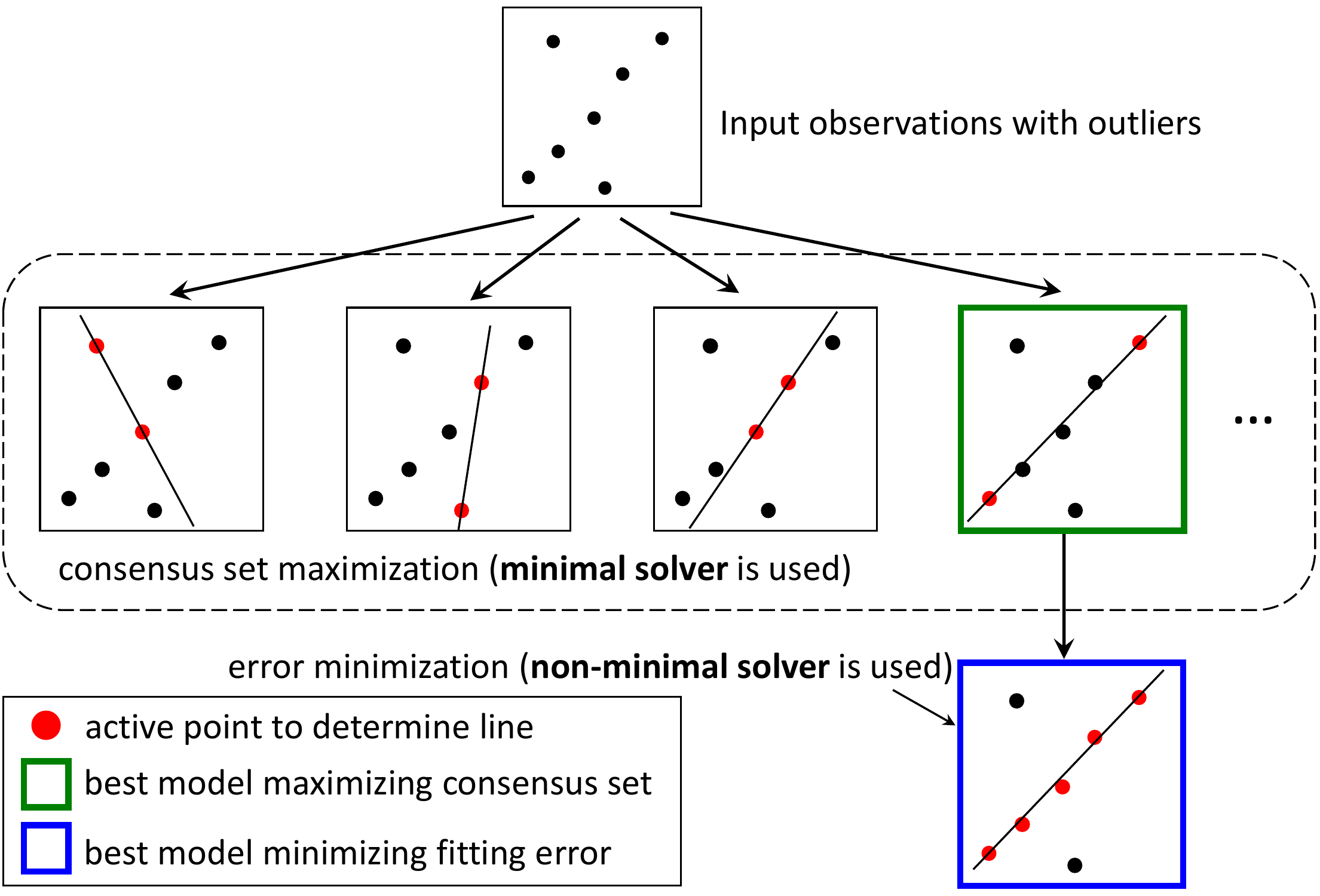}
	}
	\end{center}
	\vspace{-0.15in}
	\caption{The RANSAC framework contains the collaboration of a minimal solver and a non-minimal solver. This framework is known as the gold standard algorithm~\cite{hartley2003multiple}. This paper focuses on relative pose estimation. Here we take line fitting for an example due to its convenient visualization.}
	\label{fig:teaser}
\end{figure}

Since the $N$-point problem is challenging, the progress of its solvers is far behind the absolute pose estimation (perspective-$n$-point (P$n$P)) and point cloud registration. 
For example, the EP$n$P algorithm ~\cite{lepetit2009epnp} in P$n$P area is very efficient and has linear complexity with the number of observations. It has been successfully used in RANSAC framework given an arbitrary number of 2D-3D correspondences. 
Though relative pose estimation is more difficult than absolute pose estimation, it is desirable to find a practical solver to $N$-point problem, whose efficiency and global optimality are both satisfactory. This is one motivation of this paper.

Another motivation of this paper is developing an \emph{M-estimator} based method for relative pose estimation. 
In practice, it arises frequently that the data have been contaminated by large noise and outliers.
Existing robust estimation methods are mainly classified into two main categories, i.e., inlier set maximization~\cite{chin2017maximum} and M-estimator based method~\cite{huber1981robust}. 
Inlier set maximization can be achieved by randomized sampling methods~\cite{Fischler81,chum2003locally} or deterministic optimization methods~\cite{cai2018deterministic,le2019deterministic}. 
For M-estimator based methods, the associated optimization problems are non-convex and difficult to solve. In this paper, the proposed robust $N$-point method uses M-estimators. To solve the associated optimization problem effectively, the line process is adopted~\cite{black1996unification}. The line process uses a continuous iterative optimization strategy which takes a weighted version of non-minimal solver as a vital requirement. Due to the lack of an efficient and globally optimal $N$-point method, there did not exist a practical M-estimator based relative estimation method before.

Based on the aforementioned motivations, in this paper we propose a novel $N$-point method and integrate this method into M-estimators. The contributions of this paper are three-fold.
\begin{itemize}
	\item {\bf Efficient and globally optimal $N$-point method}. A simple parameterization is proposed to characterize the essential manifold. Based on this parameterization, a certifiably globally optimal $N$-point method is proposed, which is $2\sim 3$ orders of magnitude faster than state-of-the-art methods.
	 
	\item {\bf Robust $N$-point method}. We propose a robust essential matrix estimation method by integrating $N$-point method into M-estimators. 
	Considering that the robust components of the overall framework are not certifiably and provably optimal, we can only demonstrated empirical performance assurances. 
	
	\item {\bf Theoretical aspects}. We provide theoretical proofs of the semidefinite relaxation (SDR) in the proposed $N$-point method, including SDR tightness and local stability with small observation noise. 
\end{itemize}

The paper is organized as follows. Section~\ref{sec:related} introduces the related work. In Section~\ref{sec:formulation}, we propose a simple parameterization of essential manifold and provide novel formulations of $N$-point methods. Based on these formulations, Section~\ref{sec:sdp} derives a convex optimization approach by SDR. Section~\ref{sec:theory} proves tightness and local stability of SDR. 
A robust $N$-point method based on robust loss function is proposed in Section~\ref{sec:robust}. 
Section~\ref{sec:exp} presents the performance
of our method in comparison to other approaches, followed by a concluding discussion in Section~\ref{sec:conclusion}.

\section{Related Work}
\label{sec:related}

Estimating an essential matrix for a calibrated camera from point correspondences is an active research area in computer vision. 
Finding the optimal essential matrix by $L_\infty$ norm cost and branch-and-bound (BnB) was proposed in~\cite{hartley2009global}. It achieves global optima but is inefficient. 
There are several works for $N$-point fundamental/essential matrix estimation using local optimization~\cite{zhang1998determining,kanatani2010unified} or manifold optimization~\cite{ma2001optimization,helmke2007essential,tron2017space}. 
A method for minimizing an algebraic error was investigated in~\cite{chesi2009camera}. Its global optimality was obtained by a square matrix representation of homogeneous forms and relaxation. 
An eigenvalue-based formulation was proposed to estimate rotation matrix~\cite{kneip2013direct}, in which the problem is optimized by local gradient descent or BnB search. It was later improved by a certifiably globally optimal solution by relaxation and semidefinite programming (SDP)~\cite{briales2018certifiably}. 
However, none of the aforementioned methods can find the globally optimal solution efficiently. 
Though BnB search methods~\cite{hartley2009global,kneip2013direct} can obtain global optima in theory, they have the exponential time complexity in the worst case. In~\cite{chesi2009camera,briales2018certifiably}, a theoretical guarantee of the convexification procedures is not provided.
The most related paper to this work is~\cite{briales2018certifiably}, which converts the $N$-point problem of an eigenvalue-based formulation to a QCQP. However, its efficiency is not satisfactory and tightness of its SDR has not been proved.

There are also several works on fundamental matrix estimation for uncalibrated cameras. 
The eight-point method~\cite{hartley1995defence} uses a linear solution, then recovers a valid fundamental matrix by SVD decomposition. This method ignores the rank constraint in the fundamental matrix, thus the solution is not optimal.
In~\cite{hartley1998minimizing}, a method for minimizing an algebraic error was proposed which ensures the rank constraint. However, it does not guarantee global minima. 
In~\cite{zheng2011branch,bugarin2015rank}, the constraint for a fundamental matrix is imposed by setting its determinant as $0$, leading to a cubic polynomial constraint.
In~\cite{chesi2002estimating}, the fundamental matrix estimation problem is reduced to one or several constrained polynomial optimization problems. 
Unfortunately, the aforementioned methods deal with uncalibrated cameras only, where the underlying Euclidean constraints of an essential matrix are not exploited. Thus they cannot be applied to essential matrix estimation.

For both the essential matrix and fundamental matrix, optimal pose estimation can be formulated as a polynomial optimization problem~\cite{mevissen2010sdp}. 
A polynomial optimization problem can be converted to a QCQP. 
In multiple view geometry, SDR for polynomial optimization problems was first studied in~\cite{kahl2007global}. Later, a large number of methods using QCQP formulations were developed in computer vision and robotics. For example, SDR or Lagrangian duality of QCQPs has been used in point set registration~\cite{olsson2008solving}, triangulation~\cite{aholt2012qcqp}, relative pose estimation~\cite{briales2018certifiably}, rotation averaging~\cite{eriksson2020rotation}, pose synchronization~\cite{rosen2019se}, and the Wahba Problem~\cite{yang2019quaternion}. 
However, SDR does not guarantee {\it a priori} that it generates an optimal solution. 
If a QCQP satisfies certain conditions and data noise lies within a critical threshold, a recent study proved that the solution to the SDP optimization algorithm is guaranteed to be globally optimal~\cite{cifuentes2018thesis,cifuentes2017local}. 
A noise threshold that guarantees tightness of SDR is given for the rotation averaging problem~\cite{eriksson2020rotation}. 
For general QCQPs, the global optimality still remains an open problem.

Existing robust estimation methods in geometric vision are mainly based on inlier set maximization~\cite{chin2017maximum} or M-estimators~\cite{huber1981robust}. 
Inlier set maximization was proven to be NP-hard~\cite{chin2020robust}. 
BnB search can be used to find the globally optimal solution~\cite{enqvist2008robust,enqvist2009two,li2009consensus,yang2014optimal,fredriksson2016optimal}, but its efficiency is not satisfactory.
There are a variety of methods to approximately and efficiently solve the inlier set maximization problem. The most popular algorithms belong to a class of randomized sampling techniques, i.e., RANSAC~\cite{Fischler81} and its variants~\cite{chum2003locally,raguram2013usac}\cite{cai2018deterministic}\cite{le2019deterministic}. 
A hybrid method of BnB and mixed integer programming (MIP)~\cite{speciale2017consensus} was proposed to solve the inlier set maximization in relative pose estimation. 
Another alternative robust framework, which is based on M-estimators, has been successfully applied to many fields such as bundle adjustment~\cite{zach2014robust}, registration~\cite{zhou2016fast,yang2020graduated}, and data clustering~\cite{shah2017robust}. 
An important technique to optimize M-estimators is the line process~\cite{black1996unification}, which is also a building block of the robust version of the proposed method.
In the line process, progress is hindered by a lack of an efficient and globally optimal non-minimal solver.
The proposed $N$-point method in this paper can be integrated into the line process to make a robust $N$-point method. 

\section{Formulations of $N$-Point Method}
\label{sec:formulation}

Denote $(\p_i, \p'_i)$ as the $i$-th point correspondence of the same 3D world point from two distinct viewpoints. Point observations $\p_i$ and $\p'_i$ are represented as homogeneous coordinates in normalized image plane\footnote{Bold capital letters denote matrices (e.g., $\E$ and $\R$); bold lower-case letters denote column vectors (e.g., $\mathbf{e}, \mathbf{t}$); non-bold lower-case letters represent scalars (e.g., $\lambda$). 
By Matlab syntax, we use semicolon/comma in matrix concatenation to arrange entries vertically/horizontally. For example, $[[a], [b]] = [a, b]$ and $[[a]; [b]] = \begin{bmatrix} a \\ b \end{bmatrix}$.}. 
Each point in the normalized image plane can be translated into a unique unit bearing vector originating from the camera center. Let ($\f_i$, $\f'_i$) denote a correspondence of bearing vectors pointing at the same 3D world point from two distinct viewpoints, where $\f_i$ represents the observation from the first viewpoint, and $\f'_i$ the observation from the second viewpoint. 
The bearing vectors are determined by $\f_i = \frac{\p_i}{\|\p_i\|}$ and $\f'_i = \frac{\p'_i}{\| \p'_i \|}$. 

The relative pose is composed of rotation $\R$ and translation $\mathbf{t}$. 
Rotation $\R$ transforms vectors from the second into the first frame. 
Translation $\mathbf{t}$ is expressed in the first frame and denotes the position of the second frame with respect to the first one. 
The normalized translation $\mathbf{t} = [t_1, t_2, t_3]^\top$ will be identified with points in the $2$-sphere $\mathcal{S}^2$, i.e., 
\begin{align}
\mathcal{S}^2 \triangleq \{ \mathbf{t} \in \mathbb{R}^3| \mathbf{t}^\top \mathbf{t} = 1 \}. \nonumber
\end{align}
The 3D rotation will be featured as $3\times 3$ orthogonal matrix with positive determinant belonging to the special orthogonal group $\text{SO}(3)$, i.e.,
\begin{align}
\text{SO}(3) \triangleq \{\R \in \mathbb{R}^{3\times 3} | \R^\top \R = \mathbf{I}, \text{det}(\R) = 1\}, \nonumber
\end{align}
where $\mathbf{I}$ is a $3\times 3$ identity matrix.

\subsection{Parametrization for Essential Manifold}
The essential matrix $\E$ is defined as~\cite{hartley2003multiple}
\begin{align}
\E = [\mathbf{t}]_{\times} \R, \label{equ:essential}
\end{align}
where $[\cdot]_\times$ defines the corresponding skew-symmetric matrix for a $3$-dimensional vector, i.e.,
\begin{align}
[\mathbf{t}]_{\times} = 
\begin{bmatrix}
t_1 \\
t_2 \\
t_3
\end{bmatrix}_{\times} = 
\begin{bmatrix}
0 & -t_3 & t_2 \\
t_3 & 0 & -t_1 \\
-t_2 & t_1 & 0
\end{bmatrix}.
\end{align}
Denote the essential matrix $\E$ as
\begin{align}
\E =
\begin{bmatrix}
\e_1^\top \\
\e_2^\top \\
\e_3^\top \\
\end{bmatrix}
=
\begin{bmatrix}
e_{11} & e_{12} & e_{13} \\
e_{21} & e_{22} & e_{23} \\
e_{31} & e_{32} & e_{33} \\
\end{bmatrix}. \nonumber
\end{align}
where $\e_i^\top$ is the $i$-th row of $\E$. Denote its corresponding vector as
\begin{align}
\e &\triangleq \vect(\E) = [\e_1; \e_2; \e_3] \nonumber \\
&= [e_{11}; e_{12}; e_{13}; e_{21}; e_{22}; e_{23}; e_{31}; e_{32}; e_{33}].
\end{align}
where $\vect(\cdot)$ means stacking all the entries of a matrix by row-first order. There is not any essential difference between different orders of entry stacking in this paper. We adopt row-first order to make the notations in Section~\ref{sec:theory} more convenient.

In this paper, an essential matrix set is defined as
\begin{align}
\mathcal{M}_\E \triangleq \{\E \ | \ \E = [\mathbf{t}]_{\times} \R, \exists \ \R \in \text{SO}(3), \mathbf{t} \in \mathcal{S}^2 \}.
\end{align}
This essential matrix set is called \emph{normalized essential manifold}~\cite{ma2001optimization,helmke2007essential,tron2017space}. 
Theorem~\ref{theorem:equivalent1} provides equivalent conditions to define $\mathcal{M}_\E$, which will greatly simplify the optimization in the proposed methods.

\begin{theorem}
	\label{theorem:equivalent1}
	A real $3\times 3$ matrix, $\E$, is an element in $\mathcal{M}_\E$ if and only if there exists a vector $\mathbf{t} \in \mathbb{R}^3$ satisfying the following two conditions:
	\begin{align}
	\emph{\text{(i)}} \ \ \E \E^\top = [\mathbf{t}]_{\times} [\mathbf{t}]_{\times}^\top \quad \emph{\text{and}} \quad \emph{\text{(ii)}} \ \ \mathbf{t}^\top \mathbf{t} = 1.
	\end{align}
\end{theorem}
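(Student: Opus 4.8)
The plan is to prove the two implications separately, with the forward direction a direct computation and the reverse direction a short constructive argument. For the forward direction, I assume $\E \in \mathcal{M}_\E$, so $\E = [\mathbf{t}]_\times \R$ for some $\R \in \text{SO}(3)$ and $\mathbf{t} \in \mathcal{S}^2$. Condition (ii) is then immediate from $\mathbf{t} \in \mathcal{S}^2$, and condition (i) follows from $\E\E^\top = [\mathbf{t}]_\times \R \R^\top [\mathbf{t}]_\times^\top = [\mathbf{t}]_\times [\mathbf{t}]_\times^\top$, using $\R\R^\top = \mathbf{I}$; the same $\mathbf{t}$ serves in both conditions.

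For the reverse direction, I assume (i) and (ii) hold for some $\mathbf{t} \in \mathbb{R}^3$ and must exhibit a rotation $\R \in \text{SO}(3)$ with $\E = [\mathbf{t}]_\times \R$. The key ingredient is a factorization lemma: whenever two real matrices $\A, \B$ satisfy $\A\A^\top = \B\B^\top$, there exists an orthogonal matrix $\Q$ with $\A = \B\Q$. I would prove this by observing that the Gram matrices of the rows of $\A$ and of $\B$ coincide, since they are precisely the equal matrices $\A\A^\top$ and $\B\B^\top$; hence the correspondence sending each row of $\B$ to the corresponding row of $\A$ extends to a linear isometry of $\mathbb{R}^3$, i.e. an orthogonal $\Q$. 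Applying this with $\A = \E$ and $\B = [\mathbf{t}]_\times$, condition (i) supplies an orthogonal $\Q$ such that $\E = [\mathbf{t}]_\times \Q$.

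It then remains to upgrade $\Q$ from orthogonal to a genuine rotation. Since $\det(\Q) = \pm 1$, the only obstacle is the case $\det(\Q) = -1$, and here I exploit the rank deficiency of $[\mathbf{t}]_\times$. By condition (ii) we have $\mathbf{t} \neq \mathbf{0}$, so $[\mathbf{t}]_\times$ has rank two with kernel $\text{span}(\mathbf{t})$, and in particular $[\mathbf{t}]_\times \mathbf{t} = \mathbf{0}$. Setting $\mathbf{s} = \Q^\top \mathbf{t}$, which is a unit vector, and replacing $\Q$ by $\Q' = \Q(\mathbf{I} - 2\mathbf{s}\mathbf{s}^\top)$, the Householder factor flips the determinant so that $\det(\Q') = +1$, while $[\mathbf{t}]_\times \Q' = [\mathbf{t}]_\times \Q - 2[\mathbf{t}]_\times (\Q\mathbf{s})\mathbf{s}^\top = [\mathbf{t}]_\times \Q$, because $\Q\mathbf{s} = \mathbf{t}$ lies in the kernel of $[\mathbf{t}]_\times$. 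Hence $\R := \Q' \in \text{SO}(3)$ still satisfies $\E = [\mathbf{t}]_\times \R$, and together with $\mathbf{t} \in \mathcal{S}^2$ from condition (ii) this certifies $\E \in \mathcal{M}_\E$.

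I expect the reverse direction to be the main obstacle, and within it two subtleties require care: first, establishing the orthogonal-factor lemma cleanly despite $[\mathbf{t}]_\times$ being rank-deficient, so that $\Q$ is not uniquely determined; and second, guaranteeing the factor can be taken in $\text{SO}(3)$ rather than merely $\text{O}(3)$, which is exactly what the kernel-absorption trick via the Householder reflection resolves. The forward direction and the role of $\mathbf{t}^\top\mathbf{t} = 1$ are routine by comparison.
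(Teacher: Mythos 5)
Your proof is correct, and the forward direction coincides with the paper's. For the substantive direction (conditions (i)--(ii) imply $\E \in \mathcal{M}_\E$), however, you take a genuinely different route. The paper computes the characteristic polynomial of $[\mathbf{t}]_{\times}[\mathbf{t}]_{\times}^\top$ to conclude from (i) and (ii) that the singular values of $\E$ are $1,1,0$, and then invokes the classical characterization of essential matrices by their singular values (Theorem~1 of Faugeras--Maybank) to conclude membership in $\mathcal{M}_\E$. You instead argue constructively: the Gram-matrix rigidity lemma (equal Gram matrices of the rows imply $\E = [\mathbf{t}]_{\times}\Q$ for some orthogonal $\Q$) produces the factorization directly, and the Householder reflection about $\mathbf{s} = \Q^\top\mathbf{t}$ — absorbed harmlessly because $\Q\mathbf{s} = \mathbf{t}$ lies in $\ker [\mathbf{t}]_{\times}$ — upgrades $\Q$ to $\text{SO}(3)$. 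Your version is self-contained and explicitly exhibits the pair $(\R,\mathbf{t})$ that the definition of $\mathcal{M}_\E$ demands, rather than passing through the singular-value characterization and back; the paper's version is shorter at the cost of an external citation, and also quietly needs the observation that the resulting translation has unit norm (which the singular values $1,1,0$ do supply, e.g.\ via $\trace(\E\E^\top)=2$), whereas you sidestep this by reusing the given $\mathbf{t}$ throughout. Both arguments are sound; yours would serve as a citation-free replacement.
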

\begin{proof}
	For \emph{if} direction, first it can be verified that $\text{det}\left( [\mathbf{t}]_{\times} [\mathbf{t}]_{\times}^\top - \sigma \mathbf{I} \right) = -\sigma[\sigma-(t_1^2+t_2^2+t_3^2)]^2 = -\sigma(\sigma-1)^2$. By combining this result with condition (i), we can see that $\E\E^\top$ has an eigenvalue $1$ with multiplicity $2$ and an eigenvalue $0$. According to the definition of singular value, the nonzero singular values of $\E$ are the square roots of the nonzero eigenvalues of $\E\E^\top$. Thus the two nonzero singular values of $\E$ are equal to $1$. 
	According to Theorem~1 in~\cite{faugeras1990motion}, $\E$ is an essential matrix. By combining condition (ii), $\E$ is an element in $\mathcal{M}_\E$.
	
	For \emph{only if} direction, $\E$ is supposed to be an essential matrix $\mathcal{M}_\E$. According to the definition of $\mathcal{M}_\E$, there exists a vector $\mathbf{t}$ satisfying condition (ii). In addition, there exists a rotation matrix $\R$ such that $\E = [\mathbf{t}]_{\times} \R$. It can be verified that $\E \E^\top =([\mathbf{t}]_{\times} \R) ([\mathbf{t}]_{\times} \R)^\top = [\mathbf{t}]_{\times} [\mathbf{t}]_{\times}^\top$, thus condition (i) is also satisfied. 
\end{proof}

It is worth mentioning that a necessary condition for general essential matrix, which is similar to the \emph{only if} direction in Theorem~\ref{theorem:equivalent1}, was presented in~\cite[Proposition~2]{faugeras1990motion} and~\cite[Lemma~7.2]{faugeras1993three}. 
In Theorem~\ref{theorem:equivalent1}, we further prove that this condition is also sufficient and propose a novel parameterization for the normalized essential manifold. 

\subsection{Optimizing Essential Matrix by Minimizing an Algebraic Error}
For noise-free cases, the epipolar constraint~\cite{hartley2003multiple} implies that
\begin{align}
\f_i^\top \E \f'_i = 0.
\label{equ:epipolar}
\end{align}
Under the presence of noise, this constraint does not strictly hold. We pursue the optimal pose by minimizing an algebraic error
\begin{align}
\label{equ:op_new_obj2}
\min_{\E \in \mathcal{M}_\E} & \sum_{i=1}^N (\f_i^\top \E \f'_i)^2.
\end{align}
The algebraic error in objective has been widely used in previous literature~\cite{migita2007evaluation,hartley1998minimizing,chesi2002estimating,chesi2009camera}.

The objective in problem~\eqref{equ:op_new_obj2} can be reformulated as a standard quadratic form
\begin{align}
\label{equ:reform}
\sum_{i=1}^N (\f_i^\top \E \f'_i)^2 = \e^\top \C \e,
\end{align}
where
\begin{align}
\C = \sum_{i=1}^N \left( \f_i \otimes \f'_i \right) 
\left( \f_i \otimes \f'_i 
\right)^\top,
\label{equ:kronecker}
\end{align}
and ``$\otimes$'' means Kronecker product. Note that $\C$ is a Gram matrix, so it is positive semidefinite and symmetric.

\subsection{QCQP Formulations}
By explicitly writing the constraints for the essential manifold $\mathcal{M}_\E$, we reformulate problem~\eqref{equ:op_new_obj2} as
\begin{align}
\label{equ:op_new_obj1}
\min_{\E, \R, \mathbf{t}} & \ \ \e^\top \C \e \\
\text{s.t.}  & \ \ \E = [\mathbf{t}]_{\times} \R, \ \ \R \in \text{SO}(3), \ \ \mathbf{t} \in \mathcal{S}^2, \nonumber
\end{align}
This problem is a QCQP: The objective is positive semidefinite quadratic polynomials; the constraint on the translation vector, $\mathbf{t}^\top \mathbf{t} = 1$, is also quadratic; a rotation matrix $\R$ can be fully defined by $20$ quadratic constraints~\cite{kneip2012finding,briales2018certifiably}; and the relationship between $\E$, $\R$ and $\mathbf{t}$, $\E = [\mathbf{t}]_{\times} \R$, is also quadratic. This formulation has $21$ variables and $30$ constraints. 

According to Theorem~\ref{theorem:equivalent1}, an equivalent QCQP form of minimizing the algebraic error is
\begin{align}
\label{equ:op_new_obj3}
\min_{\E, \mathbf{t}} & \ \ \e^\top \C \e \\
\text{s.t.} & \ \ \E\E^\top = [\mathbf{t}]_{\times} [\mathbf{t}]_{\times}^\top, \quad \mathbf{t}^\top \mathbf{t} = 1. \nonumber
\end{align}
There are $12$ variables and $7$ constraints in this problem. The constraints can be written explicitly as below
\begin{subequations}
\begin{empheq}[left=\empheqlbrace]{align}
	& h_1 =\e_1^\top \e_1 - (t_2^2 + t_3^2) = 0, \label{equ:sub1} \\
	& h_2 =\e_2^\top \e_2 - (t_1^2 + t_3^2) = 0, \label{equ:sub2} \\
	& h_3 =\e_3^\top \e_3 - (t_1^2 + t_2^2) = 0, \label{equ:sub3} \\
	& h_4 =\e_1^\top \e_2 + t_1 t_2 = 0, \label{equ:sub4} \\
	& h_5 =\e_1^\top \e_3 + t_1 t_3 = 0, \label{equ:sub5} \\
	& h_6 =\e_2^\top \e_3 + t_2 t_3 = 0, \label{equ:sub6} \\
	& h_7 = \mathbf{t}^\top \mathbf{t} - 1 = 0. \label{equ:sub7}
\end{empheq}
\end{subequations}
Problems~\eqref{equ:op_new_obj1} and~\eqref{equ:op_new_obj3} are equivalent since their objectives are the same and their feasible regions are equivalent. Both of the two problems are nonconvex.
Appendix~\ref{sec:another_form} provides another equivalent optimization problem.  
In the following, we will only consider problem~\eqref{equ:op_new_obj3} due to its fewer variables and constraints. Moreover, it is homogeneous without the need of homogenization, which makes it simpler than the alternative formulations.

{\bf Remark}: 
Both problems~\eqref{equ:op_new_obj1} and~\eqref{equ:op_new_obj3} are equivalent to an eigenvalue-based formulation~\cite{kneip2013direct,briales2018certifiably}. 
A proof of the equivalence is available in~\cite{briales2018certifiably}, see its supplementary material. 
Since all the mentioned formulations essentially utilize the normalized essential manifold as feasible regions and have the equivalent objectives, all these formulations are equivalent. 
Our formulations~\eqref{equ:op_new_obj1} and~\eqref{equ:op_new_obj3} have the following two advantages: 

(1) Our formulations have fewer variables and constraints. In contrast, the eigenvalue based formulation in~\cite{briales2018certifiably} involves $40$ variables and $536$ constraints.  As shown in the following sections, the simplicity of our formulations will result in much more efficient solvers and enable the proof of tightness and local stability. 

(2) Our formulations are easy to associate priors for each point correspondence by simply introducing weights in the objective. 
For example, we may introduce a weight for each sample by slightly changing the objective tp $\sum_{i=1}^N w_i (\f_i^\top \E \f'_i)^2$,
where $w_i \ge 0$ is the weight for $i$-th observation. 
For general cases in which $w_i \ge 0$, we can keep current formulation by changing the construction of $\C$ to 
\begin{align}
\C = \sum_{i=1}^N w_i \left( \f_i \otimes \f'_i \right) 
\left( \f_i \otimes \f'_i 
\right)^\top.
\label{equ:kronecker2}
\end{align}

\section{Optimization of $N$-Point Method}
\label{sec:sdp}

QCQP is a long-standing problem in optimization literature with many applications. Solving its general case is an NP-hard problem. Global optimization methods for QCQP are typically based on convex relaxations of the problem. There are two main relaxations for QCQP: SDR and the reformulation-linearization technique.
In this paper, we use SDR since it usually has better performance~\cite{anstreicher2009semidefinite} and it is convenient for tightness analysis. 

Let us consider a QCQP in a general form as
\begin{align}
\label{equ:op_obj4}
\min_{\x \in \mathbb{R}^n} & \ \ \x^\top \C_0 \x \\
\text{s.t.} & \ \ \x^\top \A_i \x = b_i, \ \ i = 1, \cdots, m. \nonumber
\end{align}
where $\C_0$, $\A_1, \cdots, \A_m \in \mathcal{S}^n$ and $\mathcal{S}^n$ denotes the set of all real symmetric $n\times n$ matrices. 
In our problem, 
\begin{align}
\x \triangleq [\e; \mathbf{t}]
\end{align}
is a vector stacking all entries in essential matrix $\E$ and translation vector $\mathbf{t}$;  $n = 12$; $m = 7$; $\C_0 = \begin{bmatrix}
\C & \mathbf{0}_{9\times 3} \\
\mathbf{0}_{3\times 9} & \mathbf{0}_{3\times 3}
\end{bmatrix}$; $\A_1,\cdots, \A_7$ correspond to the canonical form $\x^\top \A_i \x$ of Eqs.~\eqref{equ:sub1}$\sim$\eqref{equ:sub7}, respectively. 

A crucial first step in deriving an SDR of problem~\eqref{equ:op_obj4} is to observe that
\begin{align}
\begin{cases}
\x^\top \C_0 \x = \trace(\x^\top \C_0 \x) = \trace(\C_0 \x \x^\top),  \\
\x^\top \A_i \x = \trace(\x^\top \A_i \x) = \trace(\A_i \x \x^\top).
\end{cases} \label{equ:trace2}
\end{align}
It can be seen that both the objective and constraints in problem~\eqref{equ:op_obj4} are linear in matrix $\x\x^\top$. Thus, by introducing a new variable $\X = \x\x^\top$ and noting that $\X = \x\x^\top$ is equivalent to $\X$ being a rank one symmetric positive semidefinite (PSD) matrix, we obtain the following equivalent formulation of problem~\eqref{equ:op_obj4}
\begin{align}
\label{equ:op_obj5}
\min_{\X \in \mathcal{S}^n} & \ \ \trace(\C_0 \X) \\
\text{s.t.} & \ \ \trace(\A_i \X) = b_i, \ \ i = 1, \cdots, m, \nonumber \\
& \ \ \X \succeq \mathbf{0}, \quad \rank(\X) = 1. \nonumber
\end{align}
Here, $\X \succeq \mathbf{0}$ means that $\X$ is PSD. Solving rank constrained semidefinite programs (SDPs) is NP-hard~\cite{vandenberghe1996semidefinite}. 
SDR drops the rank constraint to obtain the following relaxed version of problem~\eqref{equ:op_obj5}
\begin{align}
\label{equ:op_obj6}
\min_{\X \in \mathcal{S}^n} & \ \ \trace(\C_0 \X) \\
\text{s.t.} & \ \ \trace(\A_i \X) = b_i, \ \ i = 1, \cdots, m, \nonumber \\
& \ \ \X \succeq \mathbf{0}. \nonumber
\end{align}
Problem~\eqref{equ:op_obj6} turns out to be an instance of SDP~\cite{vandenberghe1996semidefinite}, 
which belongs to convex optimization and can be readily solved using primal-dual interior point methods~\cite{ye1997interior}. 
Its dual problem is 
\begin{align}
\label{equ:op_dual}
\max_{\boldsymbol{\lambda}} & \ \ \mathbf{b}^\top \boldsymbol{\lambda} \\
\text{s.t.} & \ \ \Q(\boldsymbol{\lambda}) = \C_0 - \sum_{i=1}^m \lambda_i \A_i \succeq 0, \nonumber
\end{align}
where $\mathbf{b} = [b_1, \cdots, b_m]^\top$, $ \boldsymbol{\lambda} = [\lambda_1, \cdots, \lambda_m]^\top \in \mathbb{R}^m$. 
Problem~\eqref{equ:op_dual} is called the \emph{Lagrangian dual problem} of problem~\eqref{equ:op_obj4}, and $\Q(\boldsymbol{\lambda})$ is the Hessian of the Lagrangian. 
In our problem, $b_i = 0$ for $i = 1, \cdots, 6$; $b_7 = 1$; and $\mathbf{b}^\top \boldsymbol{\lambda} = \lambda_7$.

In summary, the relations between different formulations are demonstrated by Fig.~\ref{fig:relation}.

\begin{figure}[htbp]
	\begin{center}
		\includegraphics[width=0.99\linewidth]{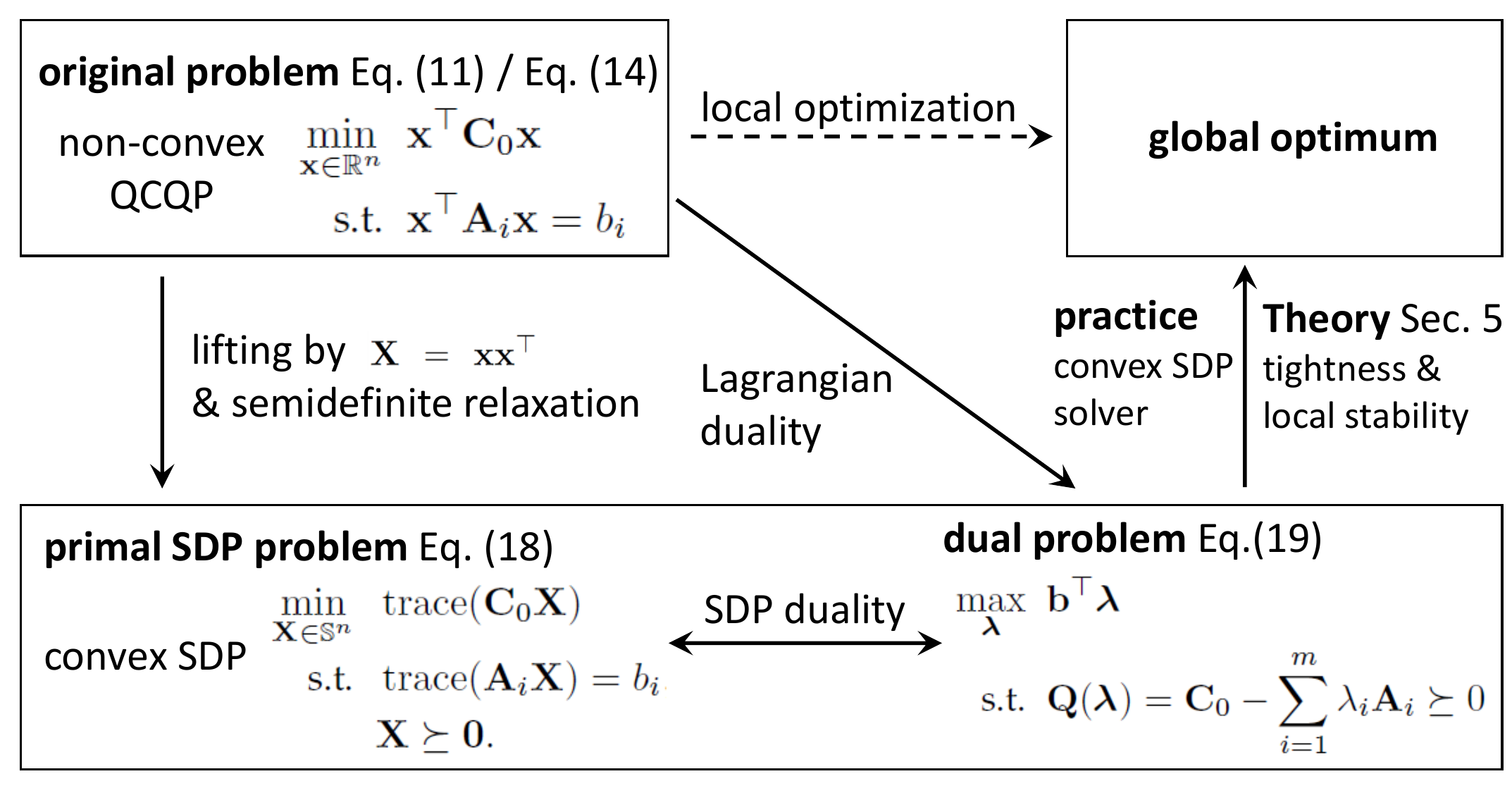}
	\end{center}
	\vspace{-0.15in}
	\caption{An overview of relations between different formulations.}
	\label{fig:relation}
\end{figure}

\subsection{Essential Matrix and Relative Pose Recovery}

Once the optimal $\X^\star$ of the SDP primal problem~\eqref{equ:op_obj6} has been obtained by an SDP solver, we need to recover the optimal essential matrix $\E^\star$. 
Denote $\X_e$ as the top-left $9\times 9$ submatrix of $\X$; and denote $\X_t$ as the bottom-right $3\times 3$ submatrix of $\X$, i.e., $\X_e \triangleq \X_{[1:9, 1:9]}$ and $\X_t \triangleq \X_{[10:12, 10:12]}$.
Empirically, we found that the largest singular value of $\X_e^{\star}$ is near $2$ and others are close to zero. 
It is common to set the rank of a matrix as the number of singular values larger than a threshold. In our method, the threshold depends on the accuracy of SDP solver (usually $10^{-7} \sim 10^{-5}$), leading to $\rank(\X_e^{\star}) = 1$. 
Denote the eigenvector that corresponding to the nonzero eigenvalue of $\X_e^{\star}$ as $\mathbf{e}^\star$, then the optimal essential matrix is recovered by
\begin{align}
\E^\star = \mat(\mathbf{e}^\star, [3, 3]),
\end{align}
where $\mat(\mathbf{e}, [r, c])$ means reshape the vector $\mathbf{e}$ to an $r\times c$ matrix by row-first order.

After the essential matrix has been obtained, we can recover the rotation and translation by the standard method in literature~\cite{hartley2003multiple}. 
A recent work proved that the rotation matrix can be accurately recovered from the essential matrix for pure rotation scenarios~\cite{cai2018equivalent}. Moreover, a statistic, the mean of $\left\{\frac{\x_i \times \R^\star \x'_i}{\|\x_i\| \|\x'_i\|}\right\}_{i=1}^N$,  was proposed to identify the pure rotation scenarios. 

In Section~\ref{sec:conditions}, the theoretical guarantee of such a pose recovery method will be provided. 
In Section~\ref{sec:theory}, the proof of tightness and local stability that guarantees the global optimality will be provided.   
The outline of $N$-point method is shown in Algorithm~\ref{alg:alg1}.

\begin{algorithm}[tbp]
	\caption{Weighted $N$-Point Method} \label{alg:alg1}
	\KwIn{observations $\{ ( \f_i, \f'_i ) \}^{N}_{i=1}$, (optional) weight $\{w_i\}_{i=1}^N$}
	\KwOut{Essential matrix $\E^\star$, rotation $\R^\star$, translation $\mathbf{t}^\star$, identification of pure rotation.}
	Construct $\C$ by Eq.~\eqref{equ:kronecker} for unweighted version or Eq.~\eqref{equ:kronecker2} for weighted version; $\C_0 = \begin{bmatrix}
	\C & \mathbf{0}_{9\times 3} \\
	\mathbf{0}_{3\times 9} & \mathbf{0}_{3\times 3}
	\end{bmatrix}$\;
	Construct $\{\A_i\}_{i=1}^7$ in problem~\eqref{equ:op_obj4} which is independent of input\;
	Obtain $\X^\star$ by solving SDP problem~\eqref{equ:op_obj6} or its dual problem~\eqref{equ:op_dual}\;
	Assert that $\rank(\X_e^\star) = \rank(\X_t^\star) = 1$\;
	$\E^\star = \mat(\mathbf{e}^\star, [3, 3])$, where $\mathbf{e}^\star$ is the eigenvector corresponding to the largest eigenvalue of $\X_e^{\star}$\;
	Decompose $\E^\star$ to obtain $\R^\star$ and $\mathbf{t}^\star$\;
	Identify whether pure rotation occurs.
\end{algorithm}

\subsection{Necessary and Sufficient Conditions for Global Optimality}
\label{sec:conditions}
The following Theorem~\ref{theorem:optimality} provides a theoretical guarantee for the proposed pose recovery method. 

\begin{theorem}
	\label{theorem:optimality}
	For QCQP~\eqref{equ:op_new_obj3},
	its SDR is tight if and only if: the optimal solution $\X^\star$ to its primal SDP problem~\eqref{equ:op_obj6} satisfies $\rank(\X_e^\star) = \rank(\X_t^\star) = 1$. 
\end{theorem}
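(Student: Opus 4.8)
The plan is to prove the two implications separately, both resting on a single structural observation: because none of the constraint polynomials $h_1,\dots,h_7$ in Eqs.~\eqref{equ:sub1}--\eqref{equ:sub7} contains a mixed monomial $e\cdot t$, every data matrix $\A_i$ is block diagonal with respect to the partition $\x=[\e;\t]$, and the cost matrix $\C_0$ is supported only on the $\e$ block. Writing $\X=\begin{bmatrix}\X_e & \X_{et}\\ \X_{et}^\top & \X_t\end{bmatrix}$ and $\A_i=\begin{bmatrix}\A_i^e & \mathbf{0}\\ \mathbf{0} & \A_i^t\end{bmatrix}$, this means $\trace(\C_0\X)=\trace(\C\X_e)$ and $\trace(\A_i\X)=\trace(\A_i^e\X_e)+\trace(\A_i^t\X_t)$, so both the objective and all constraints of the SDP~\eqref{equ:op_obj6} depend only on the diagonal blocks $\X_e,\X_t$ and never on the coupling block $\X_{et}$. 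I will also use the two standard facts that the SDP value is a lower bound for the QCQP value (the relaxation only drops the rank constraint) and that the SDP optimum is attained; the latter holds because summing $h_1,h_2,h_3$ and using $h_7$ forces $\trace(\X_e)=2$ and $\trace(\X_t)=1$, so the feasible set has constant trace $3$ and is therefore compact.

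For the \emph{if} direction, suppose $\rank(\X_e^\star)=\rank(\X_t^\star)=1$ and factor $\X_e^\star=\e^\star(\e^\star)^\top$, $\X_t^\star=\t^\star(\t^\star)^\top$; both factors are nonzero since the diagonal traces are $2$ and $1$. Substituting these rank-one blocks into the reduced constraint $\trace(\A_i^e\X_e^\star)+\trace(\A_i^t\X_t^\star)=b_i$ collapses it exactly to $h_i(\e^\star,\t^\star)=0$, so $(\e^\star,\t^\star)$ is feasible for the QCQP~\eqref{equ:op_new_obj3}; by Theorem~\ref{theorem:equivalent1} the recovered matrix $\mat(\e^\star,[3,3])$ therefore lies on $\mathcal{M}_\E$. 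The cost likewise collapses to $\trace(\C_0\X^\star)=(\e^\star)^\top\C\e^\star$, the QCQP objective at this feasible point. A feasible QCQP point whose value equals the SDP optimum forces the lower bound to be tight, which proves this direction without ever needing $\X^\star$ itself to be rank one.

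For the \emph{only if} direction I would run the argument in reverse: take a global minimizer $(\E^\star,\t^\star)$ of the QCQP, set $\x^\star=[\vect(\E^\star);\t^\star]$, and lift it to the rank-one PSD matrix $\X^\star=\x^\star(\x^\star)^\top$. Since $(\E^\star,\t^\star)$ satisfies all $h_i=0$, this $\X^\star$ is SDP-feasible, and its objective equals the QCQP optimum, which under the tightness hypothesis equals the SDP optimum; hence $\X^\star$ is SDP-optimal. Its diagonal blocks are $\X_e^\star=\vect(\E^\star)\vect(\E^\star)^\top$ and $\X_t^\star=\t^\star(\t^\star)^\top$, each a nonzero outer product and thus of rank exactly one.

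The hard part will be the quantifier in the \emph{only if} direction rather than any computation. The lift above exhibits \emph{an} optimal $\X^\star$ with rank-one blocks, but the SDP may admit other optima: whenever the QCQP has two non-proportional minimizers, convex combinations of their lifts are again SDP-optimal and have rank-two diagonal blocks. Thus the statement must be read as asserting that the SDP \emph{attains} an optimum with rank-one blocks, and it pins down a unique recovered pose precisely in the generic (uniquely minimizing) regime analyzed later for local stability. A secondary point worth stating explicitly is that rank-one diagonal blocks together with $\X\succeq\mathbf{0}$ do \emph{not} force $\X$ to be rank one: a Schur-complement computation shows the coupling block may be any $\beta\,\e^\star(\t^\star)^\top$ with $|\beta|\le 1$. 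This is exactly why the correct and tightest criterion is phrased on the blocks $\X_e,\X_t$, and the block-diagonal structure noted at the outset guarantees that this residual freedom in $\X_{et}$ can never disturb feasibility or cost.
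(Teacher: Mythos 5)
Your proof is correct, and the \emph{if} direction is essentially the paper's argument made more careful: the paper likewise exploits that the $\A_i$ have no $\e$--$\t$ coupling block, factors the rank-one blocks $\X_e^\star=\e^\star(\e^\star)^\top$, $\X_t^\star=\t^\star(\t^\star)^\top$, and concludes feasibility of $(\e^\star,\t^\star)$ for \eqref{equ:op_new_obj3}; your explicit use of the lower-bound property and your attainment argument via $\trace(\X_e)=2$, $\trace(\X_t)=1$ (hence a compact feasible set) fill in steps the paper leaves implicit. The \emph{only if} direction is where you genuinely diverge. The paper takes ``tight'' to mean $\rank(\X^\star)=1$ and then simply observes that the diagonal blocks of a rank-one PSD matrix have rank at most one, and cannot be zero because the trace constraints force $\trace(\X_e^\star)=2$ and $\trace(\X_t^\star)=1$; this is a two-line submatrix-rank argument. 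You instead adopt the zero-gap definition of tightness and \emph{construct} an optimal rank-one lift of a QCQP minimizer. Each route buys something: the paper's is shorter and matches the solver-facing reading of the theorem (check the returned $\X^\star$), but it silently switches between two non-equivalent notions of tightness across the two directions; yours is internally consistent under the value-equality definition, at the price of only exhibiting \emph{some} optimal $\X^\star$ with rank-one blocks rather than controlling the one a solver returns. Your two closing observations --- that non-proportional QCQP minimizers yield SDP optima with rank-two blocks, and that rank-one blocks plus $\X\succeq\mathbf{0}$ permit $\X_{et}=\beta\,\e^\star(\t^\star)^\top$ with $|\beta|\le 1$ so $\X^\star$ need not be rank one --- are both accurate and are exactly the subtleties behind the $\Diamond$ blocks in Eq.~\eqref{equ:xstar}; the paper acknowledges the second only empirically via the chordal-sparsity remark. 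Neither of these invalidates the theorem as used in Algorithm~\ref{alg:alg1}, but your version states the quantifiers honestly where the paper does not.
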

\begin{proof}
	First, we prove the \emph{if} part. Note that $\X_e^\star$ and $\X_t^\star$ are real symmetric matrices because they are in the feasible region of the primal SDP. 
	In addition, it is given that $\rank(\X_e^\star) = \rank(\X_t^\star) = 1$, thus there exist two vectors $\e^\star$ and $\mathbf{t}^\star$ satisfying 
	$\e^\star ({\e^\star})^\top = \X_e^\star$ and $\mathbf{t}^\star (\mathbf{t}^\star)^\top = \X_t^\star$.
	Since the constraints in problem~\eqref{equ:op_new_obj3}  do not include any cross term between $\E$ and $\mathbf{t}$, the intersection part of $\E$ and $\mathbf{t}$ in matrix $\A_i$ is zero in SDP problem. 
	By substituting $\e^\star$ and $\mathbf{t}^\star$ into Eq.~\eqref{equ:trace2}, it can be verified that $\e^\star$ and $\mathbf{t}^\star$ satisfy the constraints in primal problem~\eqref{equ:op_new_obj3}. 
	Now we can see that $\X^\star$  and its uniquely determined derivatives ($\e^\star$ and $\mathbf{t}^\star$) are feasible solutions for the semidefinite relaxation problem and the primal problem, respectively. Thus the relaxation is tight.
	
	Then we prove the \emph{only if} part. Since the semidefinite relaxation is tight, we have $\rank(\X^\star) = 1$. Then $\rank(\X_e^\star) \le 1$ and $\rank(\X_t^\star) \le 1$. Since $\X_e^\star$ and $\X_t^\star$ cannot be zero matrices (otherwise $\X^\star$ is not in the feasible region), the equalities should hold, i.e., $\rank(\X_e^\star)  = \rank(\X_t^\star) = 1$.
\end{proof}

Theorem~\ref{theorem:optimality} provides a necessary and sufficient condition to recover the globally optimal solution for the primal problem. 
Empirically, the optimal $\X^\star$ by the SDP problem always satisfies this condition. Specifically, the optimal $\X^\star$ has the following structure
\begin{align}
\X^\star = 
\begin{bmatrix}
\X_e^\star & \Diamond \\
\Diamond & \X_t^\star
\end{bmatrix}
= 
\begin{bmatrix}
\mathbf{e}^\star (\mathbf{e}^\star)^\top & \Diamond \\
\Diamond & \mathbf{t}^\star (\mathbf{t}^\star)^\top
\end{bmatrix}.
\label{equ:xstar}
\end{align}
The $\Diamond$ parts could be arbitrary matrices making $\X^\star$ symmetric.

{\bf Remark}:
The block diagonal structure of $\X^\star$ in Eq.~\eqref{equ:xstar} is caused by the sparsity pattern of the problem. The \emph{aggregate sparsity pattern} in our SDP problem, which is the union of individual sparsity patterns of the data matrices, $\{\C_0, \A_1, \cdots, \A_m\}$,  includes two cliques: one includes the $1\sim 9$-th entries of $\x$, and the other includes the $10\sim 12$-th entries of $\x$, see Fig.~\ref{fig:sparsity}(a). There is no common node in these two cliques, see Fig.~\ref{fig:sparsity}(b). The \emph{chordal decomposition} theory of the sparse SDPs can explain the structure of $\X^\star$ well. The interested reader may refer to~\cite{fukuda2001exploiting,vandenberghe2015chordal} for more details.

\begin{figure}[t]
	\begin{center}
	\subfigure[]
	{
		\includegraphics[width=0.4\linewidth]{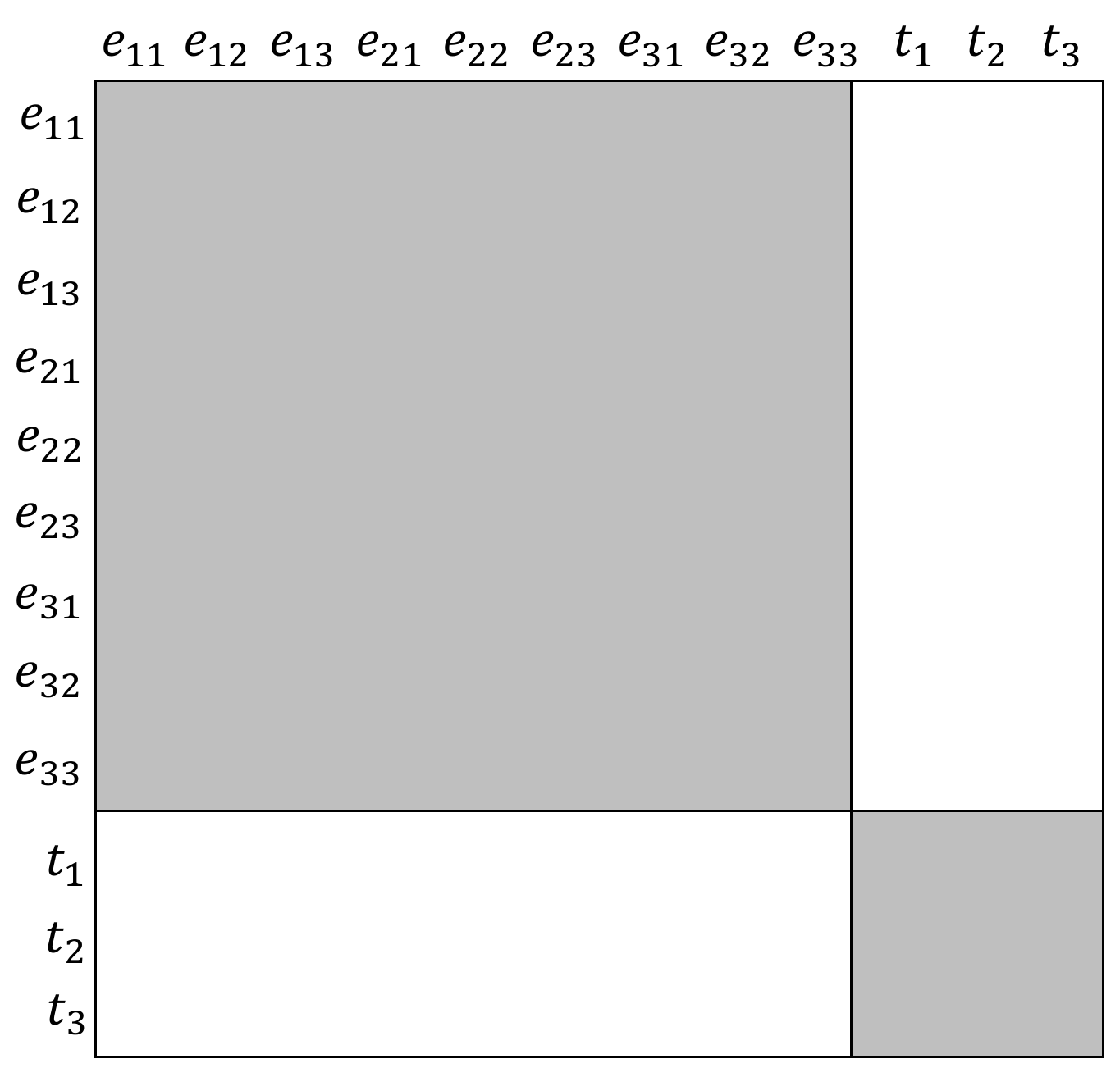}
	}
	\subfigure[]
	{
		\includegraphics[width=0.54\linewidth]{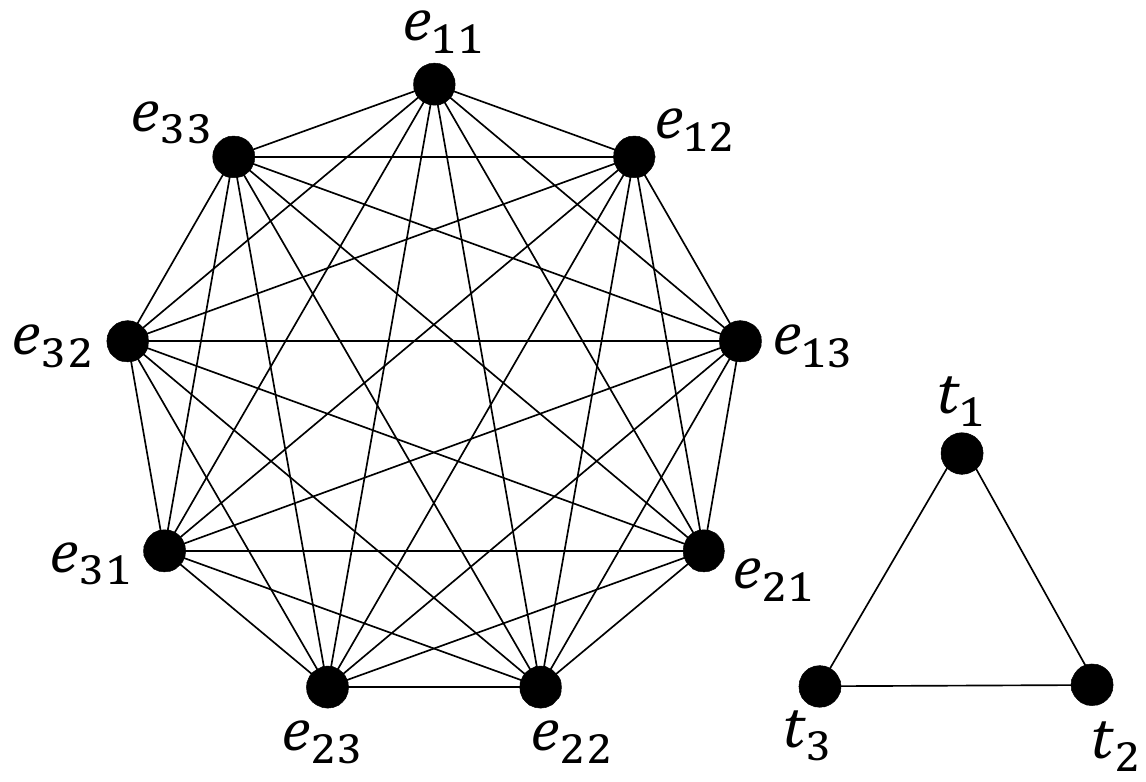}
	}
	\end{center}
	\vspace{-0.15in}
	\caption{The sparsity of our SDP problem. (a) Aggregate sparsity pattern. White parts correspond to zeros, and gray parts correspond to nonzeros. There are two diagonal blocks in this pattern. (b) Chordal decomposition of the corresponding graph. The graph contains $12$ nodes and it can be decomposed into $2$ distinct maximal cliques.}
	\label{fig:sparsity}
\end{figure}

\subsection{Time Complexity}

First, we consider the time complexity of problem construction. The construction of the optimization problem (i.e., calculating $\C$ in Eq.~\eqref{equ:kronecker} or Eq.~\eqref{equ:kronecker2}) is linear with the number of point correspondences, so its time complexity is $\mathcal{O}(N)$. 
It is worth mentioning that optimization is independent of the number of point correspondences once the problem has been constructed.

Second, we discuss the time complexity of SDP optimization. 
Most SDP solvers use an interior-point algorithm. The SDP problem~\eqref{equ:op_obj6} can be solved with a worst case complexity of 
\begin{align}
\mathcal{O}(\max(m,n)^4 n^{1/2} \log(1/\epsilon))
\end{align}
flops given a solution accuracy $\epsilon>0$~\cite{ye1997interior}. 
It can be seen that the time complexity can be largely reduced given a smaller variable number $n$ and constraint number $m$.  
Since our formulations have much fewer variables and constraints than previous work~\cite{briales2018certifiably}, they own much lower time complexity.

Finally, we discuss the time complexity of pose recovery. In our method, the essential matrix is recovered by finding the eigenvector corresponding to the largest eigenvalue of a $9\times 9$ matrix $\X_e^\star$. Thus the time complexity of pose recovery is $\mathcal{O}(1)$.

\section{Tightness and Local Stability of $N$-Point Method}
\label{sec:theory}

In this section, we prove tightness and local stability of the SDR for our problem. 
The readers who are not interested in theory can safely skip this section. 
To understand the proofs in this section, preliminary knowledge about convex optimization, manifold, and algebraic geometry is necessary. We recommend the readers to refer to~\cite[Chapter~6]{cifuentes2018thesis}\cite{cifuentes2017local} for more details.

In the following, the bar on a symbol stands for a value under the noise-free case. In other words, it represents ground truth. For example, $\bar{\mathbf{C}}$ denotes the matrix in objective constructed by noise-free observations; $\bar{\x}$ is the optimal state estimation from noise-free observations; $\bar{\mathbf{t}} = [\bar{t}_1, \bar{t}_2, \bar{t}_3]^\top$ is the optimal translation estimation from noise-free observations.

\subsection{Tightness of the Semidefinite Relaxation}
\label{sec:tightness}

In this subsection, we prove that the SDR is tight given noise-free observations. The proof is mainly based on Lemma~2.4 in~\cite{cifuentes2017local}.

\begin{lemma}
	If point correspondences $\{(\f_i, \f'_i)\}_{i=1}^N$ are noise-free, the matrix $\bar{\C}$ in Eq.~\eqref{equ:kronecker} satisfies that $\rank(\bar{\C}) \le \min(N, 8)$, where $N$ is the number of point correspondences. The equality holds except for degenerate configurations including  points on a ruled quadratic, points on a plane, and no translation (explanation of these degeneracies can be found in~\cite{maybank1990projective}).
\end{lemma}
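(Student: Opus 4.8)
The plan is to pass from $\bar{\C}$ to the matrix of lifted correspondences and read off the rank from a spanning/independence count. Introduce $\mathbf{V} \triangleq [\f_1 \otimes \f'_1, \cdots, \f_N \otimes \f'_N] \in \mathbb{R}^{9\times N}$, so that by Eq.~\eqref{equ:kronecker} we have $\bar{\C} = \mathbf{V}\mathbf{V}^\top$ and hence $\rank(\bar{\C}) = \rank(\mathbf{V}) = \dim \mathrm{span}\{\f_i \otimes \f'_i\}_{i=1}^N$. The bound $\rank(\bar{\C}) \le N$ is then immediate since $\mathbf{V}$ has $N$ columns. For the bound $\rank(\bar{\C}) \le 8$, I would use the noise-free epipolar constraint~\eqref{equ:epipolar}: writing the bilinear form as $\f_i^\top \bar{\E} \f'_i = \bar{\e}^\top(\f_i \otimes \f'_i)$, which holds precisely because of the row-first convention for $\vect$ and $\otimes$, every column of $\mathbf{V}$ is orthogonal to $\bar{\e}$. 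Since $\bar{\E} \neq \mathbf{0}$ implies $\bar{\e} \neq \mathbf{0}$, all columns lie in the $8$-dimensional hyperplane $\bar{\e}^\perp$, giving $\rank(\bar{\C}) \le 8$ and therefore $\rank(\bar{\C}) \le \min(N,8)$.

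For the equality I would characterize the rank \emph{deficiency} dually. The orthogonal complement of $\mathrm{span}\{\f_i \otimes \f'_i\}$ in $\mathbb{R}^9$ is
\[
\mathcal{N} \triangleq \{\g \in \mathbb{R}^9 : \f_i^\top \mathbf{G} \f'_i = 0,\ i=1,\dots,N,\ \mathbf{G}=\mat(\g,[3,3])\},
\]
and $\dim \mathcal{N} = 9 - \rank(\bar{\C})$. The ground-truth $\bar{\e}$ always lies in $\mathcal{N}$. Two regimes arise. When $N \le 8$, equality $\rank(\bar{\C})=N$ is exactly linear independence of the lifted vectors $\{\f_i \otimes \f'_i\}$, which holds generically and fails only on a thin set of configurations. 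When $N \ge 8$, equality $\rank(\bar{\C})=8$ is equivalent to $\mathcal{N} = \mathrm{span}\{\bar{\e}\}$, i.e.\ the essential matrix is, up to scale, the \emph{unique} $3\times 3$ matrix whose bilinear form annihilates all correspondences. A rank drop below $8$ therefore occurs if and only if there exists a matrix $\mathbf{G}$ not proportional to $\bar{\E}$ with $\f_i^\top \mathbf{G}\f'_i = 0$ for every $i$.

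The crux is then to identify when such an extra annihilator $\mathbf{G}$ can exist, and here I would invoke the classical theory of critical surfaces rather than re-deriving it. Geometrically, the existence of such a $\mathbf{G}$ forces all the underlying 3D points, together with the two camera centers, to lie on a common quadric, because the relation $\f_i^\top \mathbf{G} \f'_i = 0$ transported to the scene is a single quadratic equation. The classification of these critical loci is exactly the list in the statement: points on a ruled quadric, points on a plane, and the pure-rotation case $\bar{\mathbf{t}} = \mathbf{0}$; this enumeration is established in~\cite{maybank1990projective}, which I would cite for the details. Combining generic independence for $N \le 8$ with this critical-surface dichotomy for $N \ge 8$ yields $\rank(\bar{\C}) = \min(N,8)$ outside the named degeneracies.

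I expect the main obstacle to be the equality clause rather than the upper bound: the bound is a one-line consequence of the epipolar constraint and the Gram structure, whereas pinning the rank deficiency down to \emph{precisely} the three degenerate families needs the quadric/critical-surface machinery of multiple-view geometry. My intent is to reduce cleanly to the statement that $\mathcal{N}$ is more than one-dimensional and then defer the geometric enumeration to the cited classification, keeping the self-contained part—the two rank bounds and the dual reformulation—fully explicit.
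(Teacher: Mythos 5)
Your proposal is correct and follows essentially the same route as the paper: both exploit the Gram structure $\bar{\C}=\mathbf{V}\mathbf{V}^\top$, use the noise-free epipolar constraint to force the lifted vectors into the hyperplane $\bar{\e}^\perp$ (the paper phrases this as $\bar{\e}^\top\bar{\C}\bar{\e}=0$), and defer the classification of the degenerate configurations to the critical-surface theory of~\cite{maybank1990projective}. Your dual characterization of the equality clause via the annihilator space $\mathcal{N}$ is more explicit than the paper's proof, which for $N\le 8$ merely asserts generic linear independence of the rank-one summands and for $N>8$ only establishes the upper bound, but this is a sharpening of the same argument rather than a different method.
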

\begin{proof}
	(i) When $N \le 8$, from the construction of $\C$ in Eq.~\eqref{equ:kronecker}, each point correspondence adds a rank-$1$ matrix to the Gram matrix $\C$. The rank-$1$ matrices are linear independent except for the degenerate cases~\cite{maybank1990projective}. Thus the rank of $\C$ is $N$ for non-degenerate cases. 
	When a degeneracy occurs, there will be linear dependence between these rank-1 matrices, and the rank will be below $N$. 
	(ii) When $N > 8$, there exists the stacked vector $\bar{\e}$ of an essential matrix satisfying that $\bar{\e}^\top \bar{\C} \bar{\e} = 0$, so the upper limit of $\rank(
	\bar{\C})$ is $8$. 
	Then we complete the proof by combining these two properties.
\end{proof}
Given a set of point correspondences, if $N \ge 8$ after excluding points that belong to planar degeneracy, the rank of $\bar{\C}$ is $8$. This principle is the basis of the eight-point method~\cite{hartley1995defence}. 
In our methods, we do not need to distinguish the points that belong to the degenerate configurations, so the rank-$8$ assumption in the following text can be easily satisfied in $N$-point problem in which $N \gg 8$.

\begin{lemma}
	\label{lemma:psd}
	Let $\C \in \mathbb{R}^{n\times n}$ be positive semidefinite.  
	If $\x^\top \C \x = 0$ for a given vector $\x$, then $\C \x = \mathbf{0}$.
\end{lemma}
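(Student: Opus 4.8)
The plan is to reduce the quadratic form to a squared norm by exploiting the square-root factorization that positive semidefiniteness provides. First I would record that a symmetric PSD matrix admits a factorization $\C = \B^\top \B$ for some real matrix $\B$; concretely, starting from the spectral decomposition $\C = \sum_i \lambda_i \mathbf{v}_i \mathbf{v}_i^\top$ with orthonormal $\mathbf{v}_i$ and all $\lambda_i \ge 0$ (guaranteed by $\C \succeq \mathbf{0}$), one may take $\B = \C^{1/2} = \sum_i \sqrt{\lambda_i}\, \mathbf{v}_i \mathbf{v}_i^\top$. With such a $\B$ in hand the hypothesis collapses to
\begin{align}
\x^\top \C \x = \x^\top \B^\top \B \x = \|\B \x\|^2 = 0, \nonumber
\end{align}
which forces $\B \x = \mathbf{0}$. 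Left-multiplying by $\B^\top$ then yields $\C \x = \B^\top \B \x = \B^\top \mathbf{0} = \mathbf{0}$, i.e.\ exactly the claim. This is the route I would write up, as every step after the factorization is a single line.

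As an alternative that avoids explicitly constructing $\B$, I would use the diagonalized form directly: $\x^\top \C \x = \sum_i \lambda_i (\mathbf{v}_i^\top \x)^2 = 0$ is a sum of nonnegative terms, so each term vanishes, giving $\lambda_i (\mathbf{v}_i^\top \x)^2 = 0$ for every $i$. Hence for each index either $\lambda_i = 0$ or $\mathbf{v}_i^\top \x = 0$, and in both cases $\lambda_i (\mathbf{v}_i^\top \x) = 0$. Substituting back gives $\C \x = \sum_i \lambda_i (\mathbf{v}_i^\top \x)\, \mathbf{v}_i = \mathbf{0}$. A third, even more economical, phrasing is the Cauchy--Schwarz inequality for the (possibly degenerate) semi-inner product $(\mathbf{u}, \mathbf{v}) \mapsto \mathbf{u}^\top \C \mathbf{v}$ induced by $\C \succeq \mathbf{0}$: one has $|\mathbf{u}^\top \C \x|^2 \le (\mathbf{u}^\top \C \mathbf{u})(\x^\top \C \x)$ for all $\mathbf{u}$, and since the right-hand side is zero, $\mathbf{u}^\top \C \x = 0$ for every $\mathbf{u}$; letting $\mathbf{u}$ run over the standard basis shows each entry of $\C \x$ is zero.

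The statement is elementary, so there is no genuine obstacle; the only point requiring a sentence of justification is the appeal to positive semidefiniteness in the \emph{degenerate} regime, namely that the factorization $\C = \B^\top \B$ (equivalently, the spectral decomposition with $\lambda_i \ge 0$, equivalently the semi-inner-product Cauchy--Schwarz inequality) remains valid even when $\C$ is singular and the induced form is not strictly positive. Each of these is standard linear algebra and is precisely the hypothesis $\C \succeq \mathbf{0}$ in action, so I would simply cite it and let the one-line computation finish the argument. I would favor the factorization proof for its brevity and because it makes the role of the PSD assumption most transparent.
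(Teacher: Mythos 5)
Your proposal is correct, and it is essentially the paper's argument: your second variant (expanding in the eigenbasis, noting $\x^\top \C \x = \sum_i \lambda_i (\mathbf{v}_i^\top \x)^2$ is a sum of nonnegative terms that must each vanish) is precisely the proof given in the paper, and your preferred square-root factorization $\C = \B^\top\B$ is just an equivalent repackaging of the same spectral fact. No gaps; any of your three one-line routes would serve.
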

\begin{proof}
	Since $\C$ is positive semidefinite, its eigenvalues are non-negative. Suppose the rank of $\C$ is $r$. The eigenvalues can be listed as $\sigma_1 \ge \sigma_2 \ge \cdots \ge \sigma_r > 0 = \sigma_{r+1} = \cdots = \sigma_n$. Denote the $i$-th eigenvector as $\x_i$. The eigenvectors are orthogonal to each other, i.e., $\x_i^\top \x_j = 0$ when $i \neq j$. Vector $\x$ can be expressed as $\x = \sum_{i=1}^n a_i \x_i$. Thus, $\C \x = \C \sum_{i=1}^n a_i \x_i = \sum_{i=1}^n a_i \sigma_i \x_i$, and $\x^\top \C \x = \sum_{i=1}^n a_i \x_i^\top \cdot \sum_{i=1}^n a_i \sigma_i \x_i = \sum_{i=1}^r \sigma_i a_i^2 \|\x_i\|^2$. Given $\x^\top \C \x = 0$, we have $a_i = 0$ for $i = 1, \cdots, r$. Thus $\C \x = \sum_{i=1}^n a_i \sigma_i \x_i = \mathbf{0}$ is obtained. 
\end{proof}

\begin{lemma}
	\label{lemma:noisefree}
	If point correspondences $\{(\f_i, \f'_i)\}_{i=1}^N$ are noise-free, there is zero-duality-gap between problem~\eqref{equ:op_new_obj3} and its Lagrangian dual problem~\eqref{equ:op_dual}.
\end{lemma}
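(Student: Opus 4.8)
The plan is to prove the zero duality gap by exhibiting a primal--dual optimal pair, following the dual-certificate criterion of Lemma~2.4 in~\cite{cifuentes2017local}. First I would pin down the primal optimum. Since $\bar\C\succeq\mathbf{0}$ (it is a Gram matrix), the objective $\x^\top\C_0\x=\e^\top\bar\C\e$ is nonnegative over the whole feasible set, so the primal value satisfies $p^\star\ge 0$. In the noise-free case the epipolar constraint $\bar\f_i^\top\bar\E\bar\f'_i=0$ holds exactly, hence the ground-truth pair $\bar\x=[\bar\e;\bar\t]$ is feasible for~\eqref{equ:op_new_obj3} with objective value $\bar\e^\top\bar\C\bar\e=0$. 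Therefore $p^\star=0$ and $\bar\x$ is a primal minimizer; moreover, applying Lemma~\ref{lemma:psd} to $\bar\C\succeq\mathbf{0}$ and $\bar\e^\top\bar\C\bar\e=0$ yields the stationarity fact $\bar\C\bar\e=\mathbf{0}$ that I will use below. By weak duality the dual value obeys $d^\star\le p^\star=0$, so it suffices to produce a dual-feasible $\boldsymbol\lambda$ (i.e.\ with $\Q(\boldsymbol\lambda)\succeq\mathbf{0}$) whose objective $\lambda_7$ equals $0$.

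Next I would construct the dual certificate explicitly, exploiting the block structure of the constraints. Because none of $h_1,\dots,h_7$ couples $\e$ and $\t$, every $\A_i$ is block diagonal, so $\Q(\boldsymbol\lambda)=\C_0-\sum_i\lambda_i\A_i=\mathrm{diag}(\Q_e,\Q_t)$ with $\Q_e=\bar\C-\boldsymbol\Lambda\otimes\mathbf{I}$ and $\Q_t=(\trace(\boldsymbol\Lambda)-\lambda_7)\mathbf{I}-\boldsymbol\Lambda$, where $\boldsymbol\Lambda$ is the symmetric $3\times3$ matrix collecting $\lambda_1,\dots,\lambda_6$. I would set $\lambda_7=0$ and $\boldsymbol\Lambda=c\,\bar\t\bar\t^\top$ for a scalar $c\ge0$ to be fixed. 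Stationarity $\Q(\boldsymbol\lambda)\bar\x=\mathbf{0}$ then holds by construction: on the translation block $\Q_t\bar\t=c(\mathbf{I}-\bar\t\bar\t^\top)\bar\t=\mathbf{0}$ since $\bar\t^\top\bar\t=1$; on the essential block $(\boldsymbol\Lambda\otimes\mathbf{I})\bar\e=\vect(\boldsymbol\Lambda\bar\E)=c\,\vect(\bar\t(\bar\t^\top\bar\E))=\mathbf{0}$, because $\bar\t$ is the left null vector of $\bar\E=[\bar\t]_\times\bar\R$, and together with $\bar\C\bar\e=\mathbf{0}$ this gives $\Q_e\bar\e=\mathbf{0}$.

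Finally I would verify positive semidefiniteness, which is the only genuinely nontrivial step and hence the main obstacle. $\Q_t=c(\mathbf{I}-\bar\t\bar\t^\top)\succeq\mathbf{0}$ for every $c\ge0$, since $\mathbf{I}-\bar\t\bar\t^\top$ is the orthogonal projector onto $\bar\t^\perp$. For $\Q_e=\bar\C-c(\bar\t\bar\t^\top\otimes\mathbf{I})$ I would invoke the rank-$8$ property: $\ker\bar\C=\mathrm{span}(\bar\e)$, so $\bar\C$ restricted to $\bar\e^\perp$ is bounded below by its smallest positive eigenvalue $\sigma$. Since $\bar\t\bar\t^\top\otimes\mathbf{I}$ also annihilates $\bar\e$ and has operator norm $1$, and since $\bar\e\in\ker\Q_e$ makes $\bar\e^\perp$ an invariant subspace of the symmetric matrix $\Q_e$, on $\bar\e^\perp$ we obtain $\Q_e\succeq(\sigma-c)\mathbf{I}$ while $\Q_e$ vanishes on $\mathrm{span}(\bar\e)$; hence any $c\in[0,\sigma]$ gives $\Q_e\succeq\mathbf{0}$. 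With such a $c$, $\boldsymbol\lambda$ is dual feasible with value $\lambda_7=0=p^\star$, so $d^\star=p^\star$ by weak duality and the criterion of~\cite{cifuentes2017local} certifies the zero duality gap. (For the gap alone even $c=0$, i.e.\ $\boldsymbol\lambda=\mathbf{0}$ with $\Q=\C_0\succeq\mathbf{0}$, already suffices; the freedom to take $c>0$, which forces $\Q_e$ and $\Q_t$ to have corank exactly $1$, is precisely what the subsequent local-stability analysis will exploit.) The crux is thus the PSD check for $\Q_e$, which hinges on aligning $\ker\bar\C=\mathrm{span}(\bar\e)$ with the kernel of the subtracted term $\bar\t\bar\t^\top\otimes\mathbf{I}$.
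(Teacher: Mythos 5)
Your proof is correct and follows essentially the same route as the paper's: both exhibit the ground-truth $\bar{\x}=[\bar{\e};\bar{\t}]$ as a primal feasible point with objective value $0$, derive $\bar{\C}\bar{\e}=\mathbf{0}$ from Lemma~\ref{lemma:psd}, and certify zero duality gap with the dual-feasible multiplier $\boldsymbol{\lambda}=\mathbf{0}$ (your $c=0$ case) via the criterion of~\cite{cifuentes2017local}. Your additional one-parameter family $\boldsymbol{\Lambda}=c\,\bar{\t}\bar{\t}^\top$ is a valid strengthening that anticipates the local-stability analysis, but it is not needed for this lemma and quietly imports the $\rank(\bar{\C})=8$ assumption, which the statement does not hypothesize.
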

\begin{proof}
	Our proof is an application of the Lemma~2.4 in~\cite{cifuentes2017local}. Let $\bar{\x} = [\bar{\e}; \bar{\mathbf{t}}]$ be a feasible point in the primal problem, where $\bar{\e}$ and $\bar{\mathbf{t}}$ are ground truth. And let $\boldsymbol{\lambda} = \mathbf{0}$  be a feasible point in its Lagrangian dual problem~\eqref{equ:op_dual}. The three conditions needed in Lemma~2.4 in~\cite{cifuentes2017local} are satisfied: (i) Primal feasibility. By substituting $\bar{\x}$ in the primal problem, the constraints are satisfied since $\bar{\x}$ is ground truth and the point correspondences are noise-free. 
	(ii) Dual feasibility. $\Q(\boldsymbol{\lambda}) = \C_0 - \sum_{i=1}^m \lambda_i \A_i = 
	\begin{bmatrix}
	\bar{\C} & \mathbf{0}_{9\times 3} \\
	\mathbf{0}_{3\times 9} & \mathbf{0}_{3\times 3}
	\end{bmatrix} \succeq 0$.
	(iii) Lagrangian multiplier. It satisfies that $(\C_0 - \sum_{i=1}^m \lambda_i \A_i) \bar{\x} = \begin{bmatrix} \bar{\C} \bar{\e} \\ \mathbf{0}_{3\times 1} \end{bmatrix}$. Since $\bar{\e}$ is the ground truth, $\bar{\e}^\top \C \bar{\e} = 0$ according to Eq.~\eqref{equ:reform}. Recall that $\bar{\C}$ is a Gram matrix and thus it is positive semidefinite. 
	According to Lemma~\ref{lemma:psd}, $\bar{\C} \bar{\e} = \mathbf{0}$ is obtained. 
\end{proof}
The zero-duality gap still holds for the case of noisy observations. A proof is provided in Appendix~\ref{sec:another_lemma}.

\subsection{Local Stability of the Semidefinite Relaxation}
\label{sec:stablility}
In this subsection, we prove that the SDR has local stability near noise-free observations. In other words, our QCQP formulation has a zero-duality-gap regime when its observations are perturbed (e.g., with noise in the case of sensor measurements). 
The proof is based on Theorem~5.1 in~\cite{cifuentes2017local}. 
Following~\cite{cifuentes2017local}, 
we will use the following notations in the remains of this section to make notation simplicity.
\begin{itemize}
	\item $\bar{\theta} \in \Theta$ is a zero-duality-gap parameter. In our problem, $\bar{\theta} = \{ \bar{\C} \}$.
	\item Given noise-free observations, let $\bar{\x} \in \mathbb{R}^n$ be optimal for the primal problem, and $\bar{\boldsymbol{\lambda}} \in \mathbb{R}^m$ be optimal for the dual problem. In our problem, $n = 12$ and $m = 7$. According to the proof procedure of Lemma~\ref{lemma:noisefree}, we have $\bar{\x} = [\bar{\e}; \bar{\mathbf{t}}]$ and $\bar{\boldsymbol{\lambda}} = \mathbf{0}$.
	\item Denote $\bar{\mathbf{Q}} \triangleq \mathbf{Q}_{\bar{\theta}}(\bar{\boldsymbol{\lambda}}) \in \mathcal{S}^n$ as the Hessian of the Lagrangian at $\bar{\theta}$. 
	In our problem, 
	$\bar{\mathbf{Q}} = 
	\begin{bmatrix}
	\bar{\C} & \mathbf{0}_{9\times 3} \\
	\mathbf{0}_{3\times 9} & \mathbf{0}_{3\times 3}
	\end{bmatrix}
	\succeq 0
	$.
	\item Denote $X_{\theta} \triangleq \{\x \in \mathbb{R}^n | h_{i|\theta(\x)} = 0, i = 1, \cdots, m \}$ as the primal feasible set given $\theta$, and denote $\bar{X} \triangleq X_{\bar{\theta}}$.
	\item Denote $\mathbf{h}(\x) = [h_1(\x), \cdots, h_m(\x)]$.
\end{itemize}

In QCQP~\eqref{equ:op_new_obj3}, the objective $\e^\top \C \e$ is convex with $\e$. However, the presence of the auxiliary variables $\mathbf{t}$ makes the objective is not strictly convex. 
Theorem~5.1 in~\cite{cifuentes2017local} provides a framework to prove the local stability for such kinds of problems.

\begin{theorem}[Theorem~5.1 in~\cite{cifuentes2017local}]
	\label{theorem:main}
	Assume that the following $4$ conditions are satisfied:
	
	RS (restricted Slater): There exists $\boldsymbol{\mu} \in \mathbb{R}^m$ such that $\boldsymbol{\mu}^\top \nabla \mathbf{h}_{\bar{\theta}}(\bar{\x}) = 0$ and $(\sum_{i=1}^m \mu_i \mathbf{A}_{i|\bar{\theta}}) |_V \succ 0$, where $V \triangleq \{\mathbf{v} \in \mathbb{R}^n | \bar{\mathbf{Q}} \mathbf{v} = \mathbf{0}, \bar{\x}^\top \mathbf{v} = 0 \}$.
	
	R1 (constraint qualification): Abadie constraint qualification $\acq_{\bar{\X}}(\bar{\x})$ holds.
	
	R2 (smoothness): $\mathcal{W} \triangleq \{(\theta, \x) | \mathbf{h}_\theta (\x) = \mathbf{0}\}$ is a smooth manifold nearby $\bar{w} \triangleq (\bar{\theta}, \bar{\x})$, and $\dim_{\bar{w}} \mathcal{W} = \dim \Theta + \dim_{\bar{\x}} \bar{X}$.
	
	R3 (not a branch point): $\bar{\x}$ is not a branch point of $\bar{X}$ with respect to $\mathbf{v} \mapsto \bar{\Q} \mathbf{v}$.
	
	Then the SDR relaxation is tight when $\theta$ is close enough to $\bar{\theta}$. Moreover, the QCQP has a unique optimal solution $\x_\theta$, and the SDR problem has a a unique optimal solution $\x_\theta \x_\theta^\top$.
\end{theorem}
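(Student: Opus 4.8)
The plan is to certify tightness for each nearby parameter $\theta$ by exhibiting a primal--dual pair $(\x_\theta, \boldsymbol{\lambda}_\theta)$ satisfying Lagrangian stationarity together with dual feasibility $\Q_\theta(\boldsymbol{\lambda}_\theta)\succeq 0$ and complementary slackness $\Q_\theta(\boldsymbol{\lambda}_\theta)\x_\theta = \mathbf{0}$. Once such a pair is in hand, the rank-one matrix $\X_\theta = \x_\theta \x_\theta^\top$ is primal feasible for the SDR and weak duality forces $\trace(\C_0\X_\theta) = \mathbf{b}^\top\boldsymbol{\lambda}_\theta$, so the duality gap vanishes, $\X_\theta$ is optimal, and being rank one it proves the relaxation tight. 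At the base parameter $\bar\theta$ we already own such a certificate, namely $(\bar\x, \bar{\boldsymbol{\lambda}}) = (\bar\x, \mathbf{0})$ from Lemma~\ref{lemma:noisefree}, whose Hessian $\bar\Q$ is PSD with $\bar\Q\bar\x = \mathbf{0}$. The whole argument is therefore a perturbation statement: this certificate must be continued from $\bar\theta$ to nearby $\theta$.

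Second, I would build the continuation by the implicit function theorem. Condition R1 ($\acq$) guarantees that optimality on the feasible set is captured exactly by Lagrangian stationarity, so I collect stationarity and feasibility into a single map $F(\theta, \x, \boldsymbol{\lambda}) = \bigl( (\C_0 - \sum_i \lambda_i\A_i)\x ;\, \mathbf{h}_\theta(\x) \bigr)$ and study its zero set near $(\bar\theta, \bar\x, \mathbf{0})$. Condition R2 (smoothness of $\mathcal{W}$ with the expected dimension) ensures the feasibility block is regular and that the feasible fibre varies smoothly with $\theta$, while R3 (not a branch point) rules out the degeneracy in which $\mathbf{v}\mapsto\bar\Q\mathbf{v}$ fails to single out the direction $\bar\x$, so that the Jacobian of $F$ in $(\x, \boldsymbol{\lambda})$ is invertible on the relevant reduced space. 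The IFT then yields a smooth branch $\theta\mapsto(\x_\theta,\boldsymbol{\lambda}_\theta)$ through $(\bar\x,\mathbf{0})$ that automatically satisfies primal feasibility and complementary slackness $\Q_\theta(\boldsymbol{\lambda}_\theta)\x_\theta = \mathbf{0}$, since the first block of $F$ is precisely $\tfrac12\nabla_\x$ of the Lagrangian.

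Third, and this is where the real work lies, I would verify that the continued certificate stays dual feasible, i.e. $\Q_\theta(\boldsymbol{\lambda}_\theta)\succeq 0$. The difficulty is that $\bar\Q$ is only positive \emph{semi}definite: its kernel is spanned by $\bar\x$ together with $V = \{\mathbf{v} : \bar\Q\mathbf{v} = \mathbf{0},\ \bar\x^\top\mathbf{v} = 0\}$, and a generic perturbation could drive the corresponding zero eigenvalues negative. By continuity the strictly positive eigenvalues of $\bar\Q$ remain positive near $\bar\theta$, so it suffices to control those that vanish at $\bar\theta$. Along $\bar\x$ complementary slackness pins that direction inside the kernel, so no negative eigenvalue can emerge there; the danger is confined to $V$. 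This is exactly what RS neutralizes: the existence of $\boldsymbol{\mu}$ with $\boldsymbol{\mu}^\top\nabla\mathbf{h}_{\bar\theta}(\bar\x) = 0$ and $(\sum_i\mu_i\A_i)|_V\succ 0$ is a second-order sufficient condition certifying strictly positive Lagrangian curvature on $V$, so the formerly-zero eigenvalues restricted to $V$ move strictly into the PSD cone rather than out of it. A Schur-complement estimate of $\Q_\theta(\boldsymbol{\lambda}_\theta)$ restricted to $V\oplus\mathrm{span}(\bar\x)$ then shows the certificate remains PSD for $\theta$ in a neighbourhood of $\bar\theta$. I expect this PSD-persistence step to be the main obstacle, as it is the only place where the mere semidefiniteness of $\bar\Q$ could break down.

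Finally, with dual feasibility restored, tightness follows from the certificate as in the first paragraph. For uniqueness I would note that $\Q_\theta(\boldsymbol{\lambda}_\theta)$ has corank exactly one, its kernel being precisely $\mathrm{span}(\x_\theta)$ since RS forces strict positivity off this line, so every optimal $\X$ of the SDR must have range inside that line and hence equals $\x_\theta\x_\theta^\top$; the normalization $b_7 = 1$ (that is, $\mathbf{t}^\top\mathbf{t} = 1$) then resolves the sign, yielding a unique QCQP minimizer $\x_\theta$.
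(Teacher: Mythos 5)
The paper does not prove this statement at all: it is imported verbatim as Theorem~5.1 of \cite{cifuentes2017local} and used as a black box, the paper's own contribution (Theorem~\ref{theorem:stability}) being the verification of the four hypotheses for the specific QCQP~\eqref{equ:op_new_obj3}. There is therefore no in-paper proof to compare against, and your proposal has to be judged against the argument in the cited reference.

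Measured that way, your sketch has the right architecture --- continue a primal--dual certificate from $\bar\theta$ to nearby $\theta$, use RS to upgrade the merely PSD Hessian $\bar\Q$ to a corank-one certificate, and conclude tightness and uniqueness from complementary slackness plus corank one --- but two steps are asserted rather than established. First, the claim that R2 and R3 make ``the Jacobian of $F$ in $(\x,\boldsymbol{\lambda})$ invertible on the relevant reduced space'' is precisely where R1--R3 do their work, and it is not a routine implicit-function-theorem application: the KKT Jacobian need not be square or invertible (the multiplier at $\bar\theta$ need not be unique, and in this paper's instance $\ker\bar\Q$ is four-dimensional), so the reference instead proves continuity of a multiplier \emph{selection} by an argument on the incidence variety $\mathcal{W}$ that genuinely uses smoothness, ACQ, and the branch-point condition; a bare IFT on the KKT system would fail. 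Second, RS does not directly force the zero eigenvalues of the \emph{perturbed} Hessian on $V$ to become positive. The clean mechanism is to first replace $\bar{\boldsymbol{\lambda}}$ by $\bar{\boldsymbol{\lambda}} - t\boldsymbol{\mu}$ for small $t>0$: this preserves $\Q(\cdot)\,\bar\x=\mathbf{0}$ because $\boldsymbol{\mu}^\top\nabla\mathbf{h}_{\bar\theta}(\bar\x)=0$, is strictly positive on $V$ by the second RS condition, and keeps the positive eigenspace positive for small $t$ via the Schur-complement estimate you allude to, yielding a corank-one PSD certificate already at $\bar\theta$; only then does eigenvalue continuity in $\theta$ finish the job. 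With those two repairs your outline matches the strategy of the cited proof, but as written the PSD-persistence and multiplier-continuity steps are the genuine gaps.
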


Among the four conditions in Theorem~\ref{theorem:main}, the \emph{RS (restricted Slater)} is the main
assumption, which is related to the convexity of the Lagrangian function. It corresponds to the strict feasibility of an SDP. 
\emph{R1}$\sim$\emph{R3} are regularity assumptions which are related to the continuity of the
Lagrange multipliers. 
In the following text, we prove that the restricted Slater and \emph{R3} are satisfied in  problem~\eqref{equ:op_new_obj3}, which builds an important foundation to prove the local stability. 
In our problem, $\A_i$ and $h_i$ are independent of $\theta$, thus $\mathbf{A}_{i|\bar{\theta}} = \A_i$ and $h_{i|\bar{\theta}} = h_i$.
\begin{lemma}[Restricted Slater]
	\label{lemma:slater}
	For QCQP~\eqref{equ:op_new_obj3}, suppose $\rank(\bar{\C}) = 8$.  Then there exists $\boldsymbol{\mu} \in \mathbb{R}^m$ such that $\boldsymbol{\mu}^\top \nabla \mathbf{h}_{\bar{\theta}}(\bar{\x}) = 0$ and $(\sum_{i=1}^m \mu_i \mathbf{A}_{i|\bar{\theta}}) |_V \succ 0$, where $V \triangleq \{\mathbf{v} \in \mathbb{R}^n | \bar{\mathbf{Q}} \mathbf{v} = \mathbf{0}, \bar{\x}^\top \mathbf{v} = 0 \}$.
\end{lemma}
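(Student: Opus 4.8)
My plan is to exploit the fact that, because no constraint $h_i$ mixes the entries of $\E$ with those of $\mathbf{t}$, every $\A_i$ is block diagonal, and hence so is $M(\boldsymbol{\mu})\triangleq\sum_{i=1}^{m}\mu_i\A_i$. Collecting $\mu_1,\dots,\mu_6$ into the symmetric matrix $\Lambda\in\mathcal{S}^3$ with diagonal $(\mu_1,\mu_2,\mu_3)$ and off-diagonal entries $\mu_4/2,\mu_5/2,\mu_6/2$, a direct reading of Eqs.~\eqref{equ:sub1}--\eqref{equ:sub7} yields
\begin{align}
M(\boldsymbol{\mu})=
\begin{bmatrix}
\Lambda\otimes\mathbf{I}_3 & \mathbf{0} \\
\mathbf{0} & \Lambda+(\mu_7-\trace(\Lambda))\mathbf{I}_3
\end{bmatrix}, \nonumber
\end{align}
where the $9\times9$ block acts on $\e$ and the $3\times3$ block on $\mathbf{t}$. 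I would keep this block form throughout, since it decouples the two requirements of the restricted Slater condition into an $\e$-part and a $\mathbf{t}$-part.

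First I would rewrite $\boldsymbol{\mu}^\top\nabla\mathbf{h}_{\bar\theta}(\bar{\x})=0$ as an explicit constraint on $\boldsymbol{\mu}$. Because $\nabla h_i(\bar{\x})=2\A_i\bar{\x}$, it is equivalent to $M(\boldsymbol{\mu})\bar{\x}=\mathbf{0}$, i.e.\ $(\Lambda\otimes\mathbf{I}_3)\bar{\e}=\vect(\Lambda\bar{\E})=\mathbf{0}$ together with $\big(\Lambda+(\mu_7-\trace(\Lambda))\mathbf{I}_3\big)\bar{\mathbf{t}}=\mathbf{0}$. The first equation says $\Lambda\bar{\E}=\mathbf{0}$. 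Since $\bar{\E}$ is a rank-two essential matrix its left null space is one-dimensional and spanned by $\bar{\mathbf{t}}$ (by the only-if part of Theorem~\ref{theorem:equivalent1}, $\bar{\mathbf{t}}^\top\bar{\E}=\bar{\mathbf{t}}^\top[\bar{\mathbf{t}}]_\times\bar{\R}=\mathbf{0}$), so every row of $\Lambda$ is a multiple of $\bar{\mathbf{t}}^\top$; the symmetry of $\Lambda$ then forces $\Lambda=\alpha\,\bar{\mathbf{t}}\bar{\mathbf{t}}^\top$ for a single scalar $\alpha$. Substituting this into the $\mathbf{t}$-equation and using $\|\bar{\mathbf{t}}\|=1$ collapses it to $\mu_7\bar{\mathbf{t}}=\mathbf{0}$, hence $\mu_7=0$. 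Condition one therefore pins $\boldsymbol{\mu}$ down to the one-parameter family $\Lambda=\alpha\,\bar{\mathbf{t}}\bar{\mathbf{t}}^\top$, $\mu_7=0$, leaving only the sign of $\alpha$ to be chosen.

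I would then evaluate the quadratic form of $M(\boldsymbol{\mu})$ on $V$. For $\mathbf{v}=[\mathbf{v}_e;\mathbf{v}_t]$ the relation $\bar{\mathbf{Q}}\mathbf{v}=\mathbf{0}$ means $\bar{\C}\mathbf{v}_e=\mathbf{0}$; with $\rank(\bar{\C})=8$ and $\bar{\C}\bar{\e}=\mathbf{0}$ from Lemma~\ref{lemma:noisefree} this gives $\ker(\bar{\C})=\mathrm{span}(\bar{\e})$, so $\mathbf{v}_e=\beta\bar{\e}$. The key simplification is that the $\e$-block drops out: the same identity $(\Lambda\otimes\mathbf{I}_3)\bar{\e}=\vect(\Lambda\bar{\E})=\mathbf{0}$ gives $\mathbf{v}_e^\top(\Lambda\otimes\mathbf{I}_3)\mathbf{v}_e=0$, so $\mathbf{v}^\top M(\boldsymbol{\mu})\mathbf{v}=\mathbf{v}_t^\top\big(\Lambda+(\mu_7-\trace(\Lambda))\mathbf{I}_3\big)\mathbf{v}_t$. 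Taking $\alpha=-1$ (any negative value works) makes this block equal to $\mathbf{I}_3-\bar{\mathbf{t}}\bar{\mathbf{t}}^\top$, whence $\mathbf{v}^\top M(\boldsymbol{\mu})\mathbf{v}=\|\mathbf{v}_t\|^2-(\bar{\mathbf{t}}^\top\mathbf{v}_t)^2\ge0$ by Cauchy--Schwarz.

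The step I expect to be the main obstacle is strictness. The form just computed vanishes exactly on the directions with $\mathbf{v}_t\parallel\bar{\mathbf{t}}$ or $\mathbf{v}_t=\mathbf{0}$, i.e.\ on the two-dimensional ``gauge'' subspace $\mathrm{span}\{[\bar{\e};\mathbf{0}],[\mathbf{0};\bar{\mathbf{t}}]\}$ contained in $\ker(\bar{\mathbf{Q}})$. This degeneracy is not accidental: it mirrors the block (aggregate-sparsity/chordal) structure highlighted after Theorem~\ref{theorem:optimality}, by which the optimum is fixed by two independent rank-one blocks rather than a single rank-one matrix. Hence strict positivity can hold only after these gauge directions are excluded, i.e.\ once $V$ is reduced so that $\mathbf{v}_t\perp\bar{\mathbf{t}}$ and the redundant $\bar{\e}$-direction is quotiented out. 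On that reduced subspace $\bar{\mathbf{t}}^\top\mathbf{v}_t=0$, so $\mathbf{v}^\top M(\boldsymbol{\mu})\mathbf{v}=\|\mathbf{v}_t\|^2>0$ for $\mathbf{v}_t\neq\mathbf{0}$, giving $M(\boldsymbol{\mu})|_V\succ0$. I would therefore be careful to state $V$ in the reduced form dictated by the chordal decomposition before invoking Cauchy--Schwarz, because the full four-dimensional kernel of $\bar{\mathbf{Q}}$ is precisely what manufactures the spurious null directions.
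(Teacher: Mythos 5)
Your block computation is correct and, in fact, more careful than the paper's. You rightly show that the first condition forces $\Lambda=\alpha\,\bar{\t}\bar{\t}^\top$ and $\mu_7=0$ (the paper's explicit multiplier is the member $\alpha=-\tfrac12$ of your family), and that for every such $\boldsymbol{\mu}$ the block-diagonal matrix $\sum_i\mu_i\A_i$ annihilates $[\bar{\e};\mathbf{0}]$ and $[\mathbf{0};\bar{\t}]$ separately. Since $\|\bar{\e}\|^2=\trace(\bar{\E}\bar{\E}^\top)=2$ (Lemma~\ref{lemma:e_trace}) and $\|\bar{\t}\|^2=1$, the vector $\mathbf{v}^\star=[-\tfrac12\bar{\e};\bar{\t}]$ satisfies $\bar{\Q}\mathbf{v}^\star=\mathbf{0}$ and $\bar{\x}^\top\mathbf{v}^\star=-\tfrac12\cdot 2+1=0$, so it is a nonzero element of $V$ \emph{exactly as $V$ is defined in the statement}, and the quadratic form vanishes on it for every admissible $\boldsymbol{\mu}$. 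Your diagnosis that strict positivity fails on the stated $V$ is therefore right and unrepairable by a different choice of $\boldsymbol{\mu}$. The paper's proof reaches the opposite conclusion only by mischaracterizing $V$: from $\rank([\bar{\C};\bar{\e}^\top])=9$ it concludes $\mathbf{v}=[\mathbf{0}_{9\times 1};\t]$, but the defining equation is $\bar{\e}^\top\mathbf{v}_{[1:9]}+\bar{\t}^\top\mathbf{v}_{[10:12]}=0$, not $\bar{\e}^\top\mathbf{v}_{[1:9]}=0$; the solutions with $\mathbf{v}_{[1:9]}=c\bar{\e}$, $2c=-\bar{\t}^\top\mathbf{v}_{[10:12]}$ survive, $V$ is three-dimensional rather than the two-dimensional space $\{[\mathbf{0};\t]:\t\perp\bar{\t}\}$ the paper uses, and the direction dropped is precisely your gauge direction.

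That said, your proposal does not prove the lemma as stated: the strict positivity you obtain holds only after replacing $V$ by a smaller subspace, which is a different statement, and you do not justify that Theorem~\ref{theorem:main} may be invoked with $V$ so reduced (appealing to the chordal structure is a heuristic here, not an argument). A corroborating symptom that something must give: the conclusion of Theorem~\ref{theorem:main} asserts a \emph{unique} SDP optimum $\x_\theta\x_\theta^\top$, whereas Eq.~\eqref{equ:xstar} exhibits a whole face of optima parameterized by the off-diagonal blocks $\Diamond$, so the hypotheses of Theorem~\ref{theorem:main} cannot all hold for this problem in its stated form. The honest resolution is the one you gesture at --- decouple the two cliques (e.g., quotient the gauge directions or reformulate so that $\bar{\x}$ spans the common kernel) and then apply the stability machinery --- but that requires restating the lemma and the way Theorem~\ref{theorem:main} is applied, not a silent redefinition of $V$ inside the proof. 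As written, neither your argument nor the paper's establishes the restricted Slater condition on the $V$ appearing in the statement.
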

\begin{proof}
	For constraint of Eqs.~\eqref{equ:sub1}$\sim$\eqref{equ:sub7}, its gradient is
	\begin{align}
	\label{equ:grad_h}
	\nabla \h_{\bar{\theta}}(\bar{\x}) = &
	\begin{bmatrix}
	\nabla_{\e_1} h_1 & \nabla_{\e_2} h_1  & \nabla_{\e_3} h_1  & \nabla_{\mathbf{t}} h_1 \\
	\vdots & \vdots & \vdots & \vdots \\
	\nabla_{\e_1} h_m & \nabla_{\e_2} h_m  & \nabla_{\e_3} h_m  & \nabla_{\mathbf{t}} h_m
	\end{bmatrix}_{|\bar{\theta}}
	\nonumber \\
	 =& \begin{bmatrix}
	2\bar{\e}_1^\top & \mathbf{0} & \mathbf{0} & 0 & -2\bar{t}_2 & -2\bar{t}_3 \\
	\mathbf{0} & 2\bar{\e}_2^\top & \mathbf{0} & -2\bar{t}_1 & 0 & -2\bar{t}_3 \\
	\mathbf{0} & \mathbf{0} & 2\bar{\e}_3^\top & -2\bar{t}_1 & -2\bar{t}_2 & 0 \\
	\bar{\e}_2^\top & \bar{\e}_1^\top & \mathbf{0} & \bar{t}_2^\top & \bar{t}_1 & 0 \\
	\bar{\e}_3^\top & \mathbf{0} & \bar{\e}_1^\top & \bar{t}_3 & 0 & \bar{t}_1 \\
	\mathbf{0} & \bar{\e}_3^\top & \bar{\e}_2^\top & 0 & \bar{t}_3 & \bar{t}_2 \\
	\mathbf{0} & \mathbf{0} & \mathbf{0} & 2\bar{t}_1 & 2\bar{t}_2 & 2\bar{t}_3
	\end{bmatrix}.
	\end{align}
	Let 
	\begin{align}
	\boldsymbol{\mu} = -\begin{bmatrix}
	\frac{1}{2}\bar{t}_1^2, \ \frac{1}{2}\bar{t}_2^2, \  \frac{1}{2}\bar{t}_3^2, \ \bar{t}_1 \bar{t}_2, \ \bar{t}_1 \bar{t}_3, \ \bar{t}_2 \bar{t}_3, \ 0
	\end{bmatrix}^\top.
	\end{align}
	Note that for noise-free cases we have
	\begin{align}
	\bar{\mathbf{t}}^\top \bar{\E} = \bar{\mathbf{t}}^\top ([\bar{\mathbf{t}}]_{\times} \bar{\R}) = \mathbf{0}, \nonumber
	\end{align}
	or equivalently
	\begin{align}
	\label{equ:t_e_zero}
	\bar{t}_1 \bar{\e}_1 + \bar{t}_2 \bar{\e}_2 + \bar{t}_3 \bar{\e}_3 = \mathbf{0}.
	\end{align}
	By combining Eqs.~\eqref{equ:grad_h}$\sim$\eqref{equ:t_e_zero}, it can be verified that $\boldsymbol{\mu}^\top \nabla \mathbf{h}_{\bar{\theta}}(\bar{\x}) = 0$.
	
	It remains to check the positivity condition. According to the definition of $V$, $\begin{bmatrix}
	\bar{\Q} \\
	\bar{\x}^\top
	\end{bmatrix} \mathbf{v} = \mathbf{0}
	\Leftrightarrow
	\begin{bmatrix}
	\bar{\C} & \mathbf{0}  \\
	\bar{\e}^\top & \bar{\mathbf{t}}^\top 
	\end{bmatrix}
	\mathbf{v} = \mathbf{0}$.
	Since $\bar{\C}$ is constructed by noise-free observations, $\bar{\C} \bar{\e} = 0$. In other words, $\bar{\e}$ is orthogonal to the space spanned by $\bar{\C}$. It is given that $\rank(\bar{\C}) = 8$, thus $\rank\left(\begin{bmatrix}
	\bar{\C} \\ \bar{\e}^\top
	\end{bmatrix}\right) = 9$. Considering $\mathbf{v}$ as a non-trivial solution of a homogeneous linear equation system, $\mathbf{v}$ can be expressed by a coordinate system $\mathbf{v} = \begin{bmatrix}
	\mathbf{0}_{9\times 1} \\
	\mathbf{t}
	\end{bmatrix}$. 
	
	It can be seen that only $10\sim 12$-th entries in coordinate system $\mathbf{v}$, which correspond to $\mathbf{t}$, are nonzero. Take Hessian for variable $\mathbf{t}$ and calculate the linear combination with coefficient $\boldsymbol{\mu}$, then we have
	\begin{align}
	& \mathcal{A}(\boldsymbol{\mu})  \triangleq \sum_{i=1}^m \mu_i \nabla^2_{\mathbf{t}\mathbf{t}}h_{i|\bar{\theta}}(\bar{\x}) \nonumber \\
	= &\begin{bmatrix}
	\bar{t}_2^2+\bar{t}_3^2 & -\bar{t}_1 \bar{t}_2 & -\bar{t}_1 \bar{t}_3 \\
	-\bar{t}_1 \bar{t}_2 & \bar{t}_1^2+\bar{t}_3^2 & -\bar{t}_2 \bar{t}_3 \\
	-\bar{t}_1 \bar{t}_3 & -\bar{t}_2 \bar{t}_3 & \bar{t}_1^2+\bar{t}_2^2
	\end{bmatrix} 
	= [\bar{\mathbf{t}}]_{\times} [\bar{\mathbf{t}}]_{\times}^\top = \bar{\E} \bar{\E}^\top.
	\end{align}
	Recall that in the proof of Theorem~\ref{theorem:equivalent1} we have proved that the eigenvalues of $\bar{\E}\bar{\E}^\top$ are $1$, $1$, and $0$. 
	So the eigenvalues of $\mathcal{A}(\boldsymbol{\mu})$ are $1$, $1$, and $0$. 
	In addition, it can be verified that $\bar{\mathbf{t}} = [\bar{t}_1, \bar{t}_2, \bar{t}_3]^\top$ is the normalized eigenvector corresponding to eigenvalue $0$ of $\mathcal{A}(\boldsymbol{\mu})$. Hence $V$ is the orthogonal complement of $\bar{\mathbf{t}}$.
	
	For any vector $\mathbf{v}\in V  \backslash \{\mathbf{0}\}$, its $10\sim 12$-th entries are orthogonal to $\bar{\mathbf{t}}$, so $\mathbf{v}_{[10:12]}^\top \mathcal{A}(\boldsymbol{\mu}) \mathbf{v}_{[10:12]}$ is strictly positive. It follows that $(\sum_{i=1}^m \mu_i \mathbf{A}_{i|\bar{\theta}})|_V = \mathcal{A}(\boldsymbol{\mu})|_{(\bar{\mathbf{t}})^\perp} \succ 0$.
\end{proof}

\begin{definition}[Branch Point~\cite{cifuentes2017local}]
	Let $\pi: \mathbb{R}^n \rightarrow \mathbb{R}^k$ be a linear map: $\mathbf{v} \mapsto \bar{\Q} \mathbf{v}$. Let $\bar{X} \subseteq \mathbb{R}^n$ be the zero set of the equation system $\mathbf{h}(\x) = [h_1(\x), \cdots, h_m(\x)]$, and let $T_\x \bar{X} \triangleq \ker(\nabla \mathbf{h}(\x))$ denote the tangent space of $\bar{X}$ at $\x$. We say that $\x$ is a branch point of $X$ with respect to $\pi$ if there is a nonzero vector $\mathbf{v} \in T_\x \ \bar{X}$ with $\pi(\mathbf{v}) = \mathbf{0}$.
\end{definition}

\begin{lemma}
	\label{lemma:branch_point}
	For QCQP~\eqref{equ:op_new_obj3}, suppose $\rank(\bar{\C}) = 8$. 
	Then $\bar{\x} = [\bar{\x}; \bar{\mathbf{t}}]$ is not a branch point of $\bar{X}$ with respect to the mapping $\pi: \mathbf{v} \mapsto \bar{\Q} \mathbf{v}$.
\end{lemma}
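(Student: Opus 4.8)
The plan is to establish the contrapositive of the branch-point definition: I will show that the only vector $\mathbf{v}$ lying simultaneously in the tangent space $T_{\bar{\x}}\bar{X} = \ker(\nabla\mathbf{h}(\bar{\x}))$ and in $\ker(\pi)$, i.e. satisfying $\bar{\Q}\mathbf{v} = \mathbf{0}$, is $\mathbf{v} = \mathbf{0}$. Write $\mathbf{v} = [\mathbf{v}_e; \mathbf{v}_t]$ with $\mathbf{v}_e \in \mathbb{R}^9$ and $\mathbf{v}_t \in \mathbb{R}^3$. First I would exploit the block structure of $\bar{\Q}$: since its only nonzero block is the top-left $\bar{\C}$, the condition $\bar{\Q}\mathbf{v} = \mathbf{0}$ reduces to $\bar{\C}\mathbf{v}_e = \mathbf{0}$. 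Here the rank hypothesis enters decisively: combining $\rank(\bar{\C}) = 8$ with the noise-free identity $\bar{\C}\bar{\e} = \mathbf{0}$ (established in Lemma~\ref{lemma:noisefree}) shows that $\ker(\bar{\C})$ is the one-dimensional line spanned by $\bar{\e}$, so necessarily $\mathbf{v}_e = c\bar{\e}$ for some scalar $c$. In matricized form this says the essential-matrix perturbation is $\delta\E = c\bar{\E}$.

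The key step is to repackage the tangent condition $\nabla\mathbf{h}(\bar{\x})\mathbf{v} = \mathbf{0}$ as one compact matrix identity rather than grinding through the seven scalar rows of Eq.~\eqref{equ:grad_h}. Recognizing that $h_1,\dots,h_6$ are exactly the six independent entries of $\E\E^\top - [\mathbf{t}]_\times[\mathbf{t}]_\times^\top$ and that $h_7 = \mathbf{t}^\top\mathbf{t} - 1$, I would linearize at $\bar{\x}$ with $\delta\E = c\bar{\E}$ and $\delta\mathbf{t} = \mathbf{v}_t$. Using $[\mathbf{t}]_\times[\mathbf{t}]_\times^\top = \|\mathbf{t}\|^2\mathbf{I} - \mathbf{t}\mathbf{t}^\top$, the linearization of $h_7$ yields $\bar{\mathbf{t}}^\top\mathbf{v}_t = 0$, and the linearization of the first six constraints collapses to
\[
2c\,\bar{\E}\bar{\E}^\top = 2(\bar{\mathbf{t}}^\top\mathbf{v}_t)\mathbf{I} - \mathbf{v}_t\bar{\mathbf{t}}^\top - \bar{\mathbf{t}}\mathbf{v}_t^\top = -\left(\mathbf{v}_t\bar{\mathbf{t}}^\top + \bar{\mathbf{t}}\mathbf{v}_t^\top\right).
\]
Substituting the identity $\bar{\E}\bar{\E}^\top = \mathbf{I} - \bar{\mathbf{t}}\bar{\mathbf{t}}^\top$ (which follows from the spectral computation in the proof of Theorem~\ref{theorem:equivalent1} together with $\|\bar{\mathbf{t}}\| = 1$) gives $2c(\mathbf{I} - \bar{\mathbf{t}}\bar{\mathbf{t}}^\top) = -(\mathbf{v}_t\bar{\mathbf{t}}^\top + \bar{\mathbf{t}}\mathbf{v}_t^\top)$.

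Finally I would extract $c$ and $\mathbf{v}_t$ from this identity by two cheap projections. Taking the trace and using $\trace(\bar{\mathbf{t}}\bar{\mathbf{t}}^\top) = 1$ and $\bar{\mathbf{t}}^\top\mathbf{v}_t = 0$ gives $4c = 0$, hence $c = 0$ and $\mathbf{v}_e = \mathbf{0}$. With $c = 0$ the identity reduces to $\mathbf{v}_t\bar{\mathbf{t}}^\top + \bar{\mathbf{t}}\mathbf{v}_t^\top = \mathbf{0}$; right-multiplying by $\bar{\mathbf{t}}$ and invoking $\|\bar{\mathbf{t}}\|^2 = 1$ and $\bar{\mathbf{t}}^\top\mathbf{v}_t = 0$ forces $\mathbf{v}_t = \mathbf{0}$. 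Therefore $\mathbf{v} = \mathbf{0}$, so no nonzero tangent vector is annihilated by $\pi$, and $\bar{\x}$ is not a branch point. The main obstacle I anticipate is the bookkeeping in the middle step: correctly assembling the six scalar constraint linearizations into the compact matrix form and confirming that the rank-$8$ hypothesis is precisely what pins $\mathbf{v}_e$ down to a single scalar multiple of $\bar{\e}$. Once that reduction is secured, the trace and projection arguments are immediate.
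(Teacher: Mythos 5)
Your proposal is correct and follows essentially the same route as the paper: both exploit the block structure of $\bar{\Q}$ together with $\rank(\bar{\C})=8$ and $\bar{\C}\bar{\e}=\mathbf{0}$ to force $\mathbf{v}=[c\bar{\e};\mathbf{v}_t]$, and then show that the tangent condition $\nabla\mathbf{h}(\bar{\x})\mathbf{v}=\mathbf{0}$ yields $c=0$ and $\mathbf{v}_t=\mathbf{0}$. The only difference is presentational: the paper writes out the seven scalar equations~\eqref{equ:b1}--\eqref{equ:b7} and eliminates by hand, whereas you package them as the single matrix identity $2c\,\bar{\E}\bar{\E}^\top=-(\mathbf{v}_t\bar{\mathbf{t}}^\top+\bar{\mathbf{t}}\mathbf{v}_t^\top)$ and extract $c$ via the trace and $\mathbf{v}_t$ via projection onto $\bar{\mathbf{t}}$ --- a tidier but equivalent computation.
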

\begin{proof}
	It can be verified that $\pi(\mathbf{v}) = \mathbf{0} \Leftrightarrow \bar{\Q}\mathbf{v} = \mathbf{0} \Leftrightarrow \begin{bmatrix}
	\bar{\C} & \mathbf{0}_{9\times 3} \\
	\mathbf{0}_{3\times 9} & \mathbf{0}_{3\times 3}
	\end{bmatrix} \mathbf{v} = \mathbf{0} \Leftrightarrow \mathbf{v} = \begin{bmatrix}
	c \bar{\e} \\
	\mathbf{t}
	\end{bmatrix}$, where $c$ and $\mathbf{t}$ are free parameters. The last equivalence takes advantage of $\rank(\bar{\C}) = 8$. 
	If $\bar{\x}$ is a branch point, there should exist a nonzero vector $\mathbf{v} \in \ker(\nabla \mathbf{h}(\bar{\x}))$, i.e., $\nabla \mathbf{h}(\bar{\x}) \mathbf{v} = \mathbf{0}$. We will prove that such nonzero $\mathbf{v}$ does not exist. By substituting $\mathbf{v} = \begin{bmatrix}
	c \bar{\e} \\
	\mathbf{t}
	\end{bmatrix}$ into equation $\nabla \mathbf{h}_{\bar{\theta}}(\bar{\x}) \mathbf{v} = \mathbf{0}$ (see Eq.~\eqref{equ:grad_h}), we obtain a homogeneous linear system with unknowns $\mathbf{t} = [t_1, t_2, t_3]^\top$ and $c$
	\begin{subequations}
		\begin{empheq}[left=\empheqlbrace]{align}
			& \bar{\e}_1^\top \bar{\e}_1 c - \bar{t}_2 t_2 - \bar{t}_3 t_3 = 0, \label{equ:b1} \\
			& \bar{\e}_2^\top \bar{\e}_2 c - \bar{t}_1 t_1 - \bar{t}_3 t_3 = 0, \label{equ:b2} \\
			& \bar{\e}_3^\top \bar{\e}_3 c - \bar{t}_1 t_1 - \bar{t}_2 t_2 = 0, \label{equ:b3} \\
			& 2 \bar{\e}_1^\top \bar{\e}_2 c + \bar{t}_2 t_1 + \bar{t}_1 t_2 = 0, \label{equ:b4} \\
			& 2 \bar{\e}_1^\top \bar{\e}_3 c + \bar{t}_3 t_1 + \bar{t}_1 t_3 = 0, \label{equ:b5} \\
			& 2 \bar{\e}_2^\top \bar{\e}_3 c + \bar{t}_3 t_2 + \bar{t}_2 t_3 = 0. \label{equ:b6} \\
			& \bar{t}_1 t_1 + \bar{t}_2 t_2 + \bar{t}_3 t_3 = 0. \label{equ:b7}
		\end{empheq}
	\end{subequations}
	By eliminating $\mathbf{t}$ from Eqs.~\eqref{equ:b1}\eqref{equ:b2}\eqref{equ:b3}\eqref{equ:b7}, we obtain that $(\bar{\e}_1^\top \bar{\e}_1 + \bar{\e}_2^\top \bar{\e}_2 + \bar{\e}_3^\top \bar{\e}_3) c = 0$ and $c = 0$. By substituting $c=0$ into this equation system, it can be further verified that this equation system only has zeros as its solution. So $\mathbf{v}$ can only be a zero vector.
\end{proof}

\begin{theorem}
	\label{theorem:stability}
	For QCQP~\eqref{equ:op_new_obj3} and its Lagrangian dual problem~\eqref{equ:op_dual}, let $\bar{\C}$ being constructed by noise-free observations and assume that $\rank(\bar{\C}) = 8$. (i) There is zero-duality-gap whenever $\C$ is close enough to $\bar{\C}$. In other words, there exists a hypersphere of nonzero radius $\epsilon$ with center $\bar{\C}$, i.e., $\mathcal{B}(\bar{\C}, \epsilon) = \|\C - \bar{\C} \| \le \epsilon$, such that for any $\C \in \mathcal{B}(\bar{\C}, \epsilon)$ there is zero-duality-gap. (ii) Moreover, the semidefinite relaxation problem~\eqref{equ:op_dual} recovers the optimum of the original QCQP problem~\eqref{equ:op_new_obj3}.
\end{theorem}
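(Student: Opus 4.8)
The plan is to reduce the whole statement to a single application of Theorem~\ref{theorem:main} (Theorem~5.1 of~\cite{cifuentes2017local}) at the noise-free parameter $\bar{\theta} = \{\bar{\C}\}$, verify its four hypotheses there, and then translate the conclusion from the abstract parameter $\theta$ back to the data matrix $\C$. Three of the needed ingredients are already in place. Lemma~\ref{lemma:noisefree} certifies that $\bar{\theta}$ is a zero-duality-gap parameter with primal--dual optimizer $(\bar{\x}, \bar{\boldsymbol{\lambda}}) = ([\bar{\e}; \bar{\mathbf{t}}], \mathbf{0})$; Lemma~\ref{lemma:slater} supplies the restricted Slater condition RS under the standing assumption $\rank(\bar{\C}) = 8$; and Lemma~\ref{lemma:branch_point} supplies R3. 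Hence only the two regularity conditions R1 (Abadie constraint qualification) and R2 (smoothness of $\mathcal{W}$) remain to be checked before the conclusion can be invoked.

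I would dispatch R2 first, since it is essentially free. The constraints $h_1, \ldots, h_7$ of Eqs.~\eqref{equ:sub1}--\eqref{equ:sub7} involve only $\E$ and $\mathbf{t}$ and are entirely independent of the parameter $\theta = \{\C\}$, so the incidence set $\mathcal{W} = \{(\theta, \x) \mid \mathbf{h}_\theta(\x) = \mathbf{0}\}$ is the product $\Theta \times \bar{X}$. By Theorem~\ref{theorem:equivalent1} the feasible set $\bar{X}$ is (locally a double cover of) the normalized essential manifold, which is a smooth $5$-dimensional manifold; therefore $\mathcal{W}$ is a smooth manifold near $\bar{w} = (\bar{\theta}, \bar{\x})$ with $\dim_{\bar{w}} \mathcal{W} = \dim \Theta + 5 = \dim \Theta + \dim_{\bar{\x}} \bar{X}$, which is exactly R2.

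The remaining condition R1 is where I expect the real difficulty. The most direct route to the Abadie constraint qualification would be via LICQ, i.e.\ via linear independence of the seven constraint gradients displayed in Eq.~\eqref{equ:grad_h}; but LICQ is unavailable here, because the multiplier $\boldsymbol{\mu}$ built in the proof of Lemma~\ref{lemma:slater} is a nonzero left-null vector of $\nabla \mathbf{h}_{\bar{\theta}}(\bar{\x})$, so the seven gradients are linearly dependent and the Jacobian drops rank even though the codimension of $\bar{X}$ is $7$. Consequently $\acq_{\bar{X}}(\bar{\x})$ cannot be inferred from regularity of the presentation and must instead be attacked through a direct comparison of the Bouligand tangent cone $T_{\bar{\x}} \bar{X}$ with the linearized cone $\ker \nabla \mathbf{h}_{\bar{\theta}}(\bar{\x})$. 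I would approach this by parametrizing $\bar{X}$ locally through $(\mathbf{t}, \R) \mapsto [\mathbf{t}]_{\times} \R$, reading off $T_{\bar{\x}} \bar{X}$ from the differential of this parametrization, and then testing membership against the linearized cone. Reconciling this rank-deficient linearization with the genuine tangent structure of the essential manifold is the delicate heart of the argument, and it is the step I expect to be the main obstacle.

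Once all four hypotheses RS, R1, R2, R3 are verified at $\bar{\theta}$, Theorem~\ref{theorem:main} immediately yields that the SDR is tight for every $\theta$ in a neighborhood of $\bar{\theta}$. Since $\theta = \{\C\}$, such a neighborhood contains a ball $\mathcal{B}(\bar{\C}, \epsilon) = \{\C : \|\C - \bar{\C}\| \le \epsilon\}$ of some radius $\epsilon > 0$ on which zero-duality-gap holds, which is part~(i). For part~(ii) I would invoke the uniqueness clause of Theorem~\ref{theorem:main}: the perturbed QCQP~\eqref{equ:op_new_obj3} has a unique optimizer $\x_\theta$ and its relaxation has the unique optimizer $\x_\theta \x_\theta^\top$, so solving the SDP~\eqref{equ:op_dual} and extracting the rank-one factor recovers the global optimum of the original problem.
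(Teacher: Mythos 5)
Your skeleton is the same as the paper's: reduce Theorem~\ref{theorem:stability} to Theorem~\ref{theorem:main} at $\bar{\theta}=\{\bar{\C}\}$, take the zero-duality-gap certificate from Lemma~\ref{lemma:noisefree}, \emph{RS} from Lemma~\ref{lemma:slater}, \emph{R3} from Lemma~\ref{lemma:branch_point}, \emph{R2} from smoothness of the normalized essential manifold, and then read off parts (i) and (ii) from the conclusion and uniqueness clause of Theorem~\ref{theorem:main}. The one hypothesis you do not actually establish is \emph{R1}, and you say so yourself (``the step I expect to be the main obstacle''). That is a genuine gap: a proof that defers its self-identified hardest step is not complete. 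The paper closes \emph{R1} in one line by invoking \cite[Lemma~6.1]{cifuentes2017local} — the Abadie constraint qualification holds on a variety that is smooth and whose ideal is radical — rather than by any direct cone computation.

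Worse, the route you sketch for \emph{R1} — verifying $T_{\bar{\x}}\bar{X}=\ker\nabla\mathbf{h}_{\bar{\theta}}(\bar{\x})$ by a direct comparison — would not close the gap but expose one. Your own observation that the $\boldsymbol{\mu}$ of Lemma~\ref{lemma:slater} is a nonzero left-null vector of $\nabla\mathbf{h}_{\bar{\theta}}(\bar{\x})$ forces $\rank\nabla\mathbf{h}_{\bar{\theta}}(\bar{\x})\le 6$, hence $\dim\ker\nabla\mathbf{h}_{\bar{\theta}}(\bar{\x})\ge 6$, whereas the feasible set is locally a double cover of the $5$-dimensional essential manifold, so its tangent cone at $\bar{\x}$ is only $5$-dimensional. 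The two cones you propose to equate are therefore \emph{not} equal. (A concrete witness: $g=t_1e_{13}+t_2e_{23}+t_3e_{33}$, a component of $\mathbf{t}^\top\E=\mathbf{0}^\top$, vanishes on $\bar{X}$, yet for $\bar{\mathbf{t}}=[0,0,1]^\top$ its gradient at $\bar{\x}$ points along $e_{33}$ and lies in $\ker\nabla\mathbf{h}_{\bar{\theta}}(\bar{\x})$ but not in $T_{\bar{\x}}\bar{X}$.) The constraint qualification needed in \cite{cifuentes2017local} must therefore be taken in the intrinsic sense — relative to the vanishing ideal of the variety, not the linearization of the seven given generators — which is precisely what the smoothness-plus-radicality argument the paper cites is designed to deliver. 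Your plan would have to be redirected to that formulation before it could succeed.
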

\begin{proof}
	From Lemma~\ref{lemma:noisefree}, when the point correspondences are noise-free, the relaxation is tight. From the proof procedure, the optimum is $\bar{\x} = [\bar{\e}; \bar{\mathbf{t}}]$, which uniquely determines the zero-duality-gap parameter $\bar{\C}$. 
	Our proof is an application of the Theorem~\ref{theorem:main}. 
	The four conditions needed in Theorem~\ref{theorem:main} are satisfied:
	(\emph{RS}) From Lemma~\ref{lemma:slater}, the restricted Slater is satisfied. 
	(\emph{R1}) The equality constraints in the primal problem forms a variety. Abadie constraint qualification (ACQ) holds everywhere since the variety is smooth and the ideal is radical, see~\cite[Lemma~6.1]{cifuentes2017local}. 
	(\emph{R2}) smoothness. The normalized essential manifold is smooth~\cite{tron2017space}. 
	(\emph{R3}) From Lemma~\ref{lemma:branch_point}, $\bar{\x}$ is not a branch point of $\bar{X}$.
\end{proof}
Now we complete the proof that the SDR of the proposed QCQP is tight under low noise observations. 
In other words, when the observation noise is small, Theorem~\ref{theorem:stability} guarantees that the optimum of the original QCQP can be found by optimizing its SDR. 
Finding the noise bounds that SDR can tolerate is still an open problem. In Section~\ref{sec:exp}, we will demonstrate that the SDR is tight for large noise levels which are much larger than that in actual occurrence.

\section{Robust $N$-Point Method}
\label{sec:robust}

In previous sections, we propose a  solution to non-minimal case essential matrix estimation. 
However, the presence of outliers in point correspondences is inevitable due to the ambiguities of feature points' local appearance. 
When the correspondences are contaminated by outliers, the solution to minimize an algebraic error will be biased. 
To apply our method for outlier-contamination scenarios, a robust loss instead of least-square loss can be used in the objective function. 
We take advantage of M-estimators in robust statistics~\cite{huber1981robust}, and modify problem~\eqref{equ:op_new_obj2} as
\begin{align}
\label{equ:op_new_obj4}
\min_{\E \in \mathcal{M}_\E} & \sum_{i=1}^N \rho (\f_i^\top \E \f'_i),
\end{align}
where $\rho(\cdot)$ is a robust loss function. 

The selection of an appropriate robust loss function is critical to this problem. 
A natural choice for loss function would be the $\ell_0$ norm, i.e., $\rho(x; \tau) = \begin{cases}
0, & \text{if } \ |x| \le \tau, \\
1, & \text{otherwise}.
\end{cases}$. 
It turns out to be the inlier set maximization problem, which has been proved to be a computationally expensive problem~\cite{chin2020robust}. 
The presented approach below can accommodate many M-estimators in a computationally efficient framework. First we will take the scaled Welsch (Leclerc) function~\cite{geman1987statistical,zach2017iterated} as an example
\begin{align}
\rho(x; \tau) = \frac{\tau^2}{2}\left(1 - e^{-x^2/\tau^2} \right), 
\end{align}
where $\tau \in (0, +\infty)$ is a scale parameter.

Problem~\eqref{equ:op_new_obj4} can be optimized by local optimization methods directly. However, local optimization is prone to local optima. To solve this problem, our approach is based on Black-Rangarajan duality between M-estimators and line processes~\cite{black1996unification}. The duality introduces an auxiliary variable 
$w_i$ for $i$-th point correspondence and optimizes a joint objective
over the essential matrix $\E$ and the line process variables $\mathbb{W} = \{w_i\}_{i=1}^N$
\begin{align}
\label{equ:op_new_obj5}
\min_{\E \in \mathcal{M}_\E, \mathbb{W}} \ \ \sum_{i=1}^N w_i (\f_i^\top \E \f'_i)^2 + \sum_{i=1}^N \Psi(w_i).
\end{align}
Here $\Psi(w_i)$ is a loss on ignoring $i$-th correspondence.
$\Psi(w_i)$ tends to zero when the $i$-th correspondence is active and to one
when it is inactive. A broad variety of robust estimators $\rho(\cdot)$ have corresponding loss functions $\Psi(\cdot)$ such that problems~\eqref{equ:op_new_obj4} and~\eqref{equ:op_new_obj5} are equivalent with respect to $\E$: optimizing either of the two problem yields the same essential matrix. 
The form of problem~\eqref{equ:op_new_obj5} enables efficient and scalable optimization by an iterative solution of the weighted $N$-point method. This yields a general approach that can accommodate many robust nonconvex functions $\rho(\cdot)$. 

The loss function that makes problems~\eqref{equ:op_new_obj4} and~\eqref{equ:op_new_obj5} equivalent with respect
to $\E$ is
\begin{align}
\Psi(w_i) = \frac{\tau^2}{2}\left( 1 + w_i \log(w_i) - w_i \right).
\end{align}
Objective~\eqref{equ:op_new_obj5} is biconvex on $\E, \mathbb{W}$. When $\E$ is fixed, the optimal value of each $w_i$ has a closed-form solution. When variables $\mathbb{W}$ are fixed, objective~\eqref{equ:op_new_obj5} turns into a weighted $N$-point problem, which can be efficiently solved by the proposed method. 
Specifically, we exploit this special structure and optimize the objective by alternatively updating variable sets $\E$ and $\mathbb{W}$. As a block coordinate descent algorithm, this alternating minimization
scheme provably converges.
By fixing $\E$, the optimal value of each $w_i$ is given by
\begin{align}
\label{equ:weight}
w_i = e^{-(\f_i^\top \E \f'_i)^2/\tau^2}.
\end{align}
This can be verified by substituting Eq.~\eqref{equ:weight} into objective~\eqref{equ:op_new_obj5}, which yields a constant $1$ with respect to $\E$. 
Thus optimizing objective~\eqref{equ:op_new_obj5} yields a solution $\E$ that is also optimal for the original objective~\eqref{equ:op_new_obj4}. 
The line process theory~\cite{black1996unification} provides general formulations to calculate $w_i$ for a broad variety of robust loss functions $\rho(x; \tau)$. The alternating minimization of a line process is illustrated in Fig.~\ref{fig:m-estimator}.

\begin{figure}[htbp]
	\begin{center}
	{
		\includegraphics[width=0.99\linewidth]{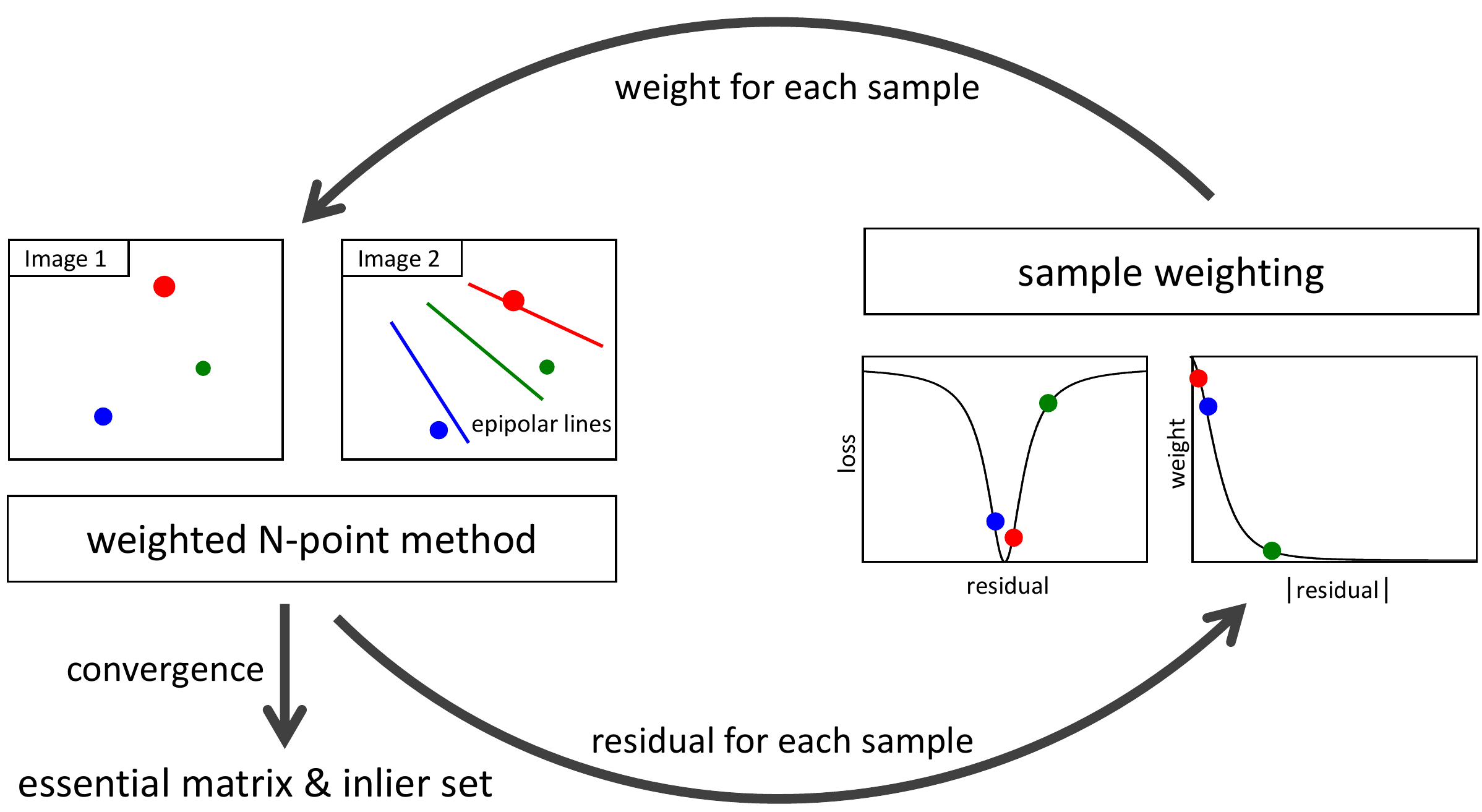}
	}
	\end{center}
	\vspace{-0.15in}
	\caption{Robust estimation of essential matrix by solving a line process of an M-estimator. In left part, the size of a feature point represents its weight.}
	\label{fig:m-estimator}
\end{figure}

It is worth mentioning that the loss function in our robust $N$-point method is not limited to Welsch function. The Black-Rangarajan duality~\cite{black1996unification} provides a theory to build the relation between M-estimators and line processes. Based on this general-purpose framework, integrating other robust loss functions is essentially the same as the Welsch function. Typical robust loss functions include Cauchy (Lorentzian), Charbonnier (pseudo-Huber, $\ell_1$-$\ell_2$), Huber, Geman-McClure, smooth truncated quadratic, truncated quadratic, Tukey's biweight functions, etc, see Fig.~\ref{fig:kernel1}. The Black-Rangarajan duality of these loss functions can be found in~\cite{black1996unification,zach2017iterated}.

\begin{figure}[htbp]
	\begin{center}
	\subfigure[Different loss functions]
	{
		\includegraphics[width=0.475\linewidth]{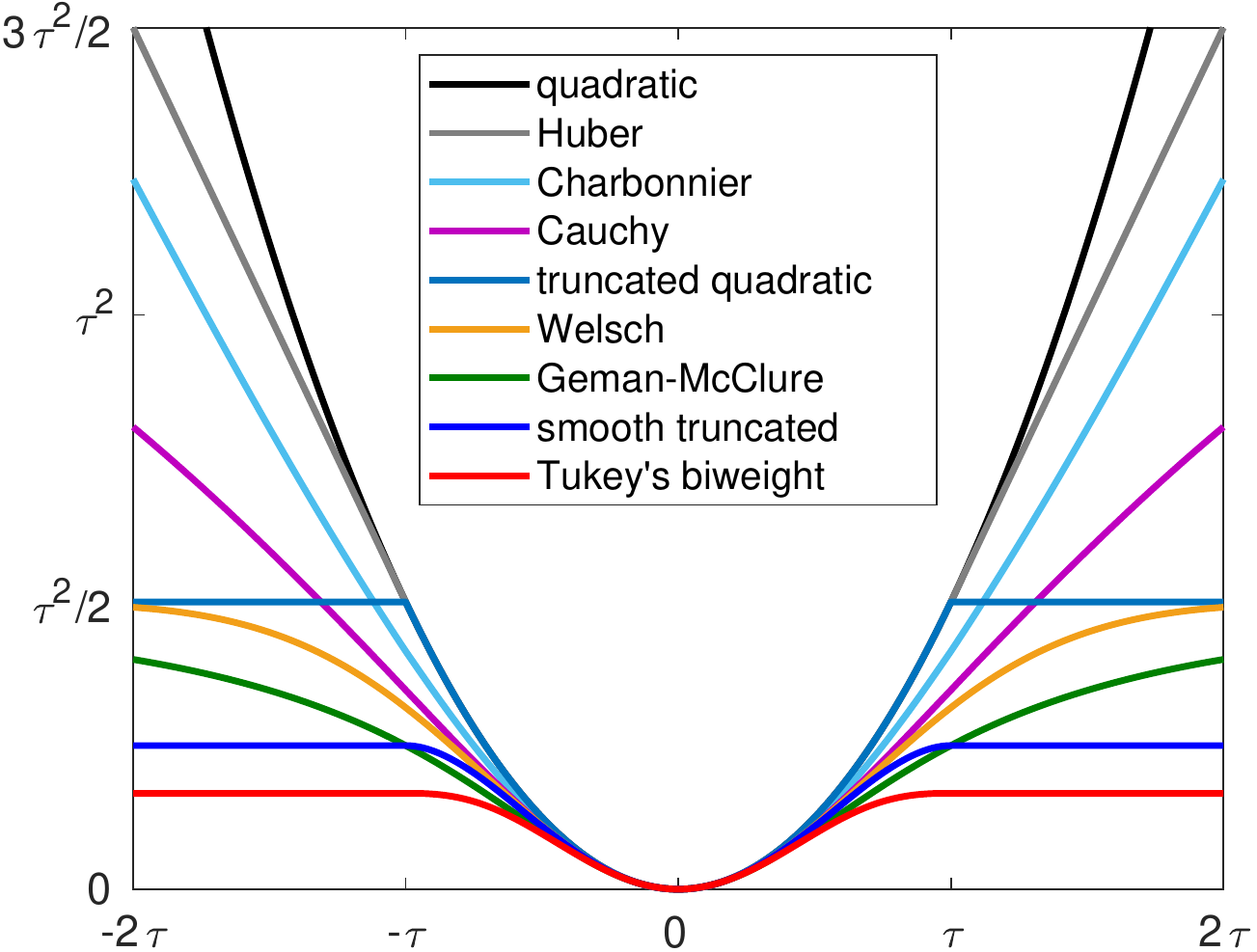}
		\label{fig:kernel1}
	}
	\subfigure[Welsch functions]
	{
		\includegraphics[width=0.45\linewidth]{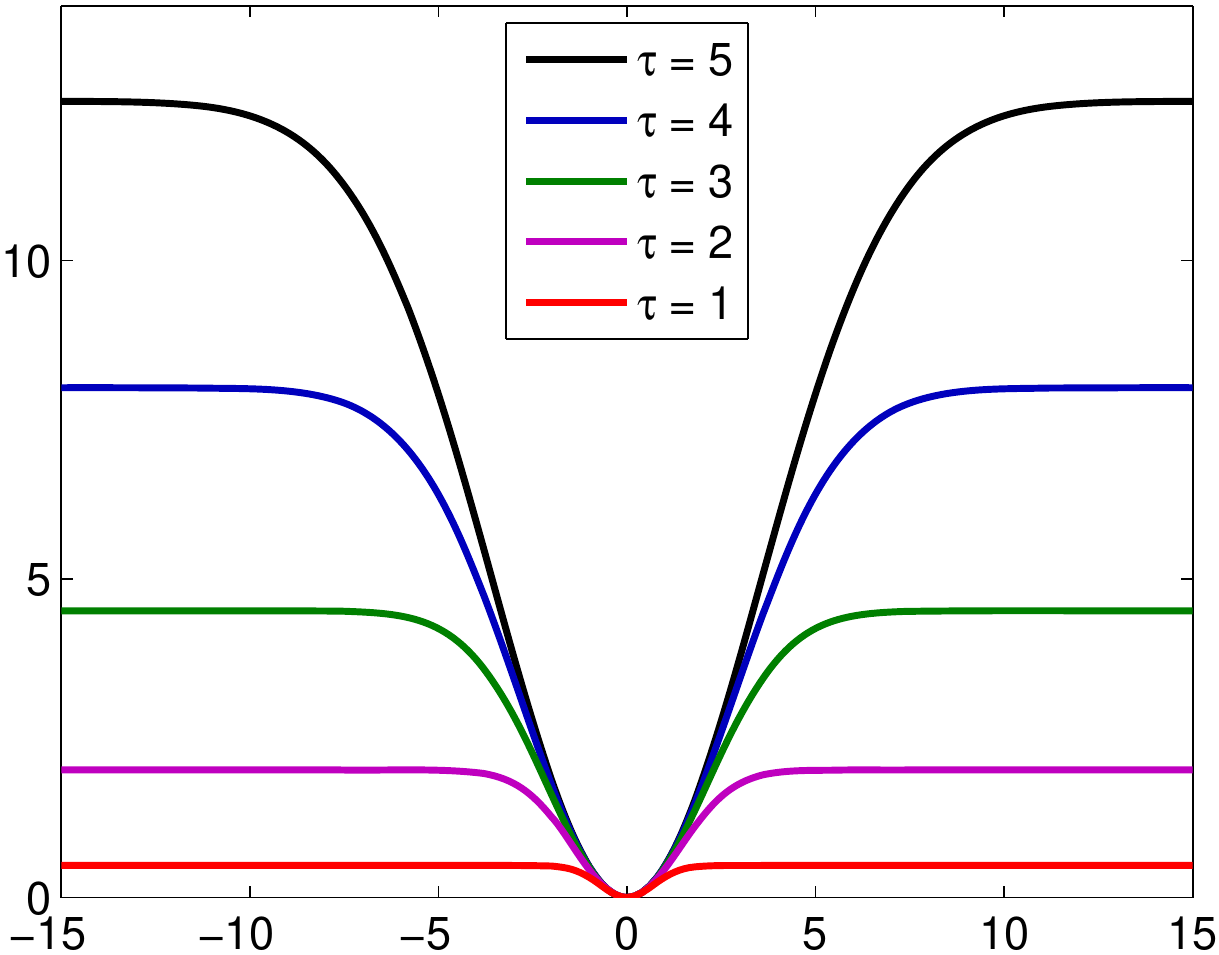}
		\label{fig:kernel2}
	}
	\end{center}
	\vspace{-0.15in}
	\caption{Loss functions. (a) Different loss function. (b) Welsch functions~\cite{black1996unification} with different scale parameters $\tau$.}
\end{figure}

Objective~\eqref{equ:op_new_obj5} is non-convex and its shape is controlled by the parameter $\tau$ of loss function $\rho(\cdot)$. To alleviate the effect of local minima, we employ graduated non-convexity (GNC) which is dating from 1980s~\cite{black1987visual,yang2020graduated}. 
The idea is to start from a problem that is easier to solve, then progressively deformed to the actual objective while tracking the solution along the way.
Large $\tau$ makes the objective function smoother and allows many correspondences to participate in the optimization even when they are not fit well by the essential matrix $\E$, see Fig.~\ref{fig:kernel2}. 
Our method begins with a very large value $\tau$. 
Over the iterations, $\tau$ is automatically decreased, gradually introducing non-convexity into the objective. 
The line process together with the GNC strategy is not guaranteed to obtain the global optimum for the original non-convex problem~\eqref{equ:op_new_obj4}. They can obtain a good solution if not the globally optimal one in most cases when the outlier ratio is below a threshold. 
The outline of the robust $N$-point method is shown in Algorithm~\ref{alg:alg2}.

\begin{algorithm}[htbp]
	\caption{Robust $N$-Point Method} \label{alg:alg2}
	\KwIn{correspondences $\{(\f_i, \f'_i) \}^{N}_{i=1}$, parameter $\tau_{\text{min}}$}
	\KwOut{Essential matrix $\E^\star$, rotation $\R^\star$, translation $\mathbf{t}^\star$, and inlier set $\mathcal{I}$.}
	Initialize $w_i \leftarrow 1, \forall i = 1, \cdots, N$\;
	Initialize $\tau^2 \leftarrow 1\times 10^3$\;
	\Repeat{convergence or $\tau < \tau_{\text{min}}$}{
		Update $\E$ by weighted $N$-point method in Algorithm~\ref{alg:alg1}\;
		Update $w_i$ by Eq.~\eqref{equ:weight}\;
		set $\tau^2 \leftarrow \tau^2/1.3$\;
	}
	Generate inlier set $\mathcal{I} = \{i | w_i > 0.1\}$\;
	Calculate $\E^\star$ by using inlier set $\mathcal{I}$ and unweighted $N$-point method in Algorithm~\ref{alg:alg1}\;
	Decompose $\E^\star$ to obtain $\R^\star$ and $\mathbf{t}^\star$.
\end{algorithm}

\section{Experimental Results}
\label{sec:exp}

{\bf Setting for the $N$-point method:}
We compared the proposed $N$-point method with several state-of-the-art methods on synthetic and real data. 
Specifically, we compared our method with $5$ classical or state-of-the-art methods: 
\begin{itemize}
	\item a general five-point method \texttt{5pt}~\cite{nister2004efficient} for relative pose estimation. 
	
	\item two general methods for fundamental matrix estimation, including seven-point method \texttt{7pt}~\cite{hartley2003multiple} and eight-point method \texttt{8pt}~\cite{hartley1995defence}.
	
	\item eigenvalue-based method proposed by Kneip and Lynen~\cite{kneip2013direct} which is referred to as \texttt{eigen} and a certifiably globally optimal solution by Briales \textit{et al.}~\cite{briales2018certifiably} which is referred to as \texttt{SDP-Briales}. 
\end{itemize}

Among these methods, the implementation of \texttt{SDP-Briales} is provided by the authors. The implementations of other comparison methods are provided by \texttt{OpenGV}~\cite{kneip2014opengv}. 
The method \texttt{eigen} needs initialization, and it is initialized using \texttt{8pt} method. 
Our method and all comparison methods in \texttt{openGV} are implemented in C++. The \texttt{SDP-Briales} method is implemented in Matlab, and the optimization in it relies on an SDP solver with hybrid Matlab/C++ programming.

The \texttt{eigen} method in \texttt{openGV} implementation provides rotation only. 
Once the relative rotation has been obtained, we can calculate the translation $\t$. Recall that $\mathbf{f}_i$ and $\mathbf{f}'_i$ represent bearing vectors of a point correspondence across two images. From Eqs.~\eqref{equ:essential} and~\eqref{equ:epipolar}, the epipolar constraint can be written as 
\begin{align}
\mathbf{f}^\top_i [ \t ]_{\times} \mathbf{R} \mathbf{f}'_i = 0.
\end{align}
Since $\mathbf{R}$ has been calculated, each point correspondence provides a linear constraint on the entries of the translation vector $\t$. Due to the scale-ambiguity, the translation has only two DoFs. After DLT, a normalized version of $\t$ can be recovered by simple linear derivation of the right-hand null-space vector (e.g. via singular value decomposition). Given $N$ ($N
\ge 2$) point correspondences, the least squares fit of $\t$ can be determined by considering the singular vector corresponding to the smallest singular value.

{\bf Setting for robust $N$-point method:}
We compared the proposed robust $N$-point method with \texttt{RANSAC+5pt}, \texttt{RANSAC+7pt} and \texttt{RANSAC+8pt}, which stands for integrating \texttt{5pt}, \texttt{7pt} and \texttt{8pt} into the RANSAC framework~\cite{Fischler81}, respectively.
All comparison methods are provided by \texttt{openGV}~\cite{kneip2014opengv}, and the default parameters are used.
In the experiment on real-world data, we also compare our method with a branch-and-bound method \texttt{BnB-Yang}\footnote{The C++ code is available from \url{http://jlyang.org/}}~\cite{yang2014optimal}. The angular error threshold in this method was set as $0.002$~radians. 

To evaluate the performance of the proposed method, we separately compared the relative rotation and translation accuracy. We follow the criteria defined in~\cite{zheng2013revisiting} for quantitative evaluation. 
Specifically, 
\begin{itemize}
	\item the angle difference for rotations is defined as
	\begin{align}
	\varepsilon_\text{rot} [\text{degree}] = \arccos \left(\frac{\trace(\R_{\text{true}}^\top \R^\star) - 1}{2} \right) \cdot \frac{180}{\pi}, \nonumber
	\end{align}
	\item and the translation direction error is defined as 
	\begin{align}
	\varepsilon_\text{tran} [\text{degree}] = \arccos\left(\frac{\t_{\text{true}}^\top \t^\star} { \|\t_{\text{true}}\| \cdot \|\t^\star\|}\right) \cdot \frac{180}{\pi}. \nonumber
	\end{align}
\end{itemize}
In above criteria, $\R_{\text{true}}$ and $\R^\star$ are the ground truth and estimated rotation, respectively; $\t_{\text{true}}$ and $\t^\star$ are the ground truth and estimated translation, respectively.

\subsection{Efficiency of $N$-Point Method}
The \texttt{SDPA} solver~\cite{yamashita2012latest} was adopted as an SDP solver in our methods. 
All experiments were performed on an Intel Core i7 CPU running at $2.40$ GHz. 
The number of point correspondences is fixed to $100$. 
The SDP optimization takes about $5$~ms. In addition, it takes about $1$~ms for other procedures in our method, including problem construction, optimal essential matrix recovery, and pose decomposition. In summary, the runtime of our method is about $6$~ms. 
 
We compared the proposed method with several state-of-the-art methods~\cite{hartley2009global,chesi2009camera,briales2018certifiably}, which also aim to find the globally optimal relative pose. 
The efficiency comparison is shown in Table~\ref{tab:eff_comp2}. 
It can be seen that our method is $2\sim 3$ orders of magnitude faster than comparison methods. 
The superior efficiency makes our method the first globally optimal method that can be applied to large scale structure-from-motion and realtime SLAM applications.
\setlength{\tabcolsep}{5pt}
\begin{table}[htbp]
	\caption{Efficiency comparison with other globally optimal methods. The last column is the normalized runtime by setting \texttt{ours} as $1$.}
	\label{tab:eff_comp2}
	\vspace{-0.1in}
	\centering
	\begin{tabular}{lccc}
		\toprule
		method  & optimization & runtime & norm. runtime \\
		\midrule
		Hartley \& Kahl~\cite{hartley2009global} & BnB & $>7$ s  & $>$ $1000$ \\
		Chesi~\cite{chesi2009camera} & LMI & $1.15$~s & $190$ \\
    	\texttt{SDP-Briales}~\cite{briales2018certifiably} & SDP & $1$~s & $ 160$ \\
		\texttt{ours} & SDP & $6$~ms & $1$ \\
		\bottomrule
	\end{tabular}
\end{table}
\setlength{\tabcolsep}{1.4pt}

Since both the methods of \texttt{SDP-Briales} and \texttt{ours} take advantage of SDP optimization, further comparison between them is reported in Table~\ref{tab:eff_comp}. It can be seen that our method has a much simpler formulation in terms of numbers of variables and constraints. It is not surprising that our method has significantly better efficiency.

\setlength{\tabcolsep}{5pt}
\begin{table}[htbp]
	\caption{SDP formulations comparison. For domain $\mathcal{S}^n$, the number of variable is $n(n+1)/2$.}
	\label{tab:eff_comp}
	\vspace{-0.1in}
	\centering
	\begin{tabular}{lccc}
		\toprule
		method  & domain & \#variable & \#constraint  \\
		\midrule
		\texttt{SDP-Briales}~\cite{briales2018certifiably} & $\mathcal{S}^{40}$  & $820$ & $536$\\
		\texttt{ours} & $\mathcal{S}^{12}$  & $78$ &  $7$ \\
		\bottomrule
	\end{tabular}
\end{table}
\setlength{\tabcolsep}{1.4pt}

\subsection{Accuracy of $N$-Point Method}
\subsubsection{Synthetic Data}
\label{sec:syn_data1}

To thoroughly evaluate our method's performance, we perform experiments on synthetic scenes in a similar manner to~\cite{kneip2013direct}. 
We generate random scenes by first fixing the position of the first frame to the origin and its orientation to the identity. The translational offset of the second frame is chosen with uniformly distributed random direction and a maximum magnitude of $2$. The orientation of the second frame is generated with random Euler angles bounded to $0.5$~radians in absolute value. This generates random relative poses as they would appear in practical situations. Point correspondences result from uniformly distributed random points around the origin with a distance varying between $4$ and $8$, transforming those points into both frames.
Then a virtual camera is defined with a focal length of $800$~pixels. 
Gaussian noise is added by perturbing each point correspondence. The standard deviation of the Gaussian noise is referred to as the noise level.

First, we test image noise resilience. 
For each image noise level, we randomly generate synthetic scenes and repeat the experiments $1000$ times. The number of correspondences is fixed to $10$, and the step size of the noise level is $0.1$ pixels.
The results for all methods with varying image noise levels are shown in Fig.~\ref{fig:exp_sythetic_noise}.  
It can be seen that our method consistently has smaller rotation error $\varepsilon_{\text{rot}}$ and translation error $\varepsilon_{\text{tran}}$ than other methods. Moreover, our method and \texttt{eigen} significantly outperform \texttt{5pt}, \texttt{7pt} and \texttt{8pt}. This result demonstrates the advantage of non-minimal solvers in terms of accuracy.

\begin{figure}[tbp]
	\begin{center}
	\subfigure[rotation error $\varepsilon_\text{rot}$]
	{
		\includegraphics[width=0.47\linewidth]{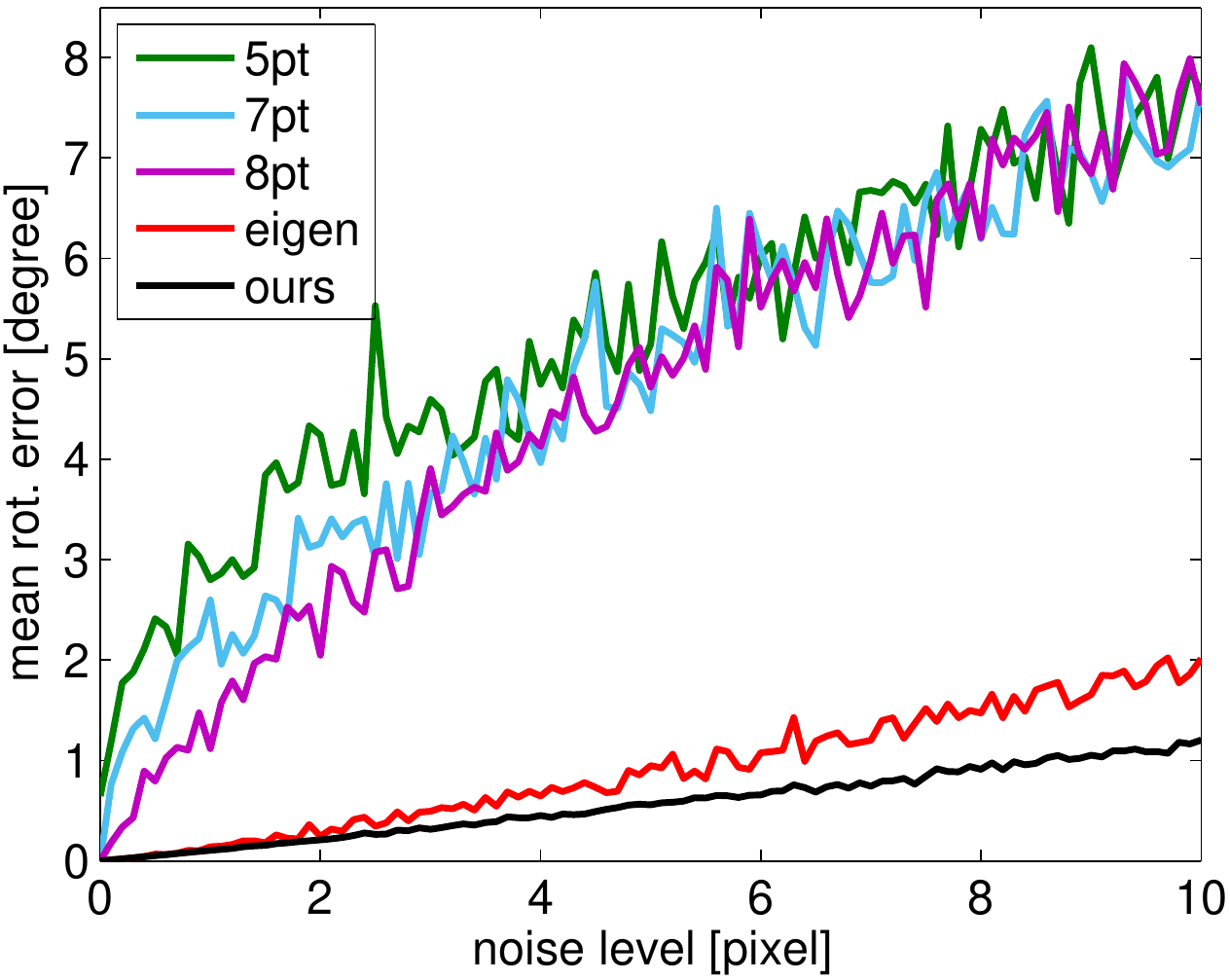}
	}
	\subfigure[translation error $\varepsilon_\text{tran}$]
	{
		\includegraphics[width=0.475\linewidth]{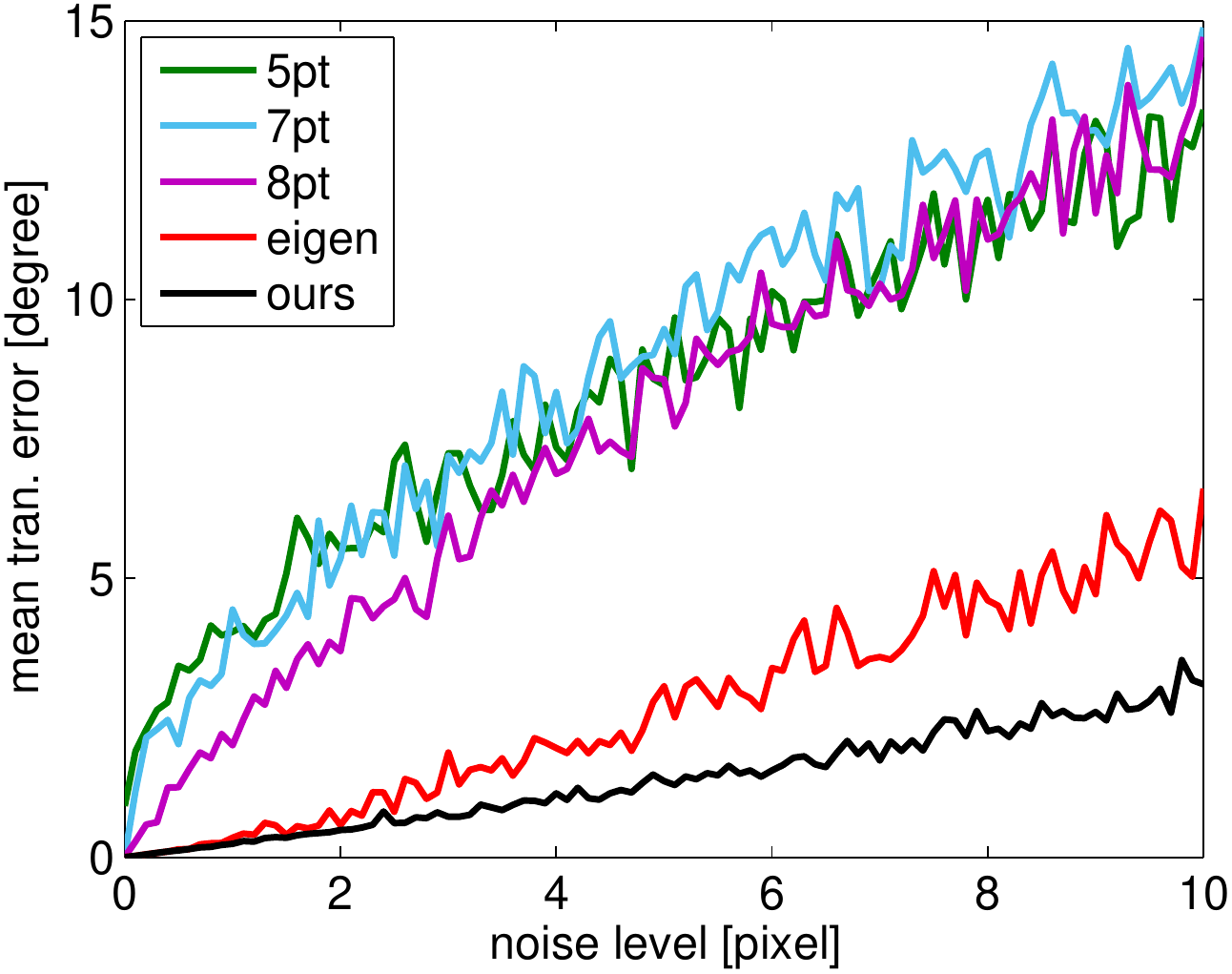}
	}
	\end{center}
	\vspace{-0.15in}
	\caption{Relative pose accuracy with respect to image noise levels. 
	}
	\label{fig:exp_sythetic_noise}
\end{figure}

Second, we set the noise level as $5$ pixels and vary the number $N$ of point correspondences. The step size of the correspondence number is $5$.
The methods \texttt{5pt}, \texttt{7pt}, and \texttt{8pt} can only take a small subset of the point correspondences. 
In contrast, \texttt{eigen} and \texttt{ours} utilize all the point correspondences. 
To make a fair comparison, we randomly sample minimal number of point correspondences for \texttt{5pt}, \texttt{7pt}, and \texttt{8pt}, and repeat $20$ times for each method. Then we find the optimal relative pose among them. Since all point correspondences are inliers,  we cannot use the maximal inlier criterion to find the optimal rotation as that in the traditional RANSAC framework. Instead, we use the algebraic error to find the optimal relative rotation for these methods.

The pose estimation accuracy with respect to the number of point correspondences is shown in Fig.~\ref{fig:exp_sythetic_num}. 
We have the following observations: 
(1) The errors of \texttt{eigen} and \texttt{ours} decrease when increasing the numbers of point correspondences. It further verifies the effectiveness of non-minimal solvers. 
(2)  When $N > 20$, \texttt{eigen} and \texttt{ours} have significantly smaller errors than other methods, and our method has the smallest rotation and translation error among all methods.
(3) The error curve of \texttt{Eigen} has oscillation due to local minima when the noise level is large. In contrast, the error curve of our method is smooth for any noise level.

\begin{figure*}[htbp]
	\begin{center}
		\subfigure[$\varepsilon_\text{rot}$, noise level: $2.5$ pix]
		{
			\includegraphics[height=0.18\linewidth]{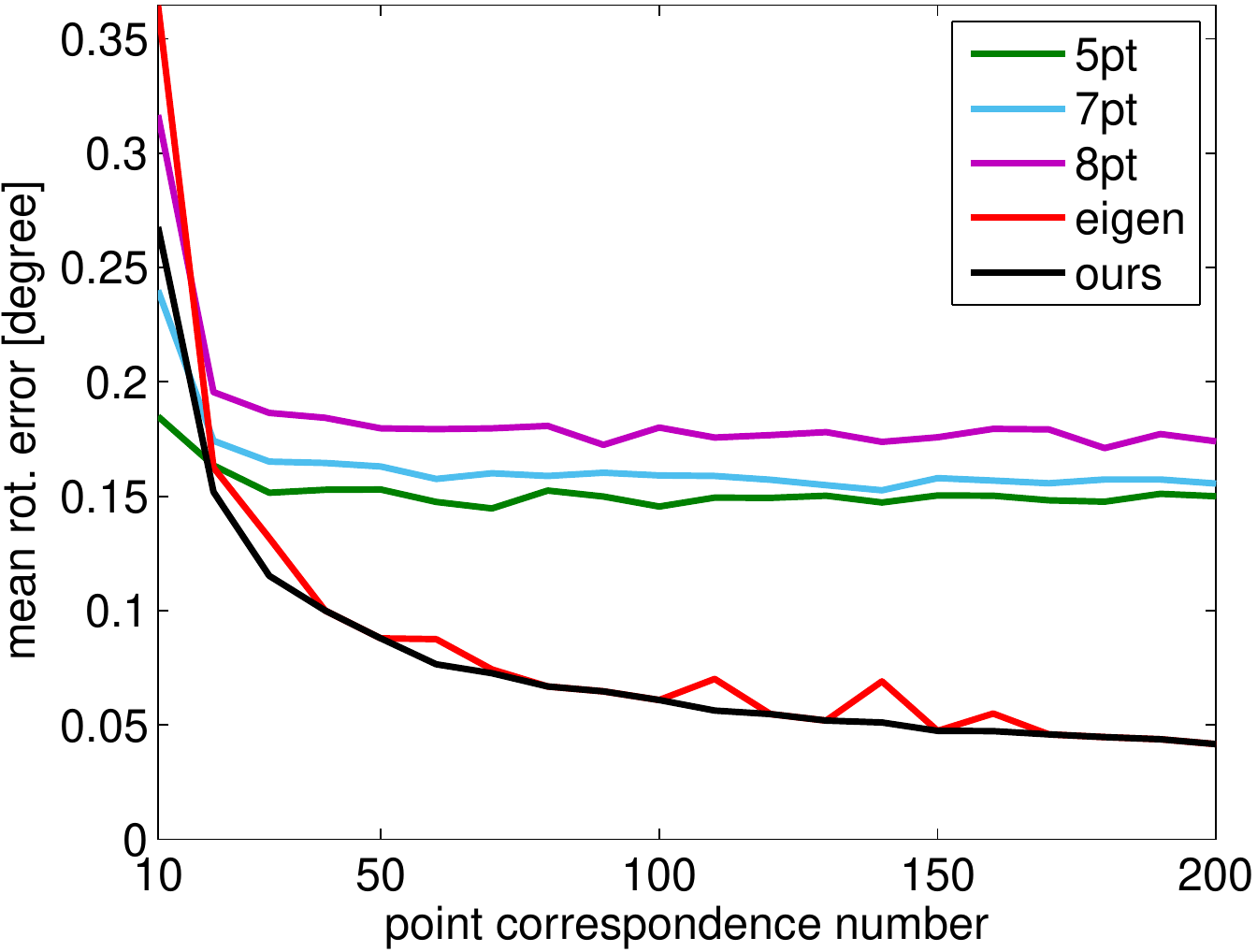}
		}
		\subfigure[$\varepsilon_\text{rot}$, noise level: $5$ pix]
		{
			\includegraphics[height=0.18\linewidth]{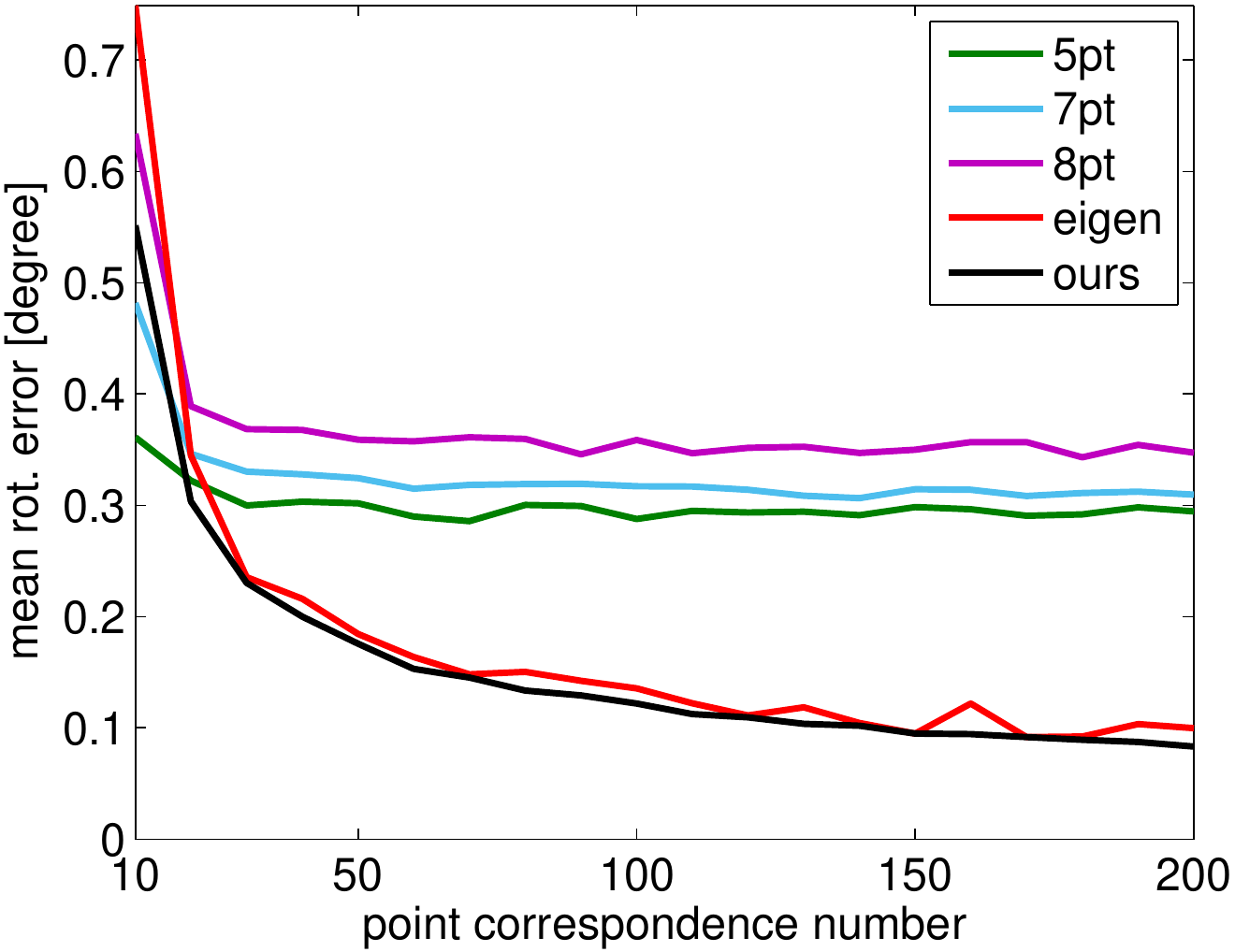}
		}
		\subfigure[$\varepsilon_\text{rot}$, noise level: $10$ pix]
		{
			\includegraphics[height=0.18\linewidth]{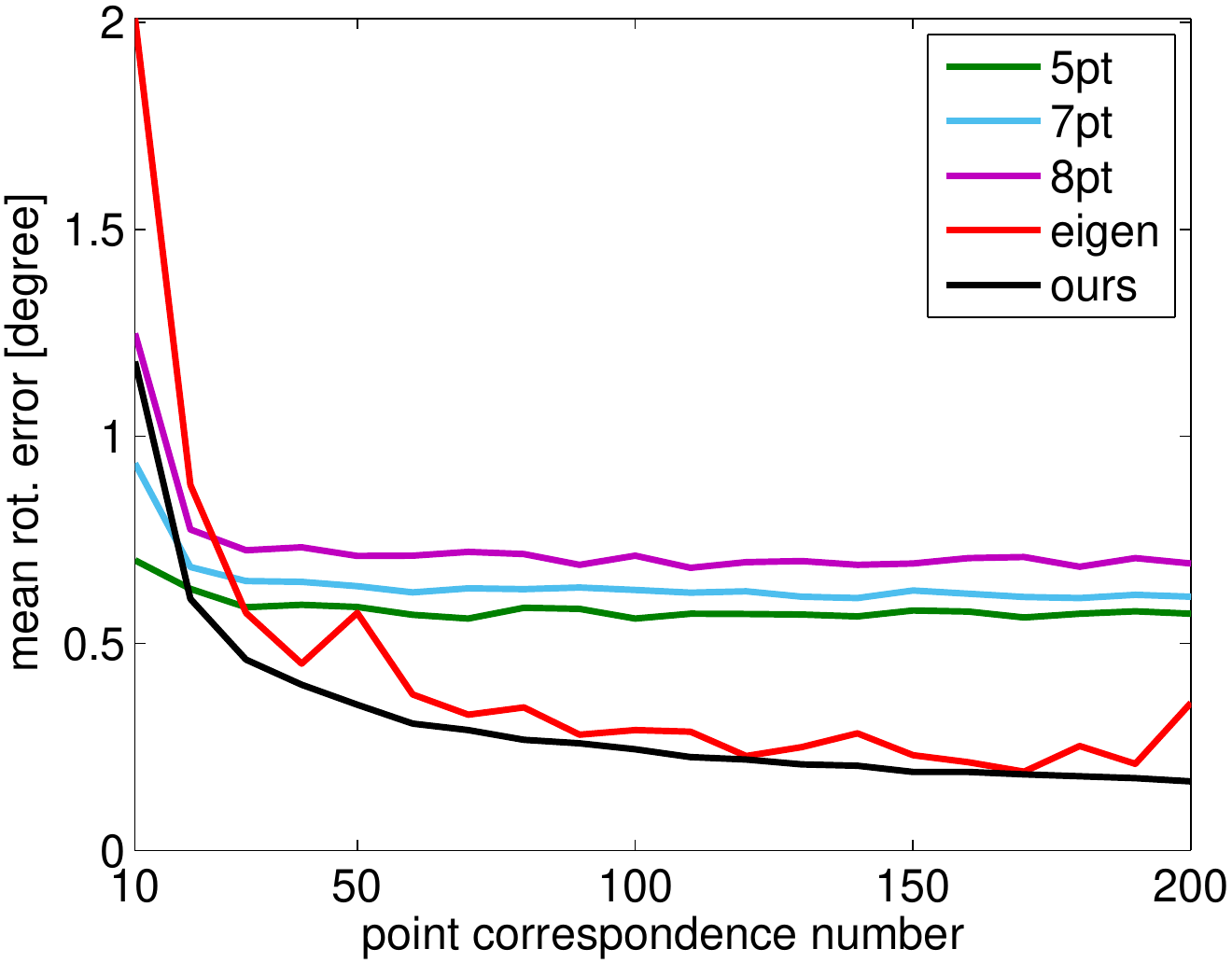}
		}
		\subfigure[$\varepsilon_\text{rot}$, noise level: $20$ pix]
		{
			\includegraphics[height=0.18\linewidth]{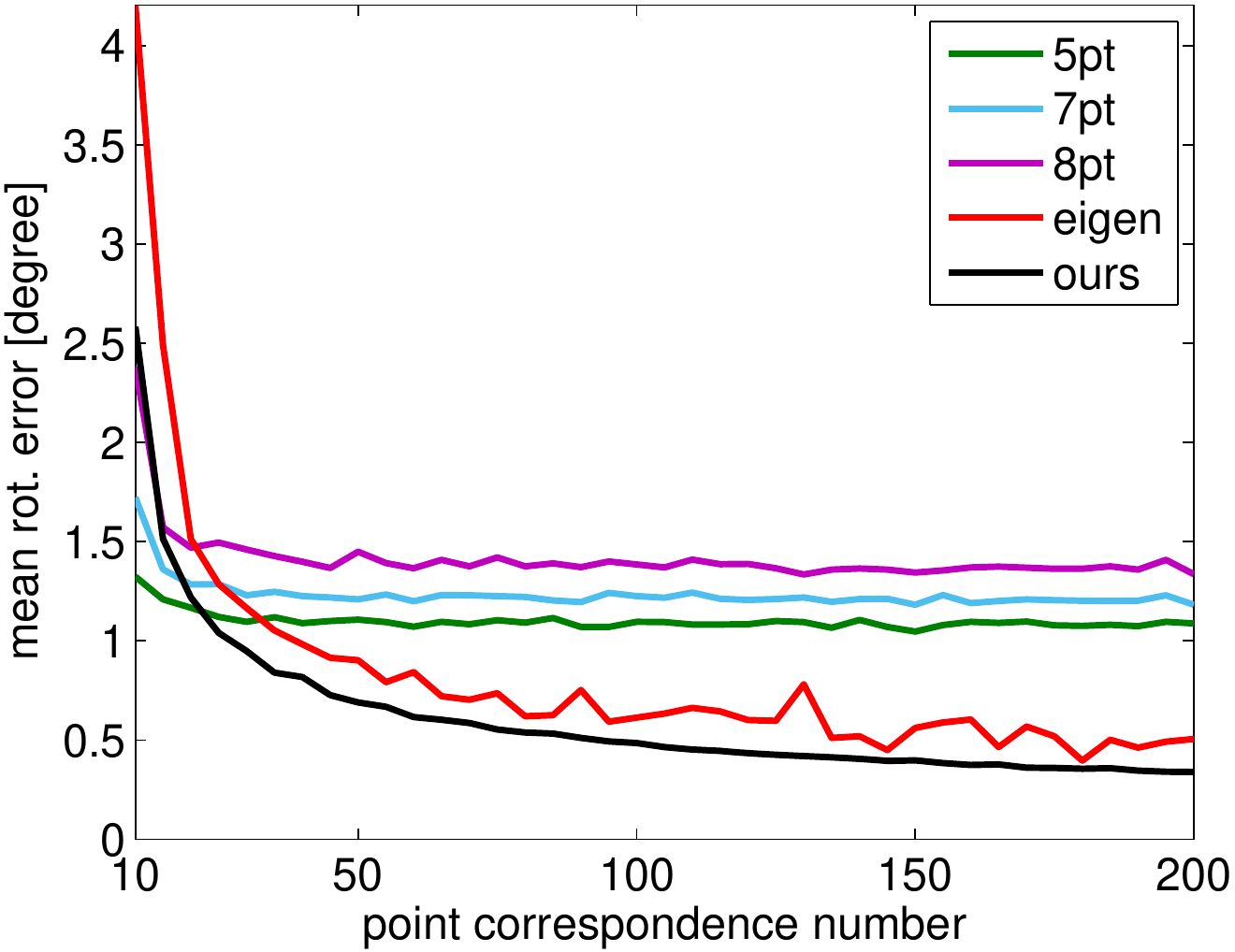}
		}
		\subfigure[$\varepsilon_\text{tran}$, noise level: $2.5$ pix]
		{
			\includegraphics[height=0.18\linewidth]{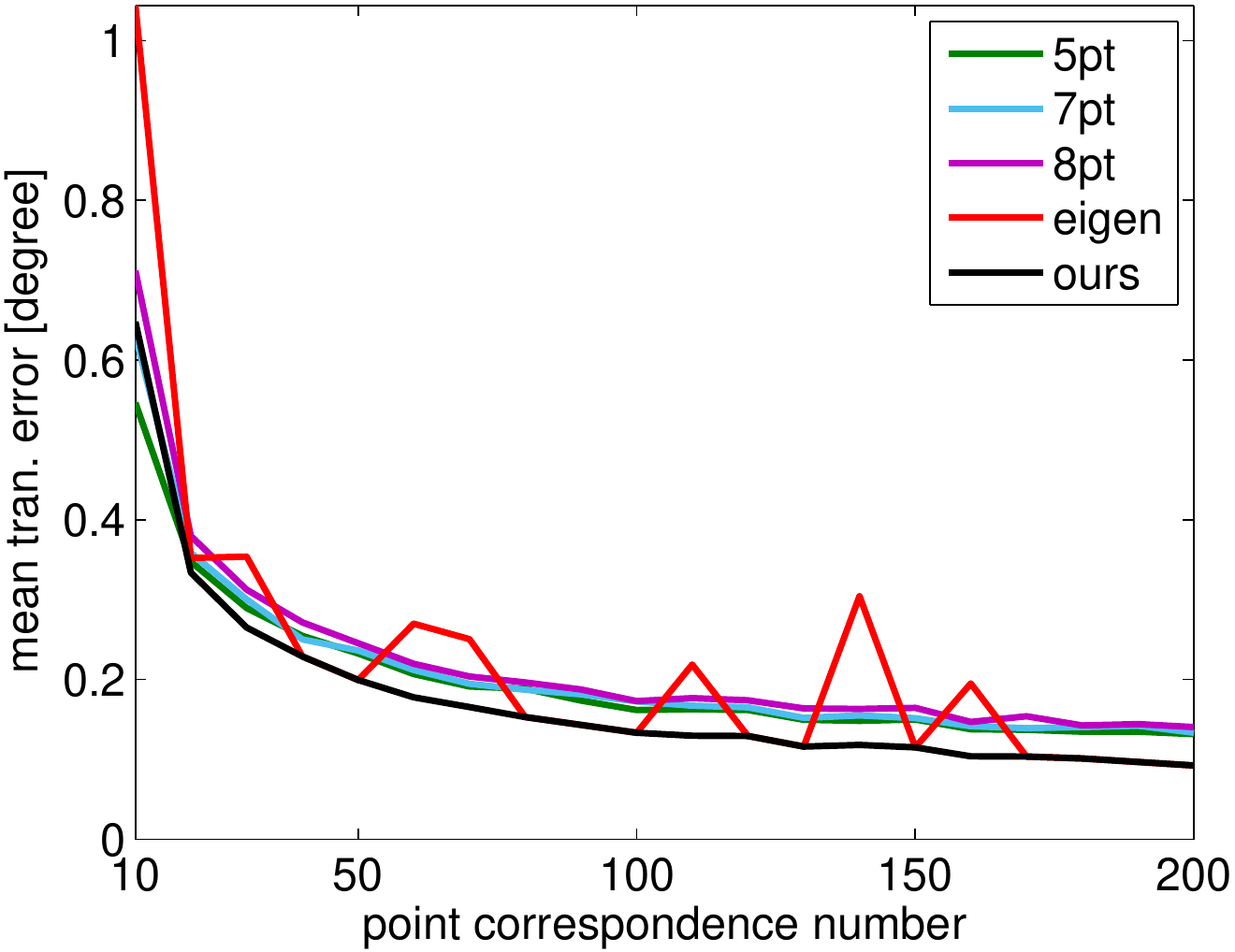}
		}
		\subfigure[$\varepsilon_\text{tran}$, noise level: $5$ pix]
		{
			\includegraphics[height=0.18\linewidth]{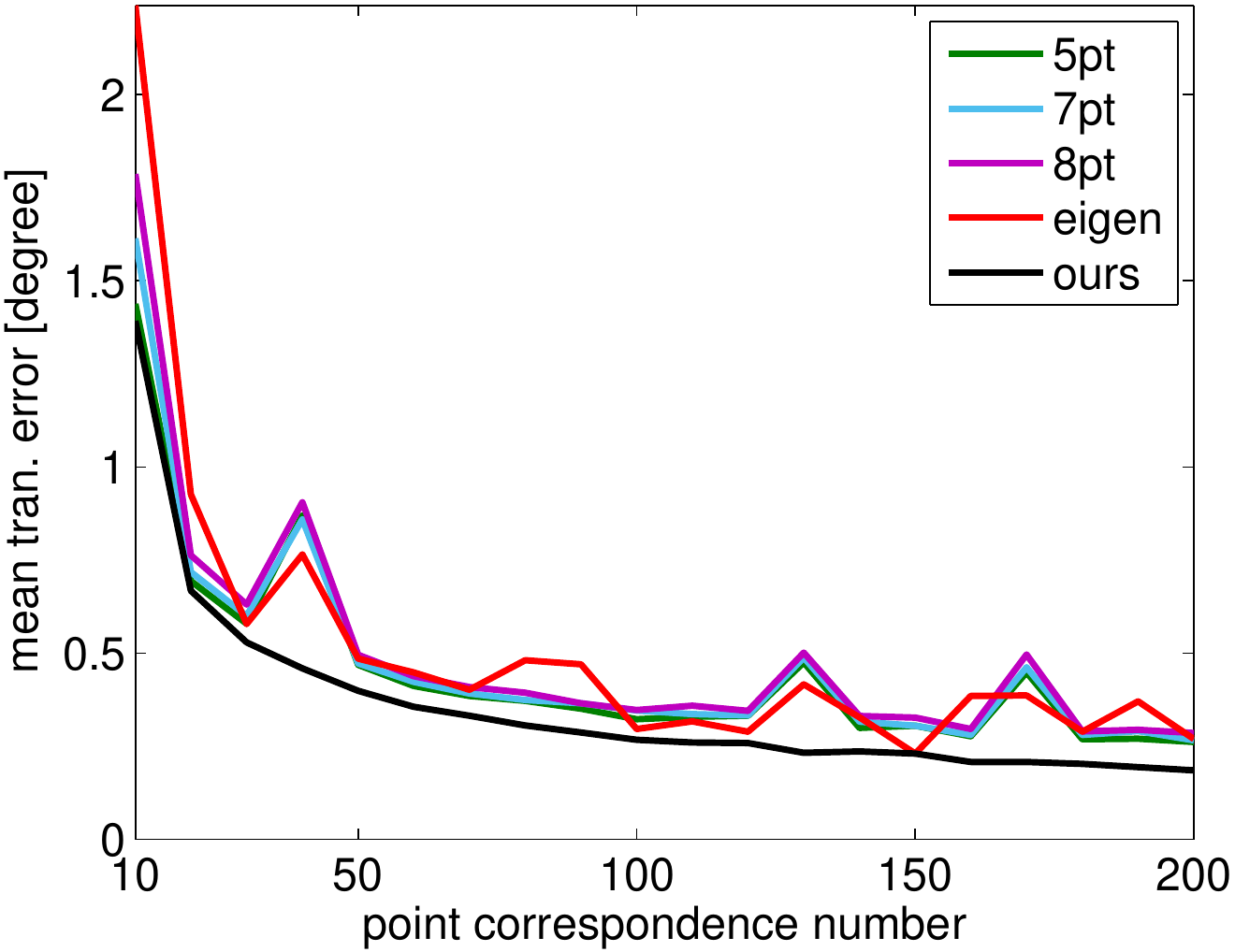}
		}
		\subfigure[$\varepsilon_\text{tran}$, noise level: $10$ pix]
		{
			\includegraphics[height=0.18\linewidth]{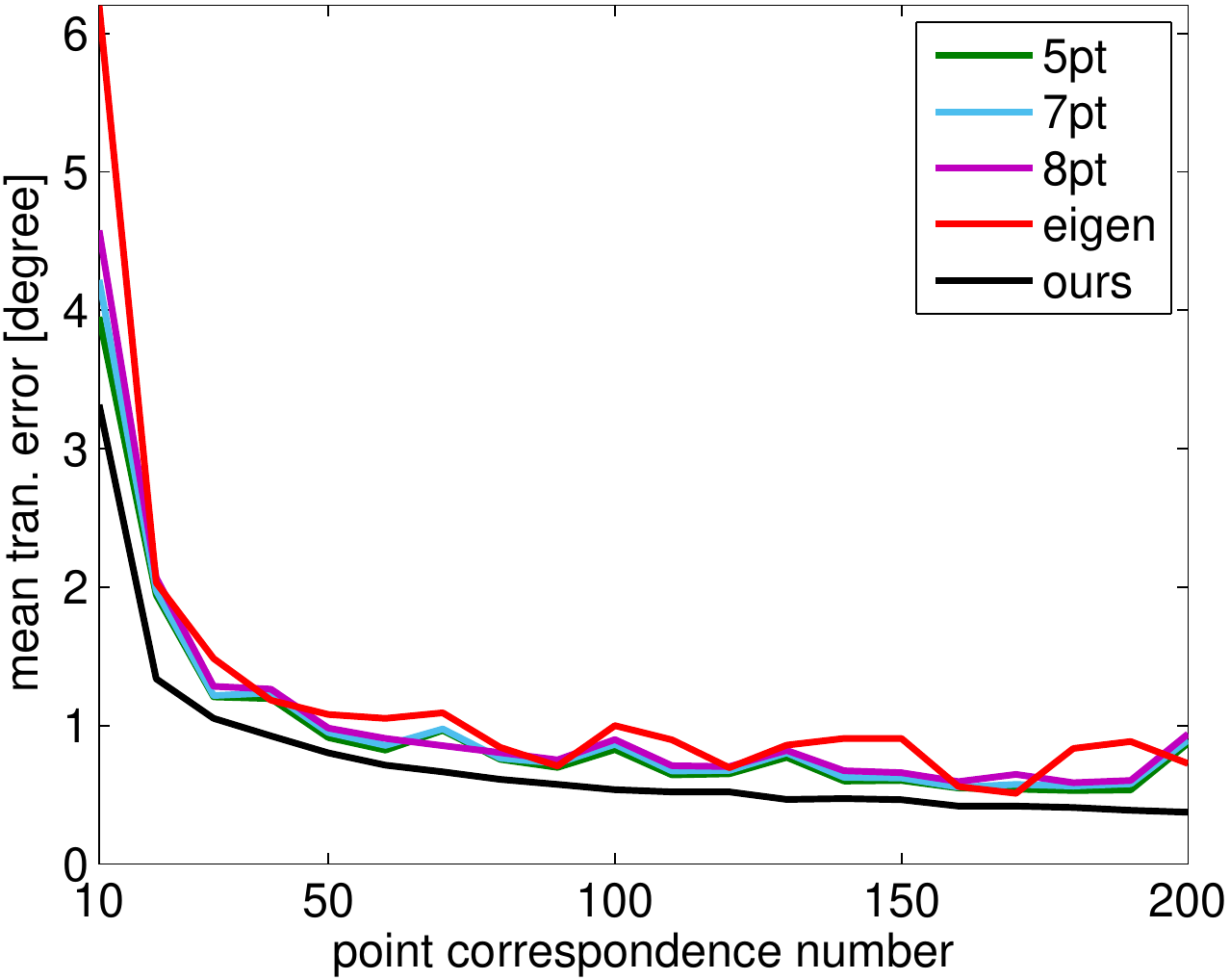}
		}
		\subfigure[$\varepsilon_\text{tran}$, noise level: $20$ pix]
		{
			\includegraphics[height=0.18\linewidth]{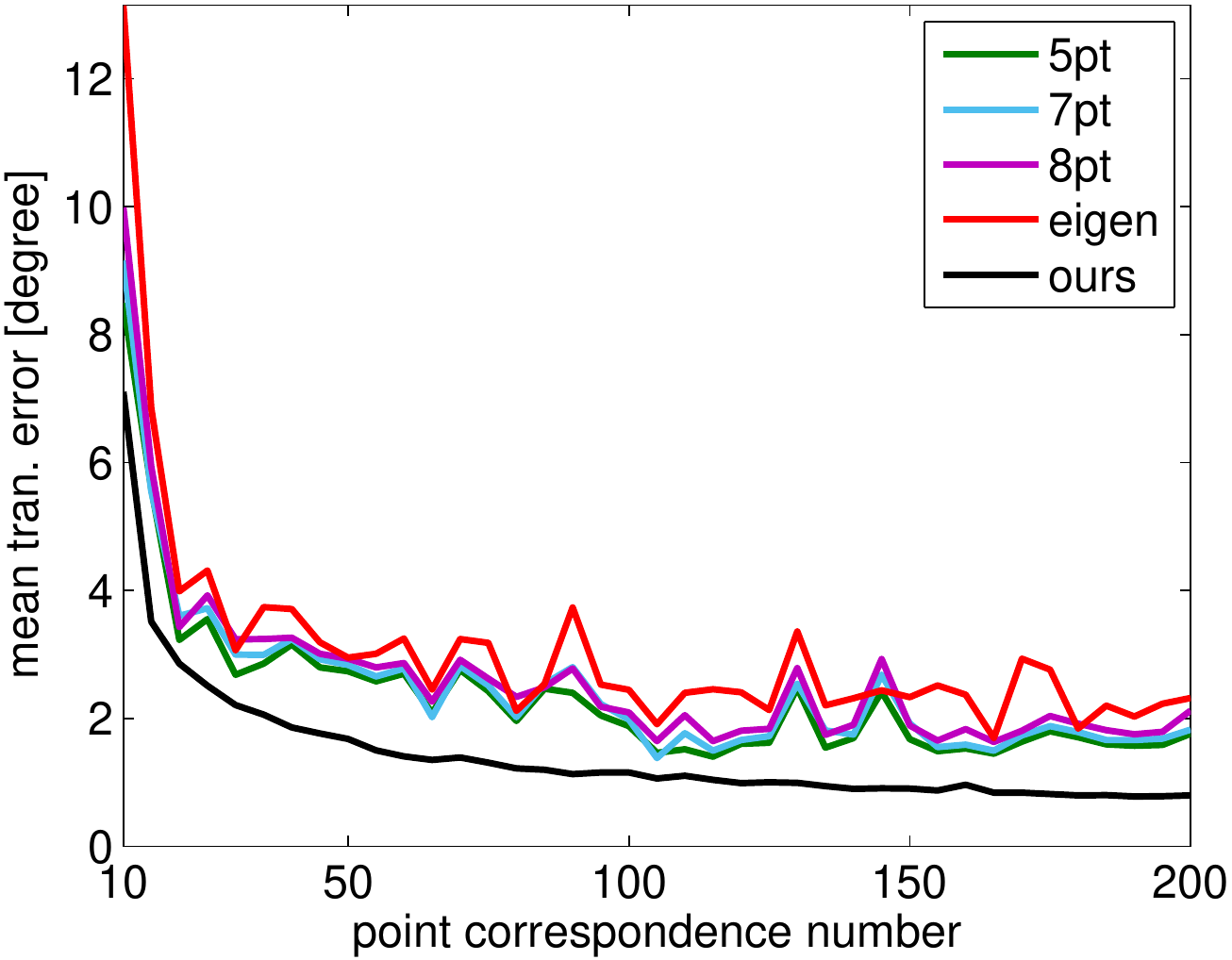}
		}
	\end{center}
	\vspace{-0.15in}
	\caption{Relative pose accuracy with respect to number of point correspondences. 
	}
	\label{fig:exp_sythetic_num}
\end{figure*}

\subsubsection{Real-World Data}
\label{sec:real_data1}

\begin{figure}[htbp]
	\begin{center}
		{
			\includegraphics[width=0.47\linewidth]{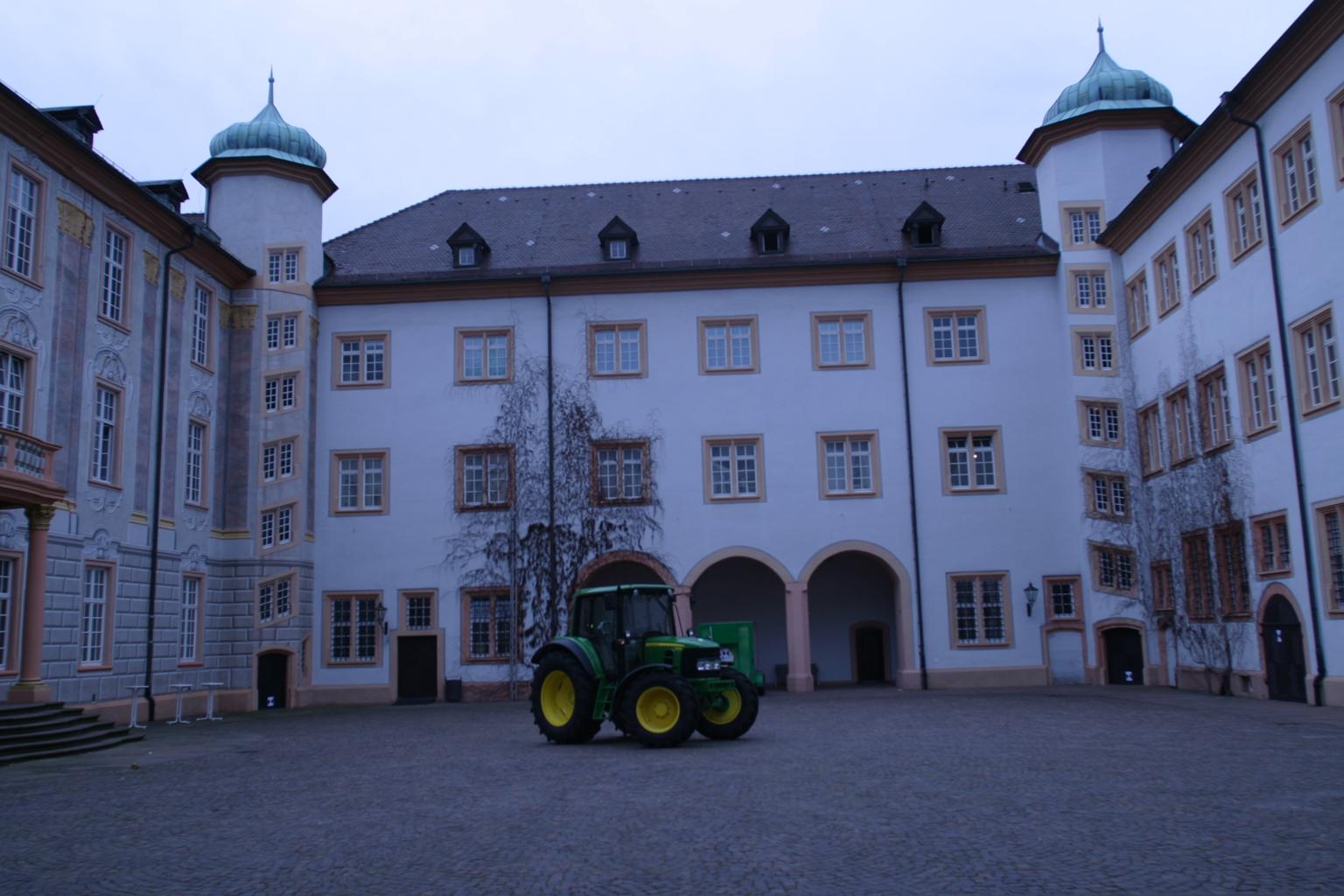}
		}
		{
			\includegraphics[width=0.47\linewidth]{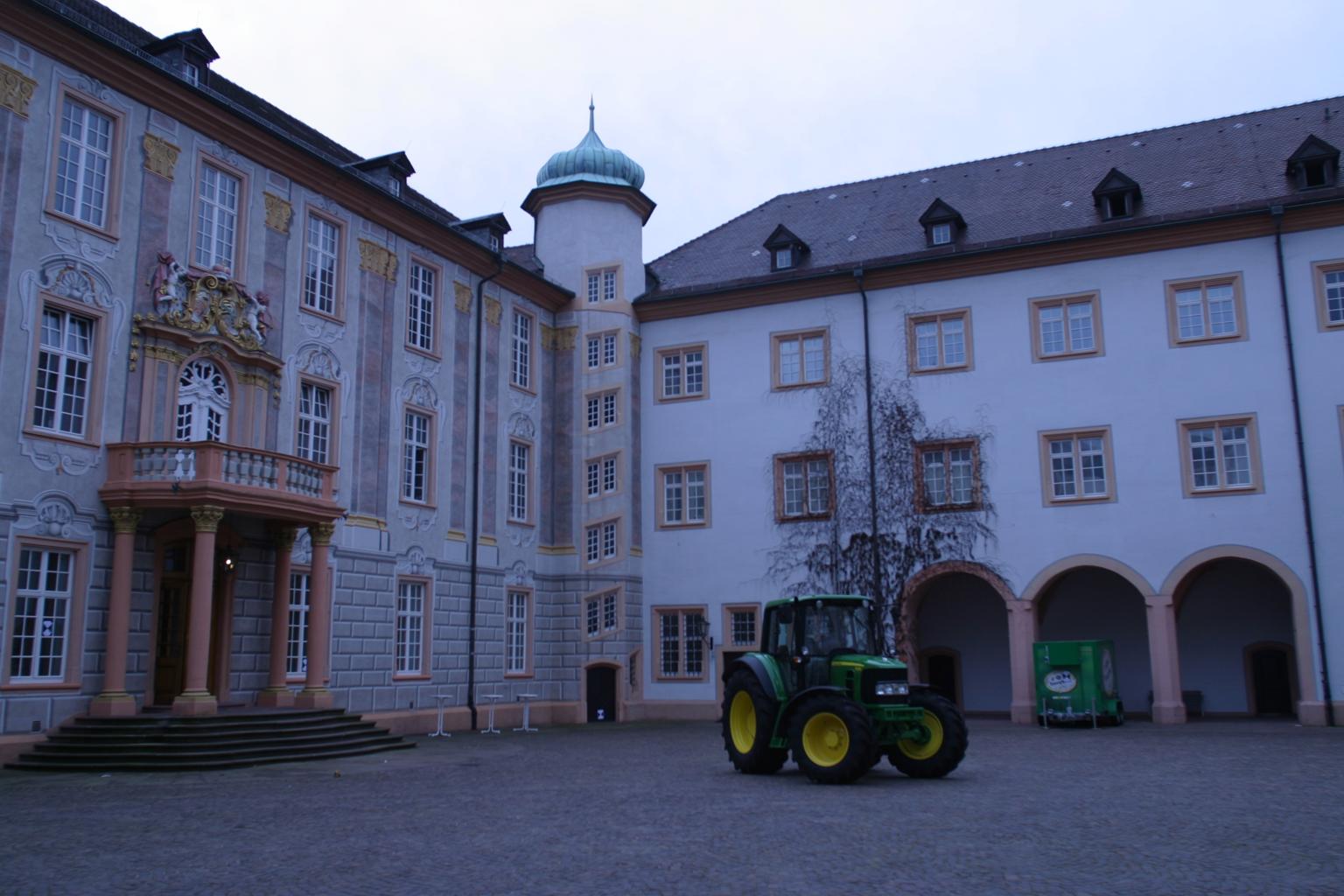}
		}
	\end{center}
	\vspace{-0.15in}
	\caption{A sample image pair of \texttt{EPFL Castle-P19} dataset.}
	\label{fig:epfl}
\end{figure}

We further provide an experiment on real-world images from the \texttt{EPFL Castle-P19} dataset~\cite{strecha2008on}. 
It contains $19$ images in this dataset. We generate $18$ wide-baseline image pairs by grouping adjacent images. 
For each image pair, putative point correspondences are determined by SIFT feature~\cite{lowe2004distinctive}. Then we use RANSAC with iteration number $2000$ and Sampson distance threshold $1.0\times 10^{-3}$ for outlier removal. 
Since the iteration number is sufficiently large and the distance threshold is small, the preserved correspondences can be treated as inliers.

Given correct point correspondences, we compare the relative pose accuracy of different methods. For \texttt{5pt}, \texttt{7pt} and \texttt{8pt} methods, they only use a small portion of the correspondences. For a fair comparison, we repeat them $10$ times using randomly sampled subsets. The rotation and translation errors of different methods are shown in Fig.~\ref{fig:exp_epfl}. 
It can be seen that the mean and median error of our $N$-point method is significantly smaller than those errors produced by the comparison methods.
Specifically, our method achieves a median rotation error of $0.15^\circ$ and a median translation error of $0.56^\circ$. In contrast, \texttt{5pt}, \texttt{7pt} and \texttt{8pt} achieve a median rotation error of $0.33^\circ$, $0.99^\circ$ and $1.02^\circ$, respectively; and they achieve a median translation error of $1.07^\circ$, $3.07^\circ$ and $3.01^\circ$, respectively. 
In this experiment, \texttt{eigen} and our method have a negligible difference. It means that when the noise level is small, local optimization methods might work as well as global optimization methods.

\begin{figure}[tbp]
	\begin{center}
	\subfigure[rotation error $\varepsilon_\text{rot}$]
	{
		\includegraphics[width=0.46\linewidth]{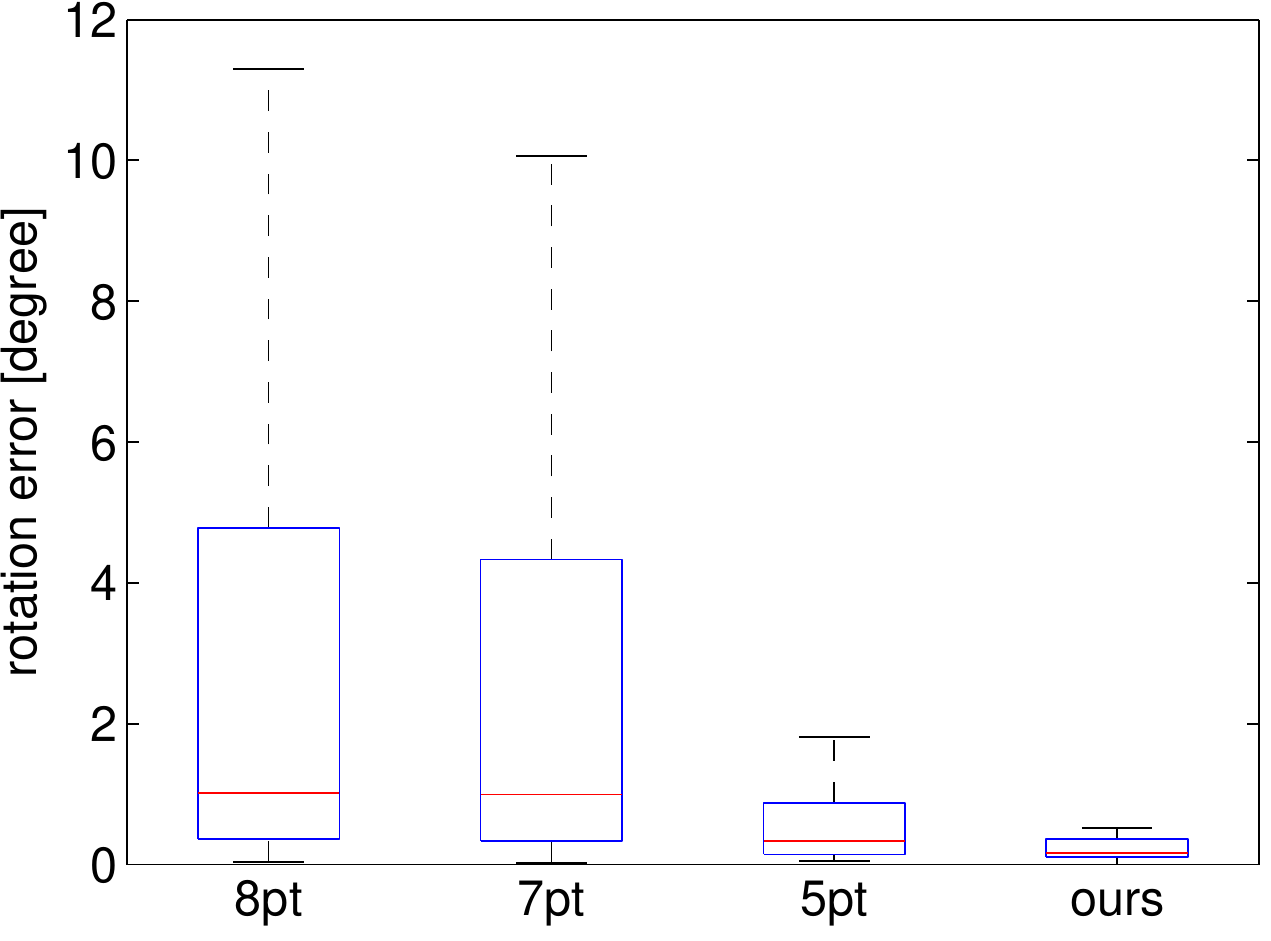}
	}
	\subfigure[translation error $\varepsilon_\text{tran}$]
	{
		\includegraphics[width=0.47\linewidth]{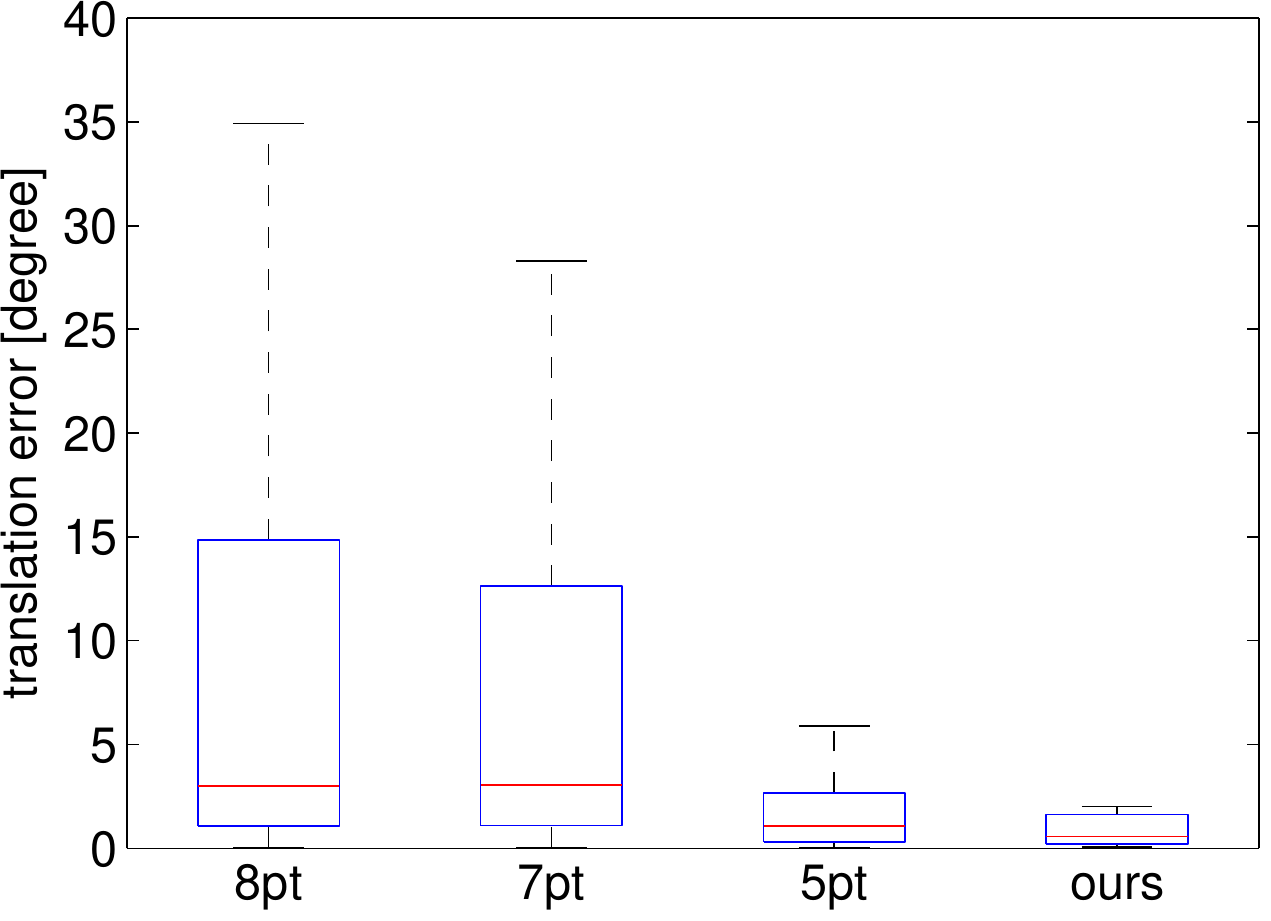}
	}
	\end{center}
	\vspace{-0.15in}
	\caption{Relative pose accuracy of the \texttt{EPFL Castle-P19} dataset. 
	}
	\label{fig:exp_epfl}
\end{figure}

\subsubsection{Performance for Pure Rotation}

\begin{figure}[tbp]
	\begin{center}
	\subfigure[rotation error $\varepsilon_\text{rot}$]
	{
		\includegraphics[width=0.47\linewidth]{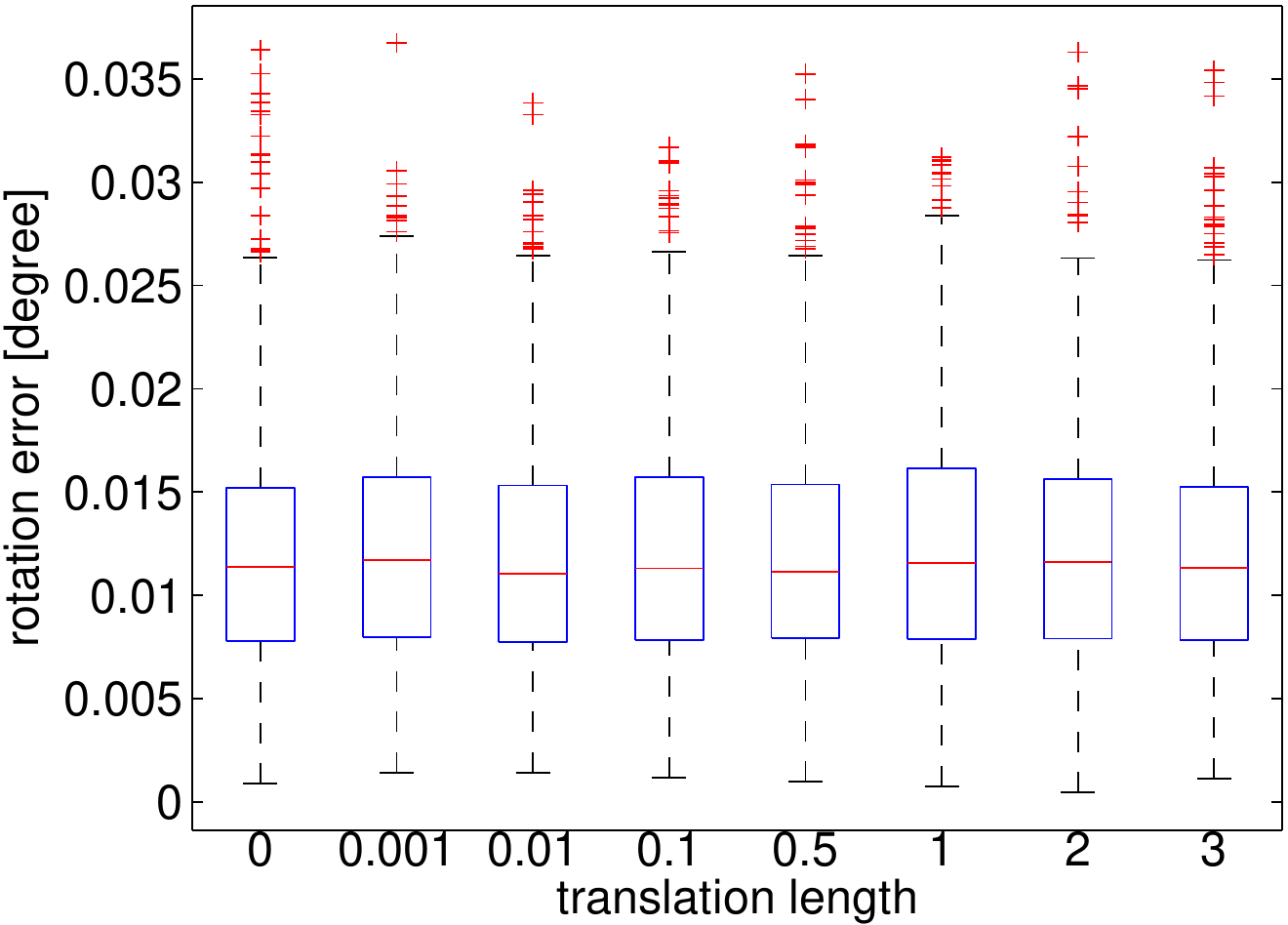}
	}
	\subfigure[translation error $\varepsilon_\text{tran}$]
	{
		\includegraphics[width=0.47\linewidth]{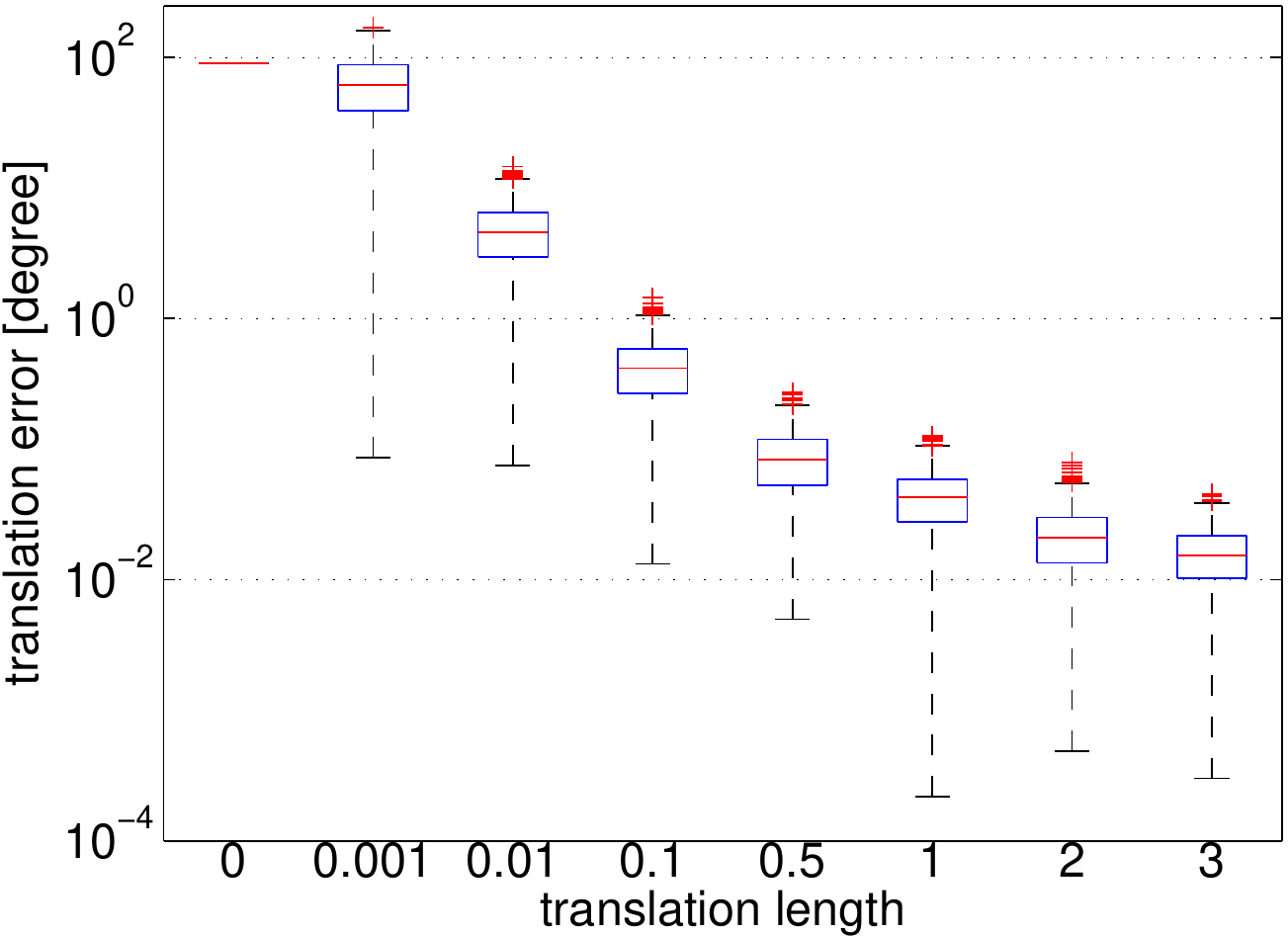}
	}
	\subfigure[pure rotation statistic]
	{
		\includegraphics[width=0.47\linewidth]{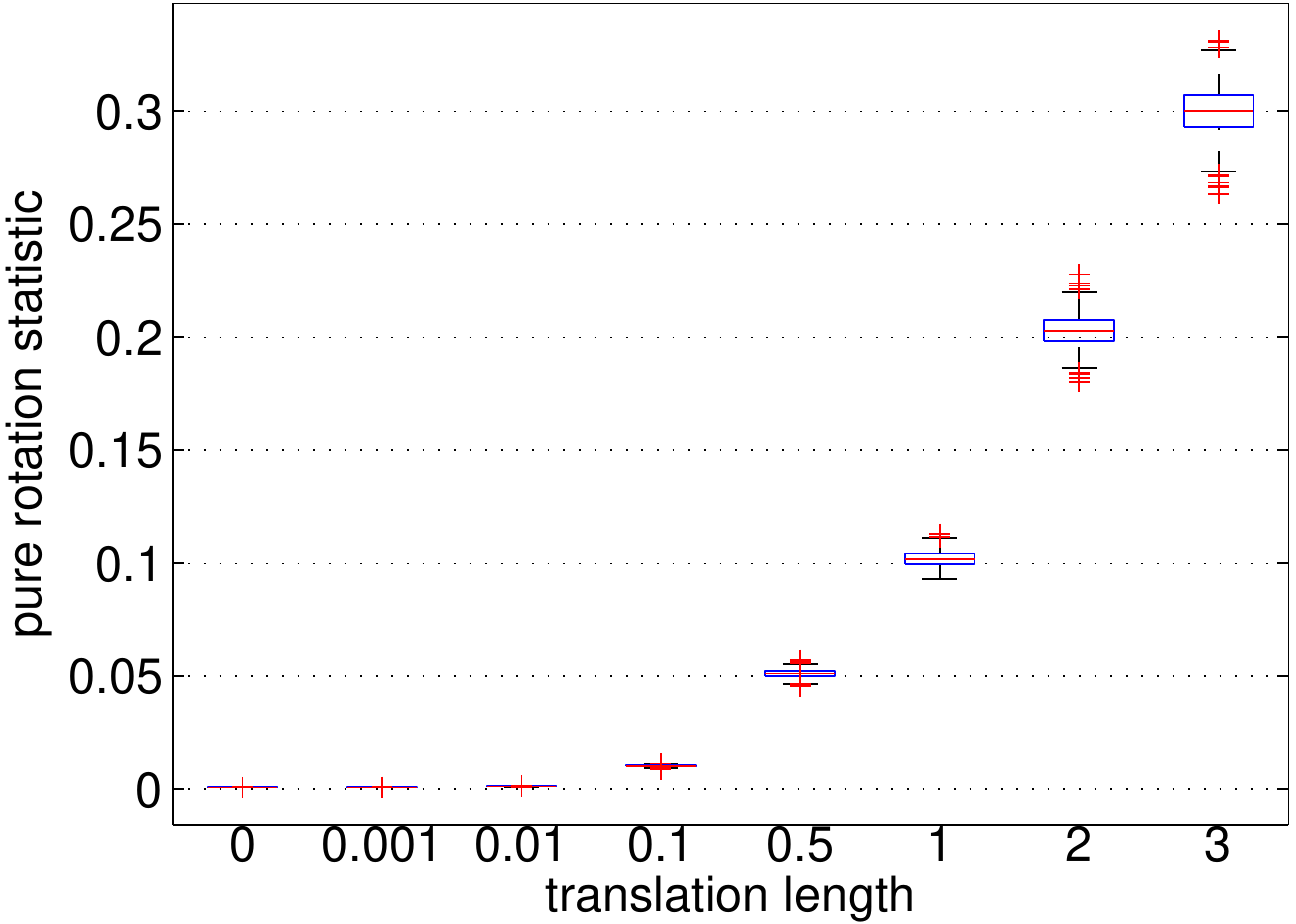}
	}
	\end{center}
	\vspace{-0.15in}
	\caption{Relative pose accuracy with respect to translation length. 
	}
	\label{fig:exp_pure_rotation}
\end{figure}

We use synthetic data to validate the performance of our method for pure rotation. The synthetic scenes are generated as that in Section~\ref{sec:syn_data1}. 
The field-of-view of the camera is $180^\circ$. 
We set the noise level as $0.5$~pixels and the number of point correspondences as $100$. The translation length is varied. 
For each translation length, we repeat the experiments $1000$ times using randomly generated scenes. 
The pose estimation errors with respect to translation length is shown in Fig.~\ref{fig:exp_pure_rotation}. 
It can be seen that the rotation error is not affected by translation length. In contrast, the translation error is greatly affected by translation length\footnote{When the ground truth of translation is zero, the translation error $\varepsilon_{\text{rot}}$ is ill-defined. We define $\varepsilon_{\text{rot}}$ as $90^\circ$, because the expectation of $\varepsilon_{\text{rot}}$ is $90^\circ$ when the estimated translation is uniformly random. }. The larger translation length tends to result in more accurate translation estimation. 

In Figure~\ref{fig:exp_pure_rotation}(c), we plot a pure rotation statistic~\cite{cai2018equivalent} with respect to translation length. The statistic is defined as the mean of $\frac{\x_i \times \R^\star \x'_i}{\|\x_i\| \|\x'_i\|}$, $i = 1, \cdots, N$. 
It can be seen that this statistic is discriminative to identify pure rotation scenarios. 
When (near) pure rotation occurs, this statistic is (near) zero. 
From the above experiments, it certifies that the proposed $N$-point method can be applied to pure rotation cases.

\subsection{Global Optimality of $N$-Point Method}

Recall that $\rank(\X_e^\star)$ is used to verify the optimality of the proposed $N$-point method. 
Thus the second largest singular value is the key to ensure rank-$1$ condition and global optimality.
We use the synthetic scenes defined in Section~\ref{sec:syn_data1} to demonstrate the second largest singular value of $\rank(\X_e^\star)$. 
Figure~\ref{fig:exp_optimality} shows the cumulative distribution functions (CDF) of the second largest singular values under different settings.
Given a threshold to determine the rank of $\rank(\X_e^\star)$, the proportion of instances with global optimality can be obtained from corresponding CDF. 
The success rate of the global optimality depends on the threshold of singular value and the accuracy of the SDP solver.

In Figure~\ref{fig:exp_optimality}(a)(b), the number of point correspondences is fixed to $100$, and the noise level is varied for forward and sideways motion modes.
In Figure~\ref{fig:exp_optimality}(c)(c), the noise level is fixed to $0.5$ pixels, and the number of point correspondences is varied for forward and sideways motion modes. 
We have the following observations.  
(1) When the threshold of singular value is $1.0 \times 10^{-6}$, the global optimality can be obtained in most cases. (2) Smaller noise levels or more point correspondences will result in higher success rates of global optimality. (3) For usual cases whose noise level is below $1$ pixel and the number of point correspondences is above $20$, the global optimality can be obtained in most cases.

\begin{figure}[tbp]
	\begin{center}
	\subfigure[forward \& varied noise level]
	{
		\includegraphics[width=0.47\linewidth]{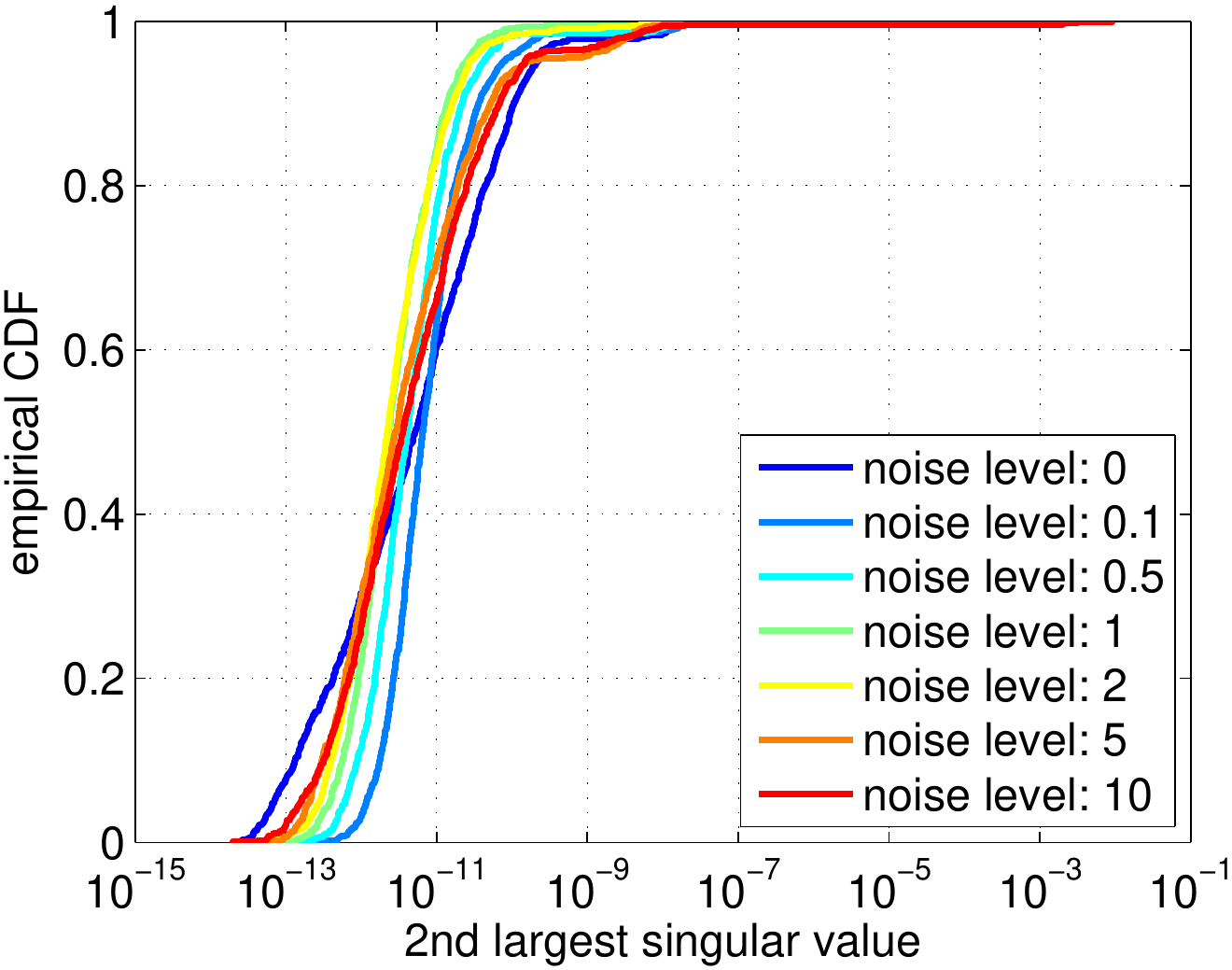}
	}
	\subfigure[sideways \& varied noise level]
	{
		\includegraphics[width=0.475\linewidth]{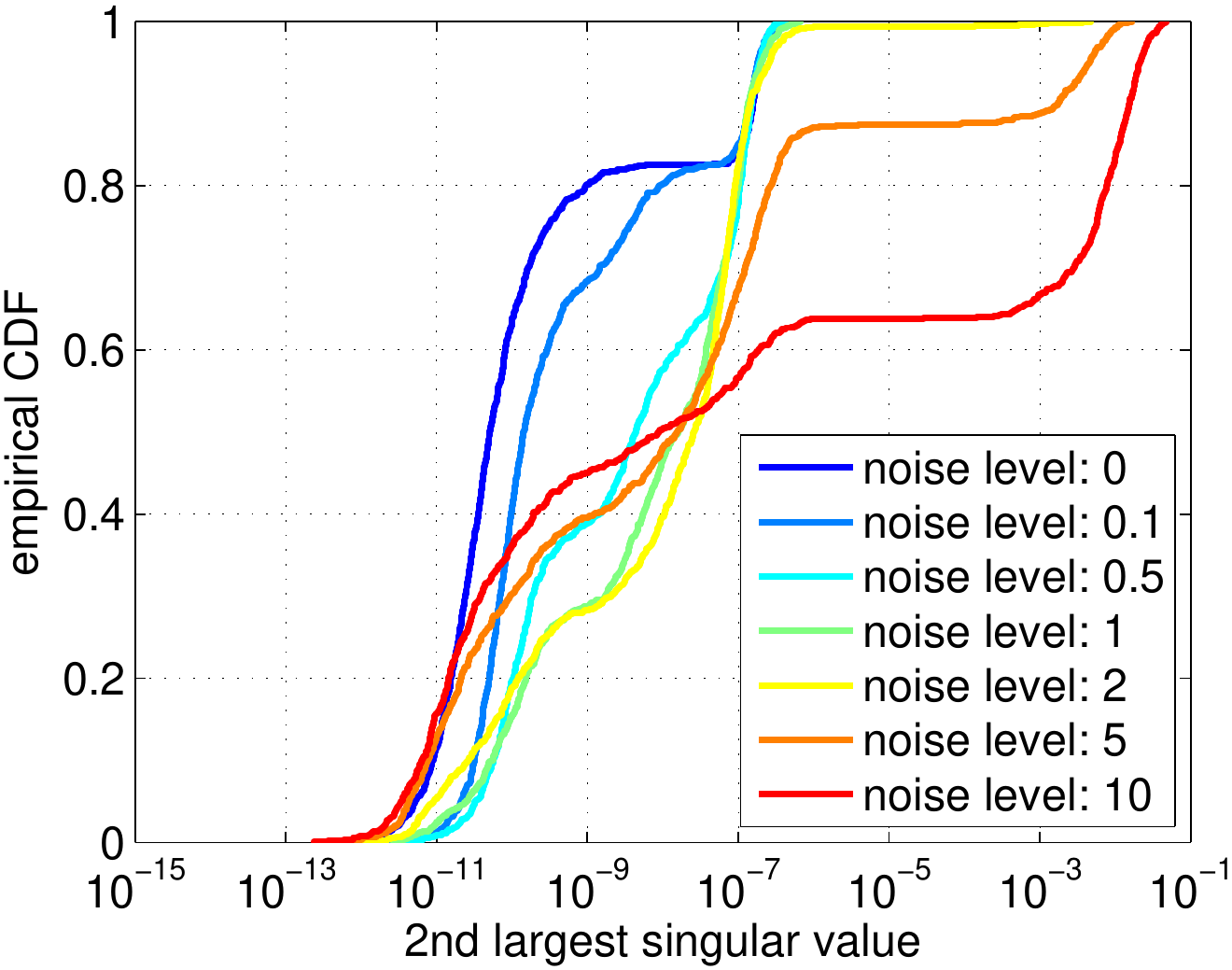}
	}
	\subfigure[forward \& varied point num.]
	{
		\includegraphics[width=0.47\linewidth]{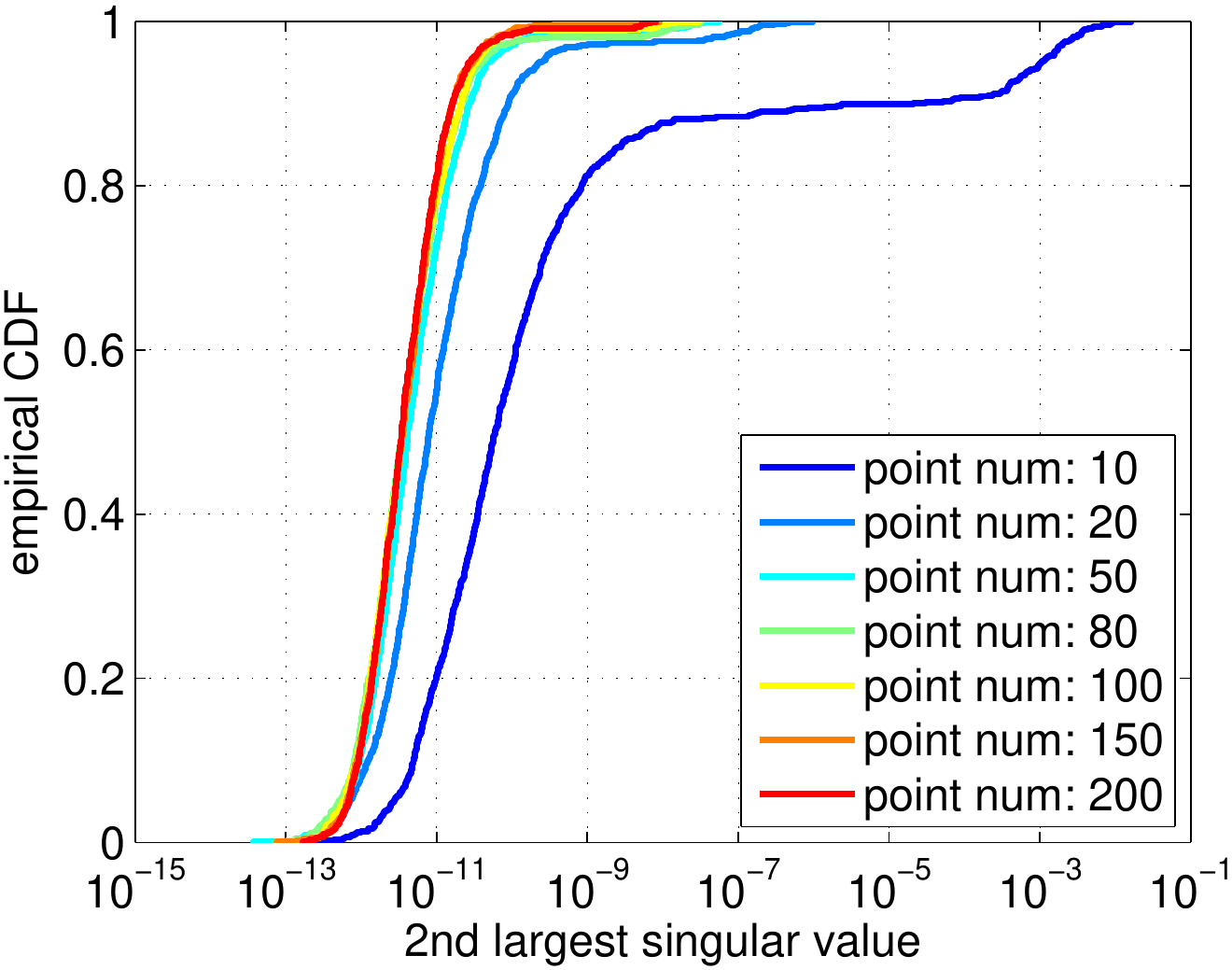}
	}
	\subfigure[sideways \& varied point num.]
	{
		\includegraphics[width=0.475\linewidth]{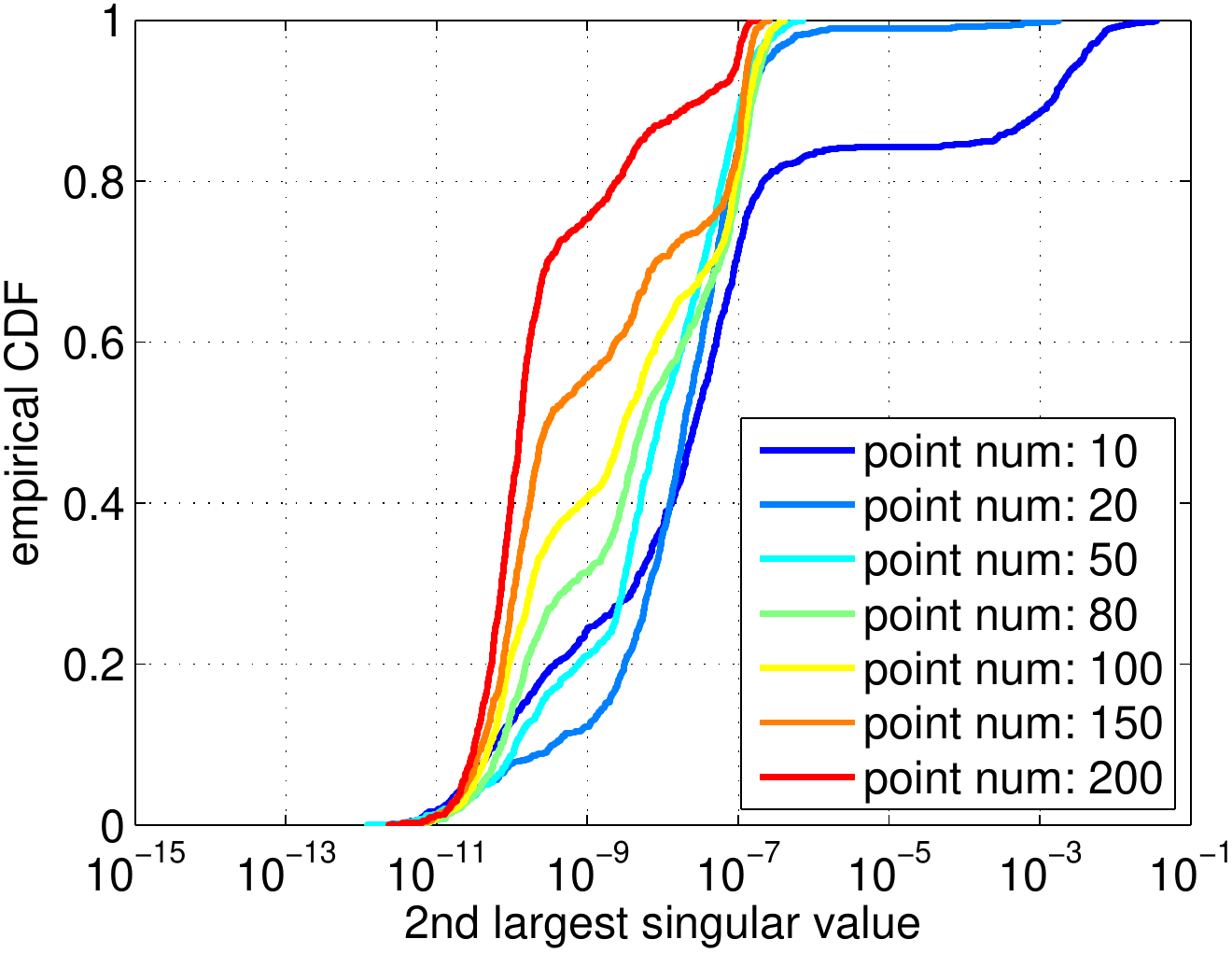}
	}
	\end{center}
	\vspace{-0.15in}
	\caption{The second largest singular value with respect to image noise levels and numbers of point correspondences. 
	}
	\label{fig:exp_optimality}
\end{figure}

\subsection{Performance of Robust $N$-Point Method}

We test the robust $N$-point method on both synthetic data and real-world data.
The parameter $\tau^2_{\text{min}}$ is set as $6.0\times 10^{-7}$. 

\subsubsection{Synthetic Data}
\label{sec:synthetic_data}

The synthetic scene is generated as that in Section~\ref{sec:syn_data1}. The noise level is fixed to $0.5$~pixels, and the correspondence number is fixed to $100$. The outlier ratio is varied from $0\%$ to $100\%$ with a step size of $5\%$. 
For each outlier ratio, we repeat the experiments $100$ times using randomly generated data. 

\begin{figure}[tbp]
	\begin{center}
	\subfigure[Robustness comparison]
	{
		\includegraphics[width=0.47\linewidth]{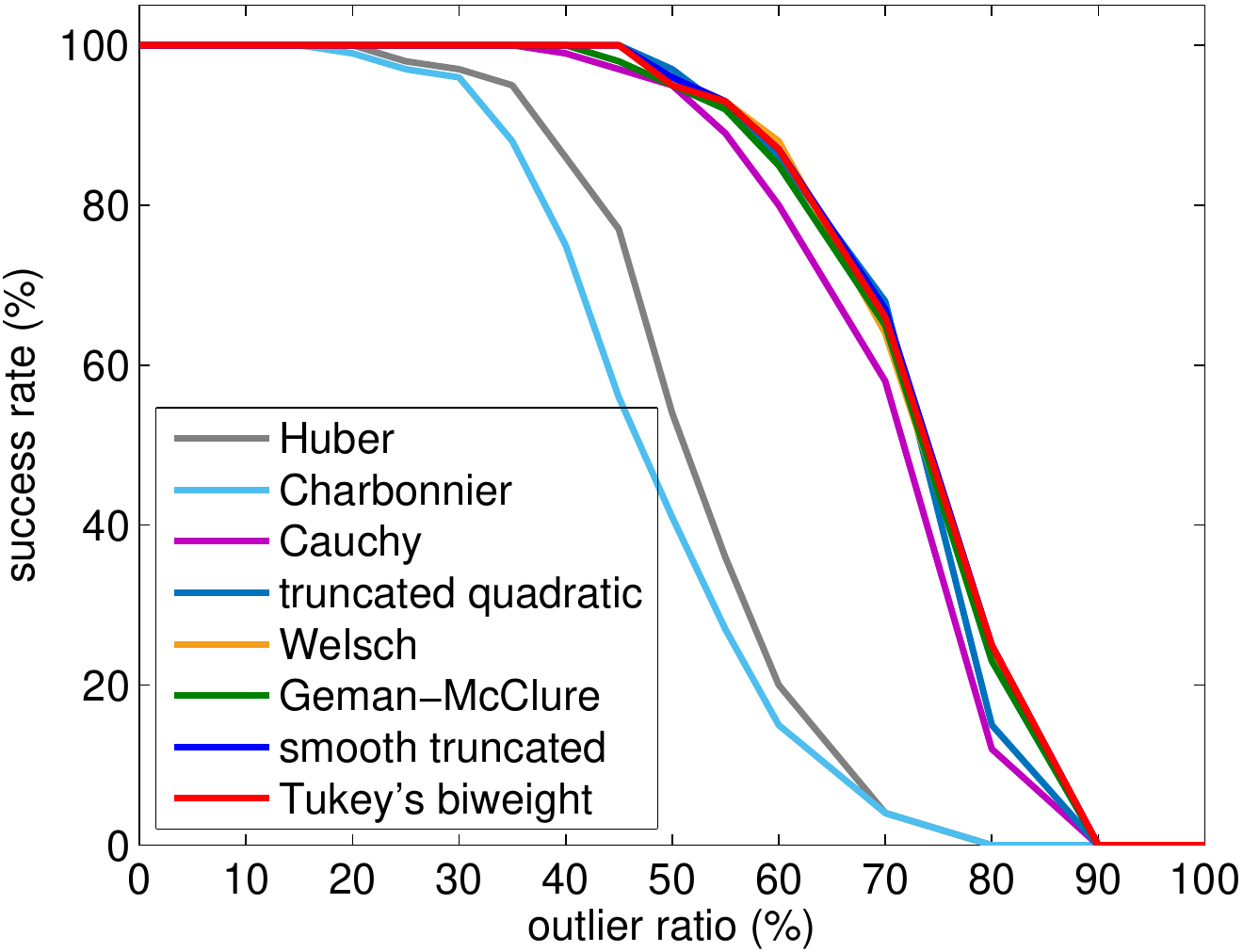}
	}
	\subfigure[Robustness of Welsch function]
	{
		\includegraphics[width=0.47\linewidth]{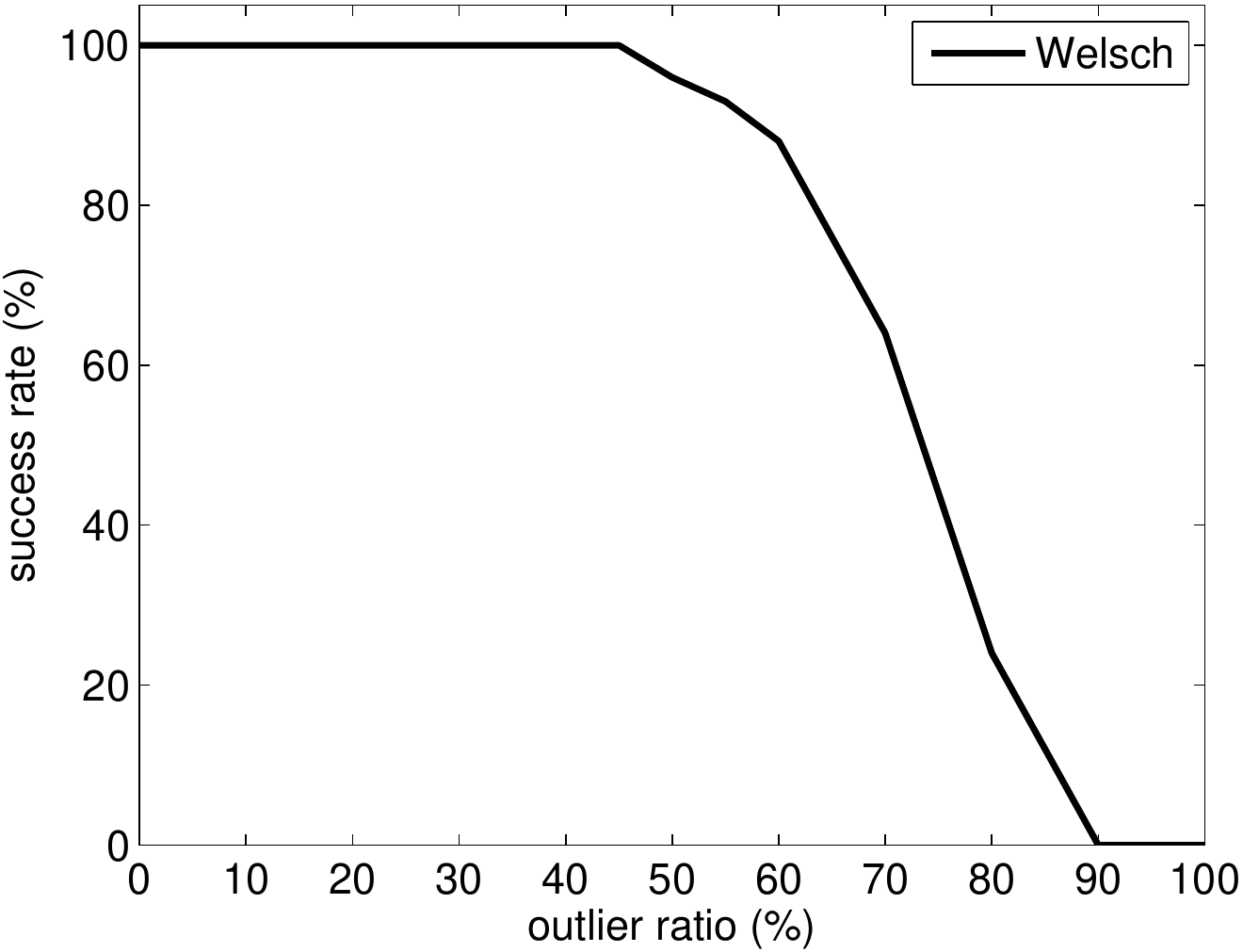}
	}
	\end{center}
	\vspace{-0.15in}
	\caption{Success rate of the robust $N$-point method. (a) Robustness comparison with different loss functions. (b) Robustness of Welsch function. (It is plotted separately for better visualization)}
	\label{fig:exp_outlier_loss_comp}
\end{figure}

We evaluate $8$ robust loss functions in Fig.~\ref{fig:kernel1}, and report their success rates. 
The success rate is the ratio of successful cases to the overall trials. 
A trial is treated as a success if $\varepsilon_{\text{rot}} \le 0.15^\circ$ and $\varepsilon_{\text{tran}} \le 0.5^\circ$. 
The results are shown in Fig.~\ref{fig:exp_outlier_loss_comp}.
It can be seen that truncated quadratic, Welsch, smooth truncated quadratic, and Turkey's biweight functions can tolerate up to $45\%$ outliers. 
Cauchy and Geman-McClure functions can tolerate up to $40\%$ outliers; Charbonnier and Huber functions can tolerate up to $15\%$ outliers.
In the following experiments, we will use Welsch function in the robust $N$-point method since it has superior performance.

\begin{figure*}[htbp]
	\begin{center}
	\subfigure[mean $\varepsilon_\text{rot}$]
	{
		\includegraphics[width=0.23\linewidth]{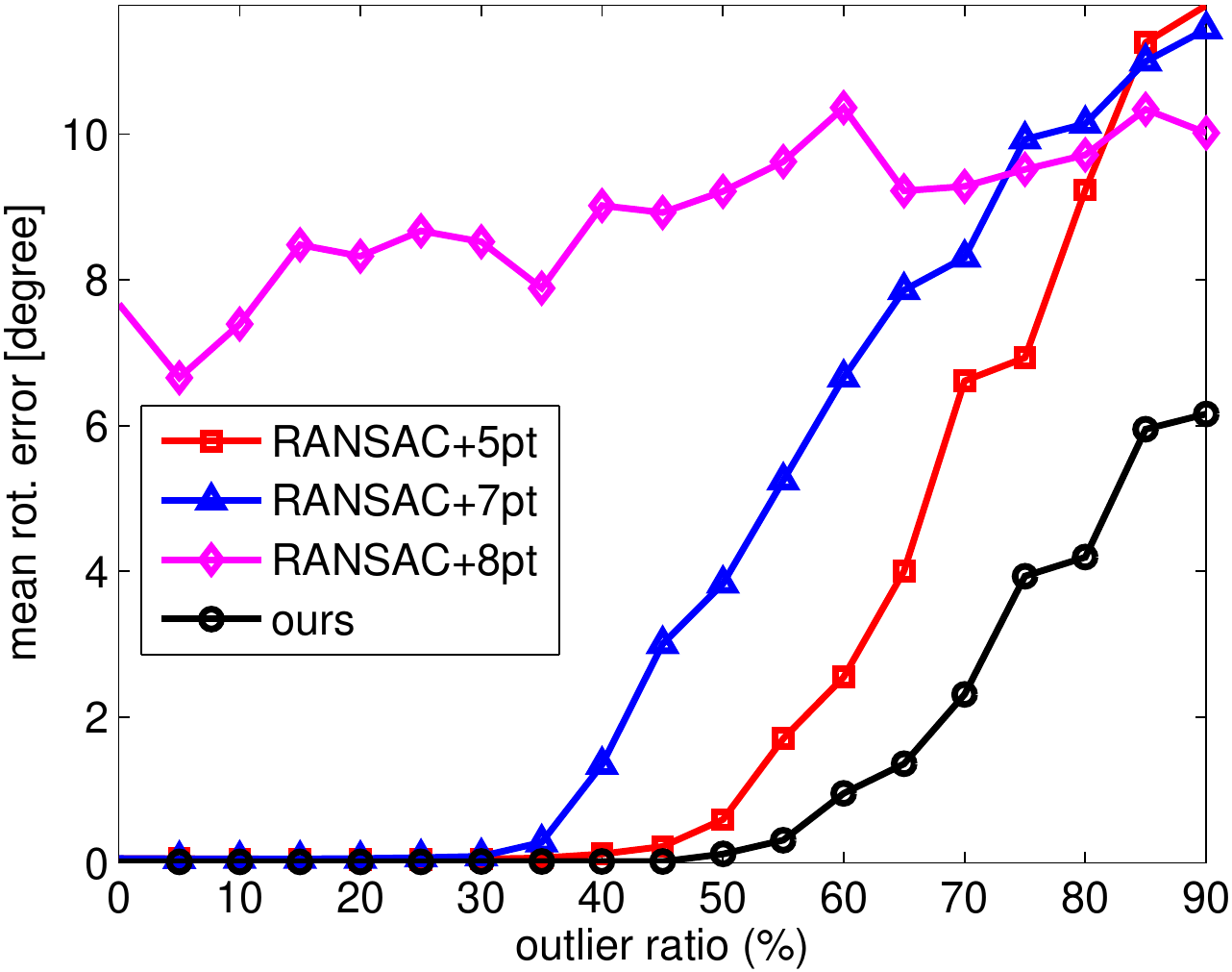}
	}
	\subfigure[mean $\varepsilon_\text{tran}$]
	{
		\includegraphics[width=0.23\linewidth]{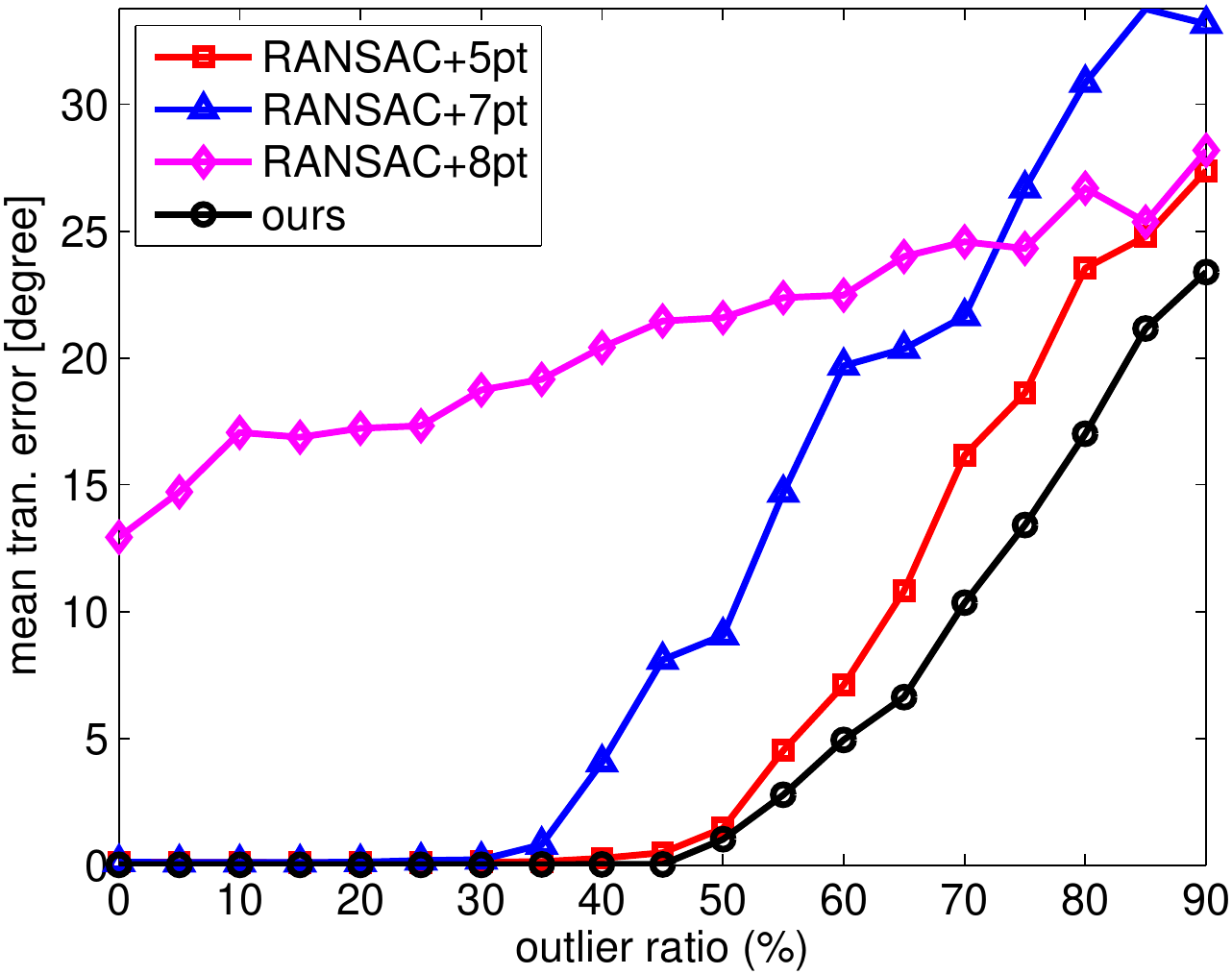}
	}
	\subfigure[median $\varepsilon_\text{rot}$]
	{
		\includegraphics[width=0.23\linewidth]{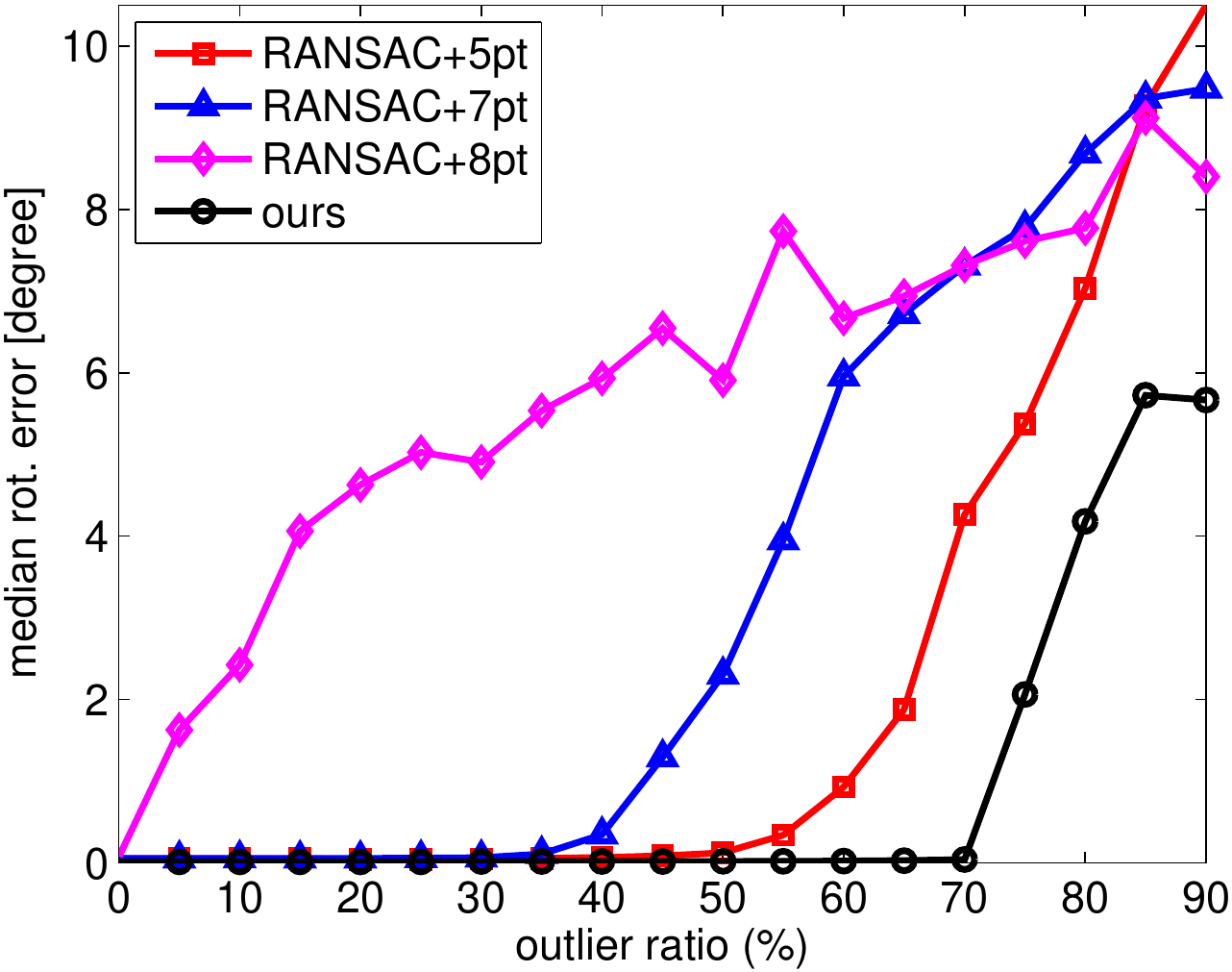}
	}
	\subfigure[median $\varepsilon_\text{tran}$]
	{
		\includegraphics[width=0.23\linewidth]{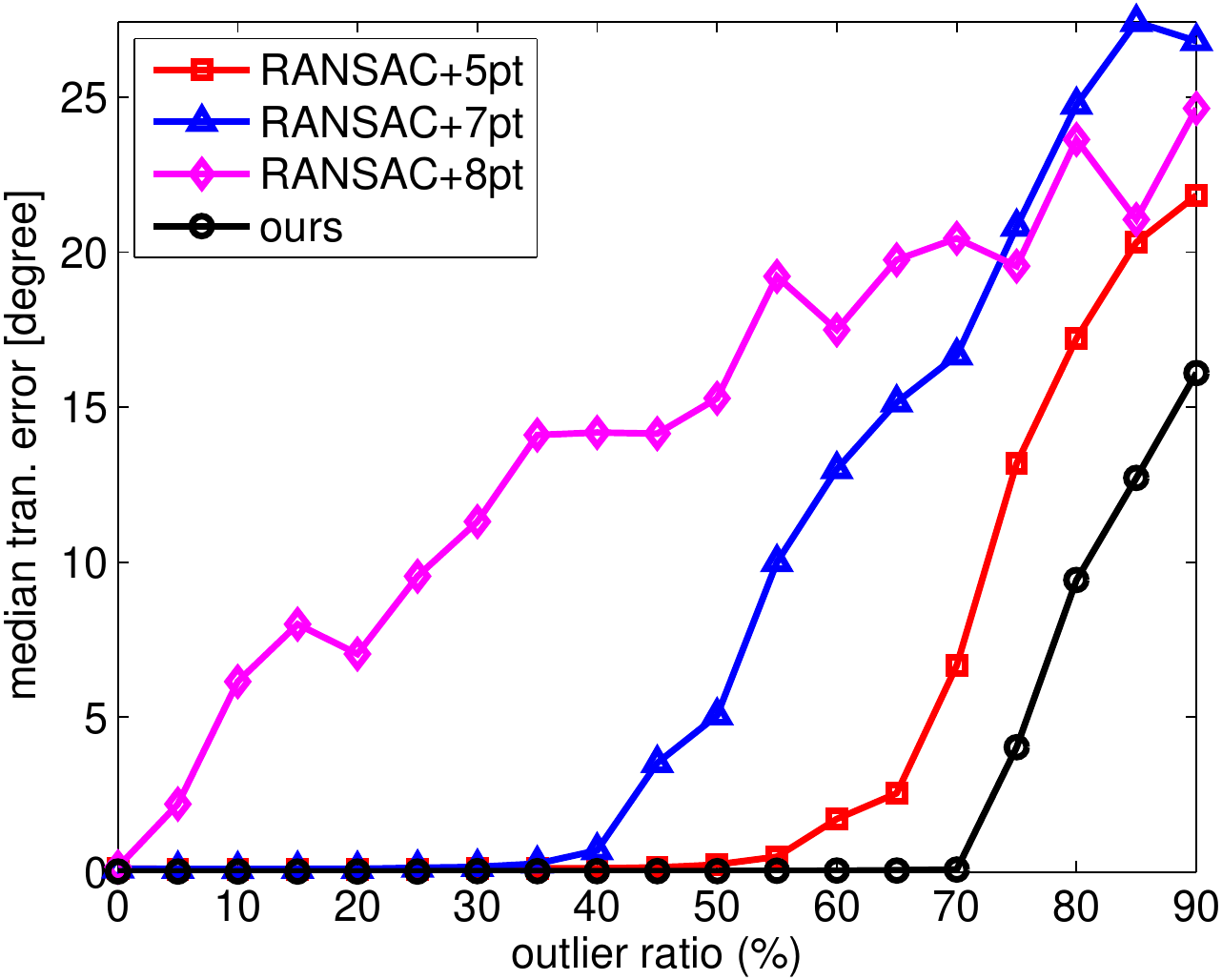}
	} \\
	\subfigure[mean $\varepsilon_\text{rot}$]
	{
		\includegraphics[width=0.23\linewidth]{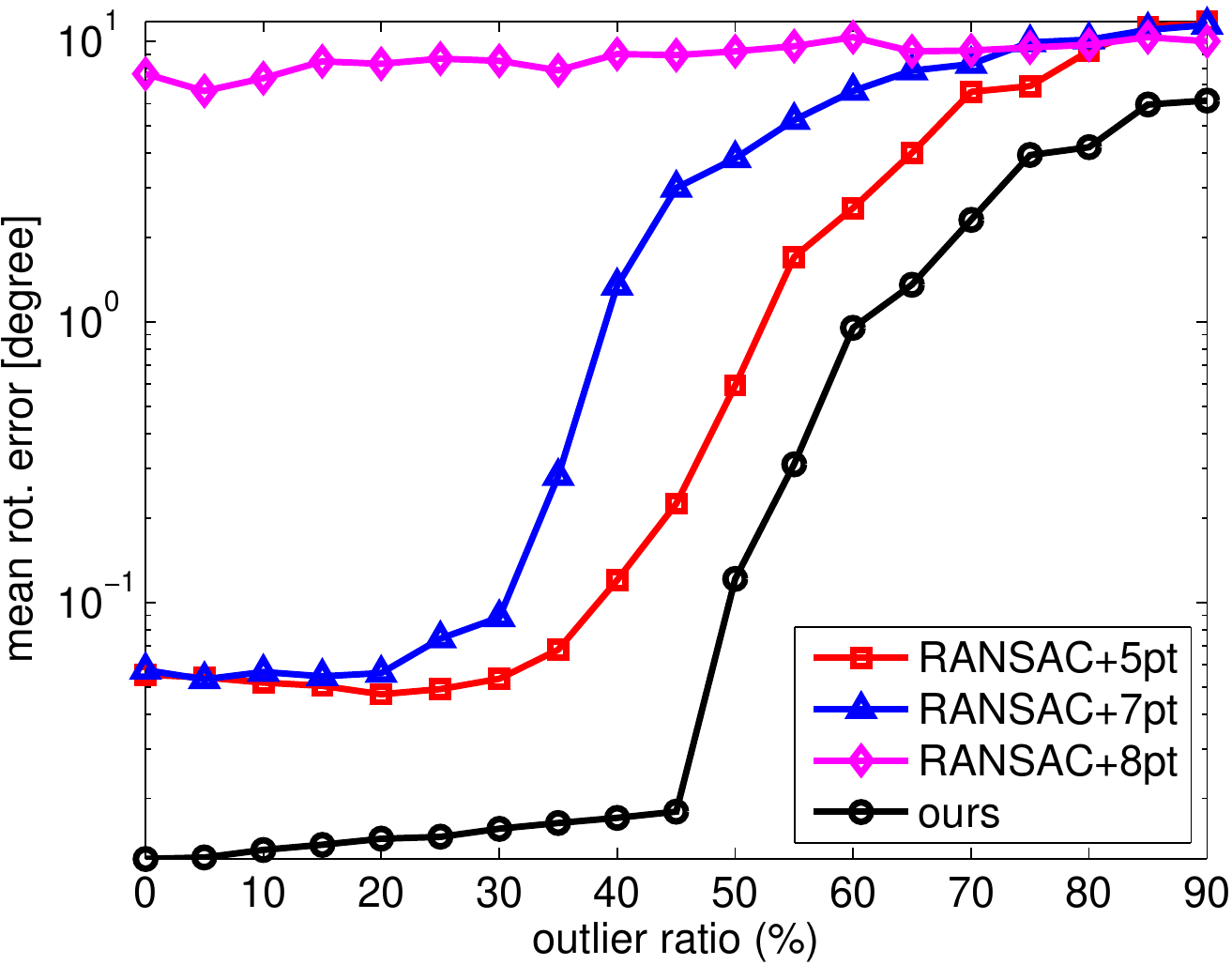}
	}
	\subfigure[mean $\varepsilon_\text{tran}$]
	{
		\includegraphics[width=0.23\linewidth]{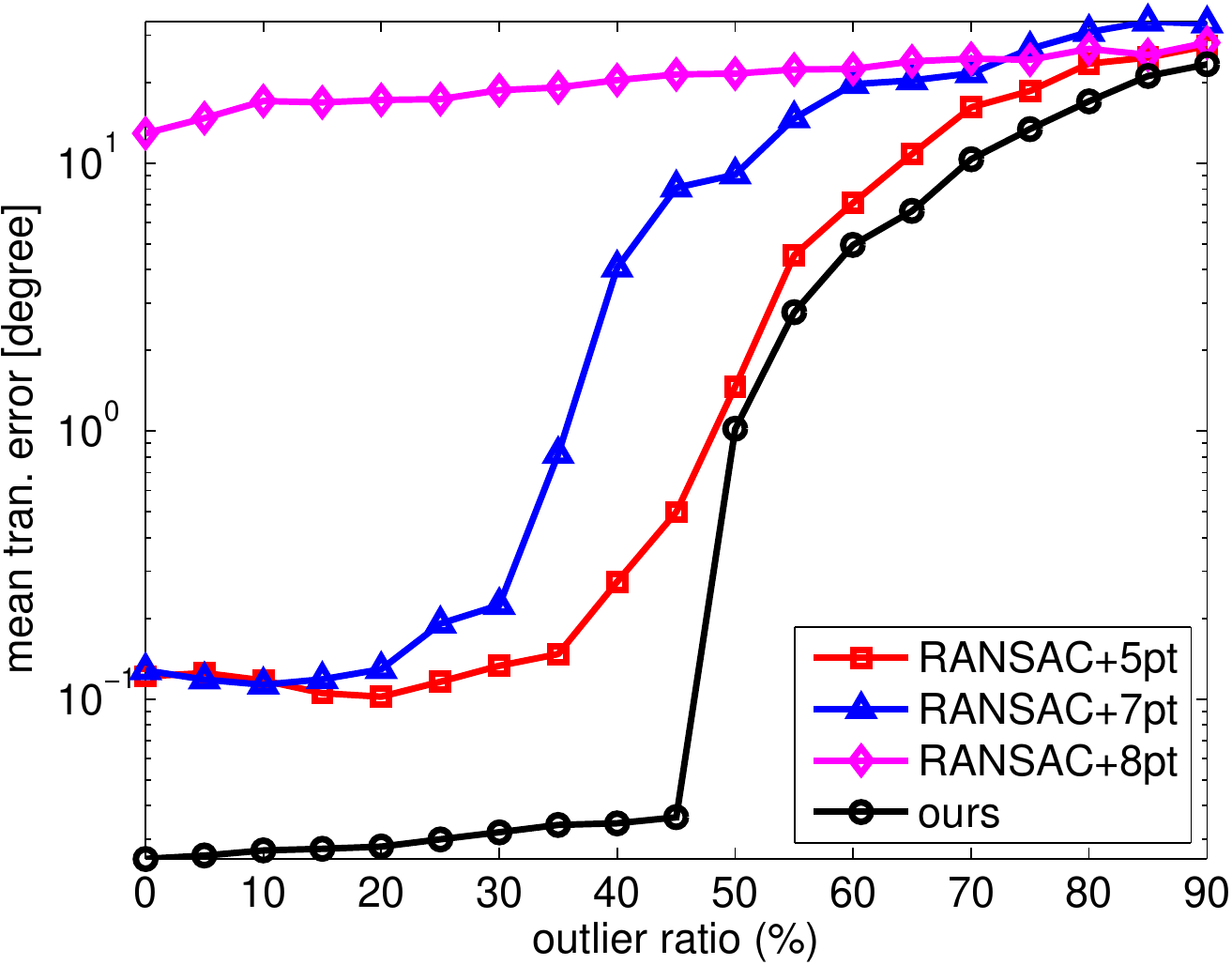}
	}
	\subfigure[median $\varepsilon_\text{rot}$]
	{
		\includegraphics[width=0.23\linewidth]{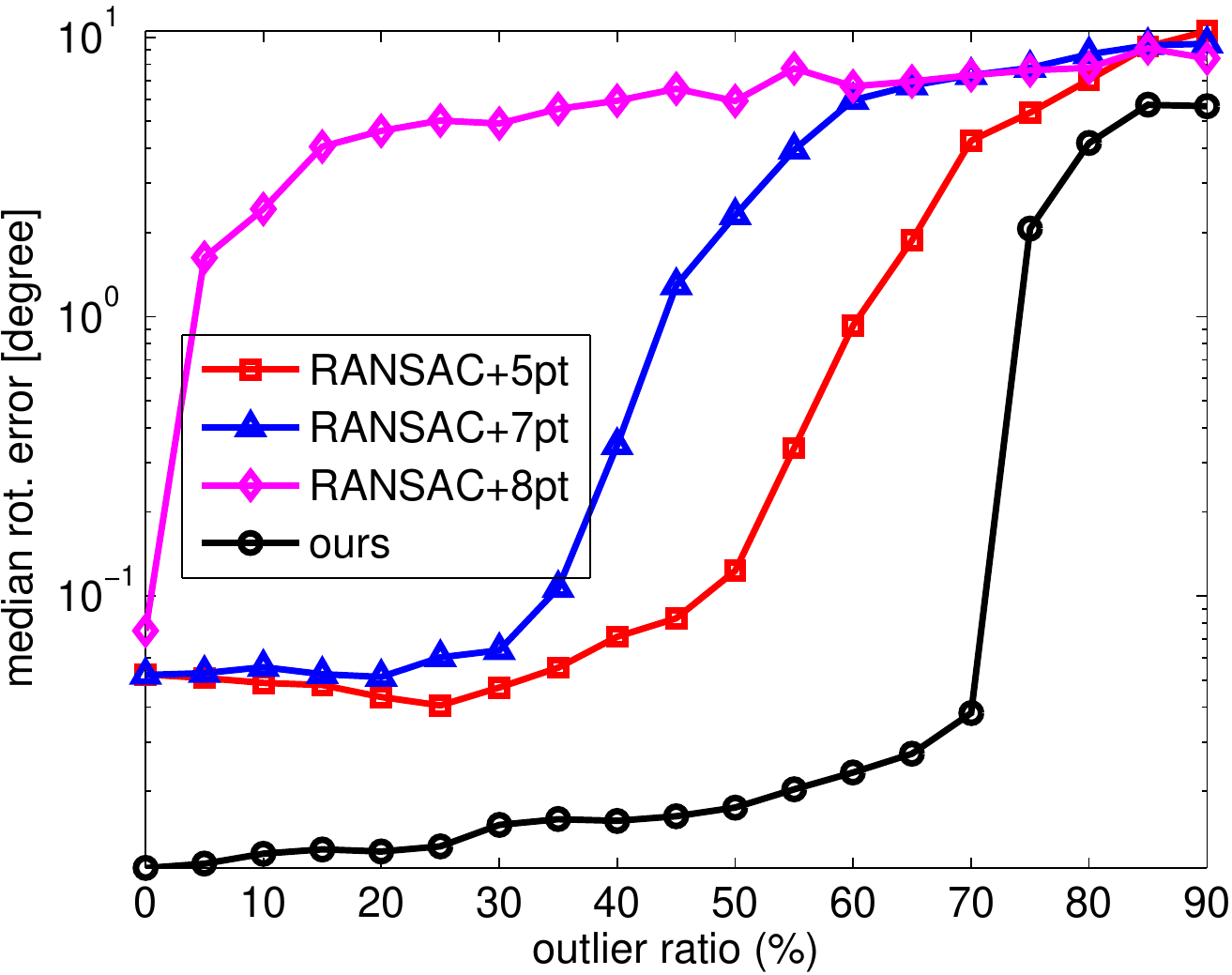}
	}
	\subfigure[median $\varepsilon_\text{tran}$]
	{
		\includegraphics[width=0.23\linewidth]{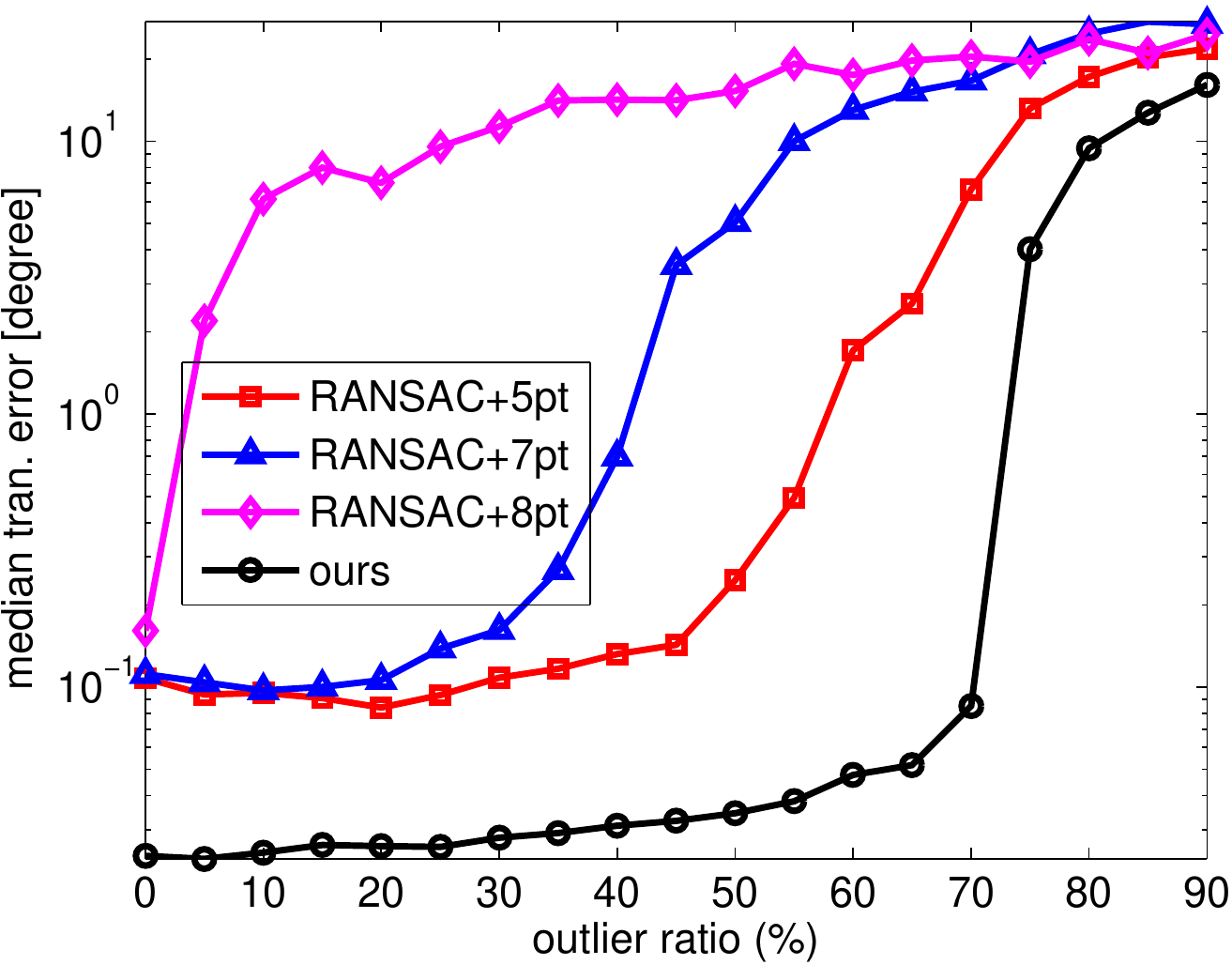}
	}
	\end{center}
	\vspace{-0.15in}
	\caption{Relative pose accuracy with respect to outlier ratios. Figures in the first row and the second rows use linear scale and logarithmic scale for vertical axis, respectively. 
	}
	\label{fig:exp_sythetic_outlier}
\end{figure*}

The performance of the robust $N$-point method and RANSAC-based methods is shown in Fig.~\ref{fig:exp_sythetic_outlier}. 
Our method can consistently obtain smaller errors than RANSAC-based methods in terms of both mean and median errors. 
From logarithmic scale plots in Fig.~\ref{fig:exp_sythetic_outlier}(e)$\sim$(h), our method has significantly smaller rotation and translation error than other methods when the outlier ratio is below a threshold.

\subsubsection{Real-World Data}

\begin{figure}[tbp]
	\begin{center}
	\subfigure[rotation error $\varepsilon_\text{rot}$]
	{
		\includegraphics[width=0.47\linewidth]{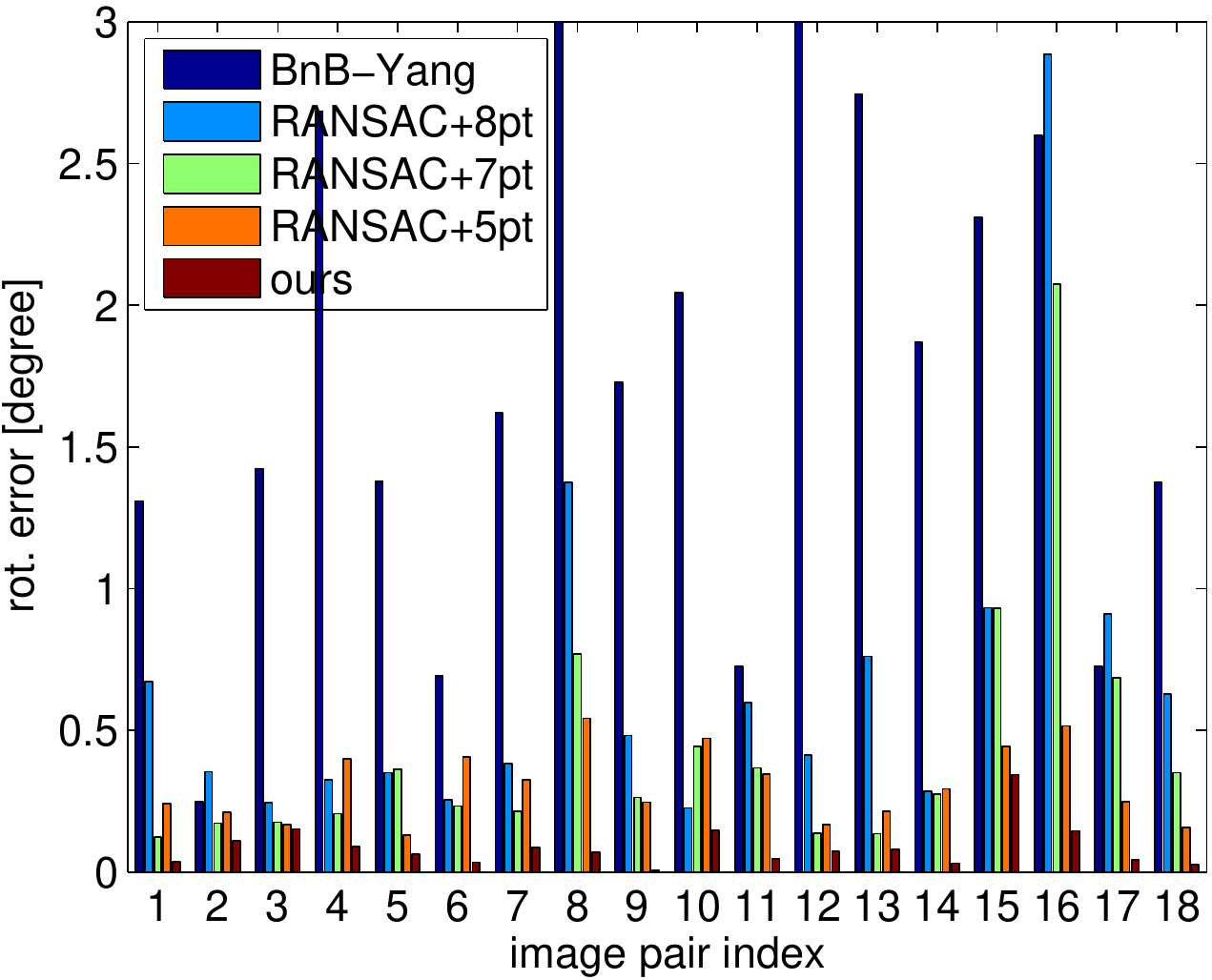}
	}
	\subfigure[translation error $\varepsilon_\text{tran}$]
	{
		\includegraphics[width=0.475\linewidth]{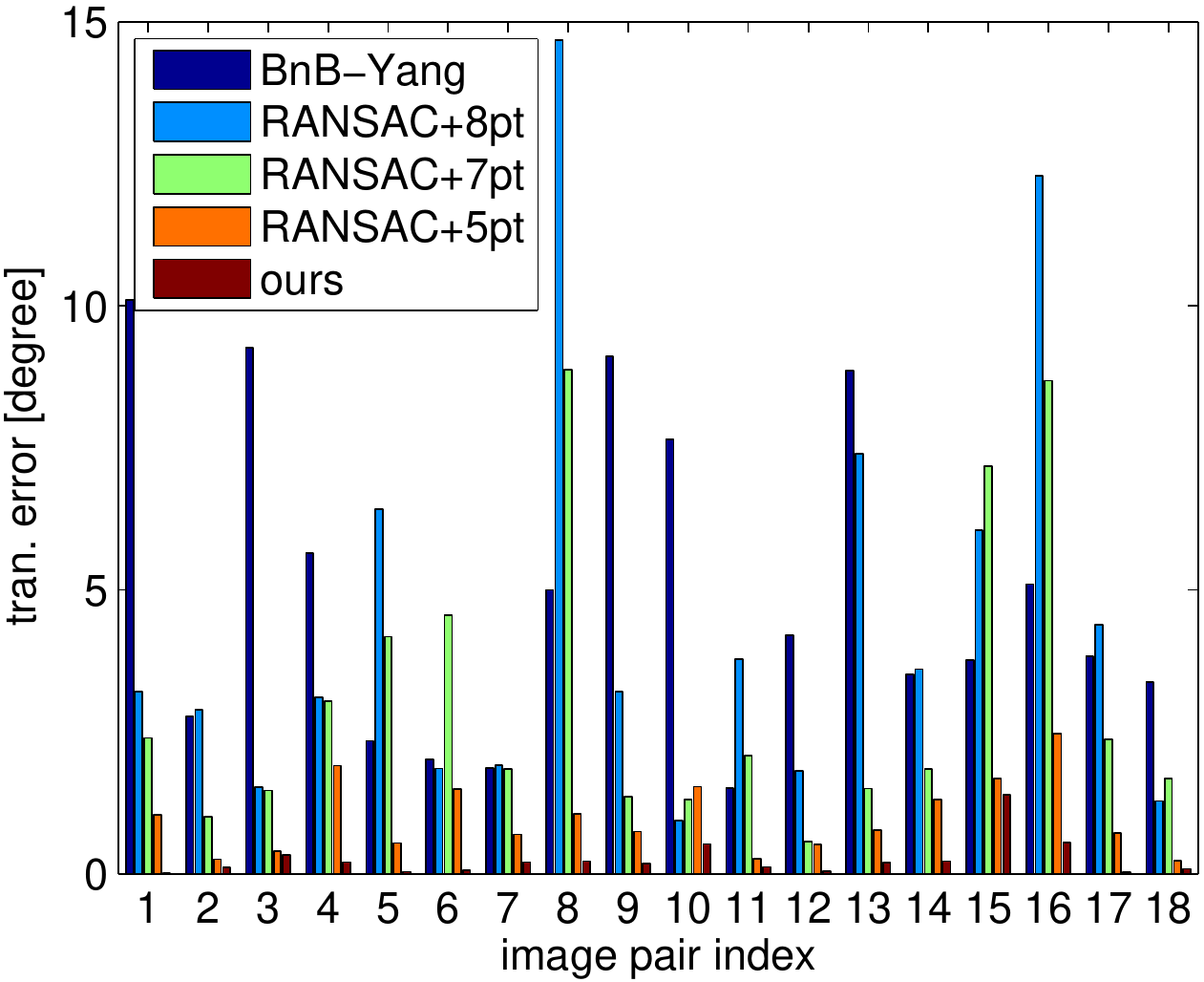}
	}
	\end{center}
	\vspace{-0.15in}
	\caption{Relative pose accuracy of the \texttt{EPFL Castle-19} dataset. 
	}
	\label{fig:exp_epfl_outlier}
\end{figure}

We use the \texttt{EPFL Castle-19} dataset~\cite{strecha2008on} for real-world data. The image pairs and their putative correspondences are generated in the same way as that in Section~\ref{sec:real_data1}.
Each image pair contains $4408$ putative correspondences on average. 
Though it is theoretically sound, the \texttt{BnB-Yang} method~\cite{yang2014optimal} cannot handle a large number of point correspondences. It fails to return a solution for an image pair in a day,  therefore we randomly select $100$ putative correspondences as input for it.

The relative pose accuracy of different methods is shown in Fig.~\ref{fig:exp_epfl_outlier}.
It can be seen that the proposed robust $N$-point method has higher overall accuracy than RANSAC-based methods and \texttt{BnB-Yang} method. 
The efficiency comparison between our method and RANSAC-based methods is summarized as below: 
(1)~The robust $N$-point method takes a roughly constant number of iterations. According to the GNC strategy and its parameter setting, its iteration number is at most $79$. In contrast, the iteration in RANSAC-based methods increases drastically with a high outlier ratio or a high confidence level. 
For example, given $45\%$ outliers and $99\%$ confidence level, \texttt{5pt}, \texttt{7pt} and \texttt{8pt} methods need to iterate at least $90$, $301$ and $548$ times, respectively. 
In practice, the RANSAC-based methods need more iterations than the least required to achieve high accuracy.
(2)~The non-minimal solver in the robust $N$-point method takes much more time than minimal solvers in RANSAC-based methods ($6$~milliseconds vs. $30\sim 180$~microseconds). As a result, our method takes about $480$~ms, and RANSAC-based methods take $52$~ms on average. 
(3)~The \texttt{BnB-Yang} method takes $118\sim 9007$~seconds for randomly selected $100$ point correspondences.

\section{Conclusions}
\label{sec:conclusion}

This paper introduces a novel non-minimal solver for $N$-point problem in essential matrix estimation. 
First, we reformulate  $N$-point problem as a simple QCQP by proposing an equivalent form of the essential manifold. 
Second, semidefinite relaxation is exploited to convert this problem to an SDP problem, and pose recovery from an optimal solution of SDP is proposed. 
Finally, a theoretical analysis of tightness and local stability is provided. 
Our method is stable, globally optimal, and relatively easy to implement. 
In addition, we propose a robust $N$-point method by integrating the non-minimal solver into M-estimators.
Extensive experiments demonstrate that the proposed $N$-point method can find and certify the global optimum of the optimization problem, and it is  $2\sim 3$ orders of magnitude faster than state-of-the-art non-minimal solvers. Moreover, the robust $N$-point method outperforms state-of-the-art methods in terms of robustness and accuracy.


%


\appendices
\section{Another Formulation of $N$-Point Problem}
\label{sec:another_form}
The proposition below provides another equivalent form to define essential manifold $\mathcal{M}_{\E}$, which will derivate another simple optimization problem.

\begin{lemma}
	For an essential matrix $\E$ which can be decomposed by $\E= [\mathbf{t}]_{\times} \R$, it satisfies that $\emph{trace}(\E\E^\top) = \sum_{i=1}^3 \sum_{j=1}^3 \E_{ij}^2 = 2 \|\mathbf{t}\|^2$.
	\label{lemma:e_trace}
\end{lemma}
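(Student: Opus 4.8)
The plan is to reduce the identity to a one-line trace computation by first eliminating the rotation. Starting from the decomposition $\E = [\mathbf{t}]_{\times}\R$ and using the orthogonality $\R\R^\top = \mathbf{I}$, I would write
\begin{align}
\E\E^\top = [\mathbf{t}]_{\times}\R\R^\top[\mathbf{t}]_{\times}^\top = [\mathbf{t}]_{\times}[\mathbf{t}]_{\times}^\top, \nonumber
\end{align}
which is exactly the computation already carried out in the \emph{only if} direction of Theorem~\ref{theorem:equivalent1}. The middle equality in the lemma, $\trace(\E\E^\top) = \sum_{i,j}\E_{ij}^2$, requires no work: it is merely the Frobenius-norm identity $\trace(\E\E^\top) = \sum_i(\E\E^\top)_{ii} = \sum_i\sum_j \E_{ij}^2$. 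Hence the whole claim collapses to evaluating $\trace([\mathbf{t}]_{\times}[\mathbf{t}]_{\times}^\top)$.

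For the final trace I would offer either of two routes. The direct route reads the rows of the skew-symmetric matrix $[\mathbf{t}]_{\times}$ off its explicit form: the squared row norms are $t_2^2+t_3^2$, $t_1^2+t_3^2$, and $t_1^2+t_2^2$, and since $\trace([\mathbf{t}]_{\times}[\mathbf{t}]_{\times}^\top)$ is the sum of the squared entries of $[\mathbf{t}]_{\times}$, they add up to $2(t_1^2+t_2^2+t_3^2) = 2\|\mathbf{t}\|^2$. The structural route instead invokes the standard skew-symmetric identity $[\mathbf{t}]_{\times}[\mathbf{t}]_{\times}^\top = \|\mathbf{t}\|^2\mathbf{I} - \mathbf{t}\mathbf{t}^\top$ (equivalently $[\mathbf{t}]_{\times}^2 = \mathbf{t}\mathbf{t}^\top - \|\mathbf{t}\|^2\mathbf{I}$ together with $[\mathbf{t}]_{\times}^\top = -[\mathbf{t}]_{\times}$), from which $\trace = 3\|\mathbf{t}\|^2 - \|\mathbf{t}\|^2 = 2\|\mathbf{t}\|^2$ follows immediately. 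I would favor the direct route since it needs no auxiliary identity and mirrors the determinant computation used earlier in the paper.

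There is essentially no hard step here; the lemma is a short consequence of orthogonality plus a trace evaluation. The only points that warrant a line of care are making the cancellation $\R\R^\top=\mathbf{I}$ explicit (so the statement genuinely does not depend on $\R$) and noting that the first equality is the definition of the squared Frobenius norm rather than something requiring the essential structure. Both routes for the trace are routine, so I expect the write-up to be a few lines at most.
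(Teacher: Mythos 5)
Your proposal is correct and follows essentially the same route as the paper's proof: both use the orthogonality of $\R$ to reduce $\E\E^\top$ to $[\mathbf{t}]_{\times}[\mathbf{t}]_{\times}^\top$ and then evaluate the trace as the sum of squared entries of the skew-symmetric matrix, giving $2\|\mathbf{t}\|^2$. Your write-up is in fact more explicit than the paper's one-line verification, but there is no substantive difference in approach.
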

\begin{proof}
	Note that the norm of each row of $\R$ is $1$, and the rows of $\R$ are orthogonal to each other. Taking advantage of $\E= [\mathbf{t}]_{\times} \R$, it can be verified the $\trace(\E\E^\top) = \sum_{i=1}^3 \sum_{j=1}^3 \E_{ij}^2 = 2 (t_1^2 + t_2^2 + t_3^2) = 2\|\mathbf{t}\|^2$.
\end{proof}

\begin{proposition}[Proposition~7.3 in~\cite{faugeras1993three}]
	\label{theorem:nister}
	A real $3\times 3$ matrix, $\E$, is an essential matrix if and only if it satisfies the equation: 
	\begin{align}
	\E\E^\top\E - \frac{1}{2} \trace(\E\E^\top)\E = 0.
	\label{equ:nister}
	\end{align}
\end{proposition}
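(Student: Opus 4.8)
The plan is to reduce both directions to the singular value decomposition (SVD) characterization of essential matrices already invoked in the proof of Theorem~\ref{theorem:equivalent1}: a real $3\times 3$ matrix $\E$ is an essential matrix if and only if it admits an SVD of the form $\E = U\,\mathrm{diag}(s,s,0)\,V^\top$ with $U,V$ orthogonal and $s\ge 0$, i.e.\ its two nonzero singular values coincide. Everything then follows from computing the two terms of Eq.~\eqref{equ:nister} in terms of the singular values, exploiting the invariance of the expression under the orthogonal factors of the SVD.

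For the \emph{only if} direction, I would start from $\E = U\,\mathrm{diag}(s,s,0)\,V^\top$ and compute $\E\E^\top = U\,\mathrm{diag}(s^2,s^2,0)\,U^\top$, so that $\trace(\E\E^\top) = 2s^2$ and $\E\E^\top\E = U\,\mathrm{diag}(s^3,s^3,0)\,V^\top$. Substituting into Eq.~\eqref{equ:nister} yields $U\,\mathrm{diag}(s^3,s^3,0)\,V^\top - s^2\,U\,\mathrm{diag}(s,s,0)\,V^\top = \mathbf{0}$, which holds identically. A shorter coordinate-free alternative uses $\E = [\t]_\times\R$ directly: since $\trace(\E\E^\top)=2\|\t\|^2$ by Lemma~\ref{lemma:e_trace} and $[\t]_\times^\top = -[\t]_\times$, the equation reduces to $\E\E^\top\E = \|\t\|^2\E \Leftrightarrow [\t]_\times[\t]_\times^\top[\t]_\times = -[\t]_\times^3 = \|\t\|^2[\t]_\times$, i.e.\ to the skew-symmetric identity $[\t]_\times^3 = -\|\t\|^2[\t]_\times$, which I would verify from $[\t]_\times^2 = \t\t^\top - \|\t\|^2\mathbf{I}$ together with $[\t]_\times\t=\mathbf{0}$.

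The substantive direction is \emph{if}. Here I would take an arbitrary SVD $\E = U\Sigma V^\top$ with $\Sigma = \mathrm{diag}(\sigma_1,\sigma_2,\sigma_3)$, $\sigma_1\ge\sigma_2\ge\sigma_3\ge 0$, and set $S \triangleq \trace(\E\E^\top) = \sigma_1^2+\sigma_2^2+\sigma_3^2$. Because $U$ and $V$ are invertible, Eq.~\eqref{equ:nister} is equivalent to the diagonal identity $\Sigma^3 = \tfrac{1}{2}S\,\Sigma$, i.e.\ $\sigma_i^3 = \tfrac12 S\,\sigma_i$ for each $i$. Every nonzero singular value therefore satisfies $\sigma_i^2 = \tfrac12 S$, so all nonzero singular values are equal. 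The crux is the counting step: if exactly $k$ of the $\sigma_i$ are nonzero (each equal to $\sqrt{S/2}$), then $S = k\cdot\tfrac12 S$, which forces $k=2$ whenever $S\neq 0$ and rules out $k=1,3$. Hence $\E$ has exactly two equal nonzero singular values and is an essential matrix, while the degenerate branch $S=0$ gives $\E=\mathbf{0}$, corresponding to $\t=\mathbf{0}$.

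I expect the main obstacle to be making this counting argument fully rigorous while cleanly isolating the $S=0$ case, and justifying the passage from the matrix equation to the scalar relations $\sigma_i^3 = \tfrac12 S\,\sigma_i$ (which relies on absorbing the orthogonal factors $U,V$ and on concluding that the nonzero singular values are forced \emph{equal}, not merely constrained). Once these points are settled, invoking the SVD characterization of essential matrices closes both directions.
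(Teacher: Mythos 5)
The paper does not actually prove this proposition; it imports it verbatim as Proposition~7.3 of Faugeras's book, so there is no in-paper argument to compare against. Your proof is correct and self-contained given the standard SVD characterization of essential matrices (two equal nonzero singular values, one zero), which the paper itself already relies on elsewhere via Theorem~1 of Faugeras--Maybank. Both directions check out: in the \emph{only if} direction the computation $\E\E^\top\E = U\,\mathrm{diag}(s^3,s^3,0)\,V^\top = s^2\E$ with $\trace(\E\E^\top)=2s^2$ is immediate, and your coordinate-free alternative via $[\t]_\times^3=-\|\t\|^2[\t]_\times$ (from $[\t]_\times^2=\t\t^\top-\|\t\|^2\mathbf{I}$ and $[\t]_\times\t=\mathbf{0}$) is if anything cleaner, since it avoids invoking the SVD characterization in that direction entirely. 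In the \emph{if} direction, the reduction to $\Sigma^3=\tfrac12 S\,\Sigma$ by cancelling the orthogonal factors is valid, and the counting step $S=k\cdot\tfrac12 S\Rightarrow k=2$ (for $S\neq 0$) is exactly the right way to rule out rank $1$ and rank $3$; this is not a gap but the heart of the argument, and you have it. The only point worth being explicit about is the degenerate branch $S=0$, i.e.\ $\E=\mathbf{0}$: the equation is satisfied there, so the proposition is literally true only if one admits $\t=\mathbf{0}$ in the definition $\E=[\t]_\times\R$ (as you note). In the paper this causes no harm, because the downstream use in Proposition~\ref{theorem:equivalent} imposes $\trace(\E\E^\top)=2$, which excludes the zero matrix.
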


\begin{proposition}
	\label{theorem:equivalent}
	A real $3\times 3$ matrix, $\E$, is an essential matrix in $\mathcal{M}_{\E}$ if and only if it satisfies the following two conditions: 
	\begin{align}
	\emph{\text{(i)}} \ \ \trace(\E\E^\top) = 2 \quad \emph{\text{and}} \quad \emph{\text{(ii)}} \ \ \E\E^\top\E = \E.
	\end{align}
\end{proposition}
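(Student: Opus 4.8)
The plan is to prove both directions by combining the classical characterization of an (unnormalized) essential matrix in Proposition~\ref{theorem:nister} with the trace identity in Lemma~\ref{lemma:e_trace}. Condition (ii) will supply the \emph{shape} constraint (two equal nonzero singular values together with a zero one), while condition (i) will fix the \emph{scale} so that those singular values equal $1$, which is exactly what distinguishes $\mathcal{M}_{\mathbf{E}}$ from the set of all essential matrices.

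For the \emph{only if} direction, I would start from $\E = [\mathbf{t}]_{\times} \R$ with $\mathbf{t} \in \mathcal{S}^2$ and $\R \in \text{SO}(3)$. Lemma~\ref{lemma:e_trace} immediately gives $\trace(\E\E^\top) = 2\|\mathbf{t}\|^2 = 2$, establishing (i). Since any such $\E$ is an essential matrix, Proposition~\ref{theorem:nister} yields $\E\E^\top\E = \tfrac{1}{2}\trace(\E\E^\top)\E$; substituting $\trace(\E\E^\top) = 2$ collapses the right-hand side to $\E$, which is precisely (ii).

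For the \emph{if} direction, assume (i) and (ii). Condition (i) forces $\E \neq \mathbf{0}$ and, more importantly, makes $\tfrac{1}{2}\trace(\E\E^\top) = 1$, so that (ii) can be rewritten as $\E\E^\top\E - \tfrac{1}{2}\trace(\E\E^\top)\E = 0$. This is exactly the defining equation in Proposition~\ref{theorem:nister}, so $\E$ is an essential matrix and therefore admits a decomposition $\E = [\mathbf{t}]_{\times} \R$ for some $\mathbf{t}\in\mathbb{R}^3$ and $\R\in\text{SO}(3)$ of a priori arbitrary scale. Applying Lemma~\ref{lemma:e_trace} once more gives $2\|\mathbf{t}\|^2 = \trace(\E\E^\top) = 2$, hence $\|\mathbf{t}\| = 1$, i.e. $\mathbf{t}\in\mathcal{S}^2$. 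By the definition of $\mathcal{M}_{\mathbf{E}}$, this shows $\E\in\mathcal{M}_{\mathbf{E}}$ and completes the argument.

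The step I expect to be most delicate is the \emph{if} direction, specifically the bookkeeping of scale. Proposition~\ref{theorem:nister} characterizes essential matrices only up to an unknown scale (any $\mathbf{t}$, not necessarily a unit vector), and its equation is even satisfied by $\E = \mathbf{0}$, so condition (ii) alone is insufficient. The crux is to notice that condition (i) plays a double role: it excludes the trivial solution $\E = \mathbf{0}$ and simultaneously pins the common nonzero singular value to $1$ through the trace identity. If one wished to avoid relying on Proposition~\ref{theorem:nister}, an alternative would be to argue directly through the singular value decomposition $\E = U\Sigma V^\top$: condition (ii) forces $\Sigma^3 = \tfrac{1}{2}\trace(\E\E^\top)\Sigma$, whence each nonzero singular value satisfies $\sigma_i^2 = \tfrac{1}{2}\sum_j \sigma_j^2$, which (excluding $\E=\mathbf{0}$) can only hold for exactly two equal nonzero singular values; condition (i) then fixes their value to $1$, recovering $\E\in\mathcal{M}_{\mathbf{E}}$ via Theorem~\ref{theorem:equivalent1}.
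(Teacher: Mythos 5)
Your proof is correct and follows essentially the same route as the paper's: both directions combine Proposition~\ref{theorem:nister} with the trace identity of Lemma~\ref{lemma:e_trace}, using condition (i) to fix the scale $\|\mathbf{t}\|=1$. Your added remark about excluding $\E=\mathbf{0}$ and the optional SVD argument are sensible refinements, but the core argument is the one the paper gives.
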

\begin{proof}
	For the \emph{if} direction, by combining conditions (i) and (ii) we obtain Eq.~\eqref{equ:nister}. 
	According to Theorem~\ref{theorem:nister}, $\E$ is a valid essential matrix. So there exist (at least) a pair of $\mathbf{t} \in \mathbb{R}^3$ and $\R \in \text{SO}(3)$ such that $\E= [\mathbf{t}]_{\times} \R$. According to condition (i) and Lemma~\ref{lemma:e_trace}, we have $\trace(\E\E^\top) = 2\|\mathbf{t}\|^2 = 2$, which means $\|\mathbf{t}\| = 1$. Thus we prove $\E \in \mathcal{M}_{\E}$.
	
	For \emph{only if} direction, since $\E \in \mathcal{M}_{\E}$, it is straightforward that condition (i) is satisfied according to Lemma~\ref{lemma:e_trace}. In addition, Eq.~\eqref{equ:nister} is satisfied since $\E$ is an essential matrix. By substituting condition (i) in Eq.~\eqref{equ:nister}, we obtain condition (ii).
\end{proof}

According to Proposition~\ref{theorem:equivalent}, an equivalent form of minimizing the algebraic error is
\begin{align}
\label{equ:op_new_obj6}
\min_{\E} & \ \ \e^\top \C \e \\
\text{s.t.} & \ \  \trace(\E\E^\top) = 2, \quad \E\E^\top\E -\E = \mathbf{0}. \nonumber
\end{align}
By introducing an auxiliary matrix $\G$, this problem can be reformulated as a QCQP
\begin{align}
\label{equ:op_new_obj7}
\min_{\E, \G} & \ \ \e^\top \C \e \\
\text{s.t.} &  \ \ \G = \E\E^\top, \ \ \trace(\G) = 2, \ \ \G\E -\E = \mathbf{0}. \nonumber
\end{align}
Note that $\G$ is a symmetric matrix which introduces $6$ variables. Thus there are $15$ variables and $16$ constraints in this QCQP. 

\section{Strong Duality Between Primal SDP and Its Duality}
\label{sec:another_lemma}

\begin{lemma}
\label{lemma:dual}
For QCQP~\eqref{equ:op_new_obj3}, there is no duality gap between the primal SDP problem~\eqref{equ:op_obj6} and its dual problem~\eqref{equ:op_dual}.
\end{lemma}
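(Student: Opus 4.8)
The plan is to establish the claim as an instance of the standard strong duality theorem for semidefinite programs, whose sufficient hypothesis is Slater's condition: if the primal SDP is bounded below and admits a \emph{strictly feasible} (positive definite) point, then there is no duality gap and the dual optimum is attained. Two of the three required ingredients are immediate. Boundedness below holds because $\C_0 \succeq 0$ and every primal-feasible $\X$ satisfies $\X \succeq 0$, so the objective $\trace(\C_0\X)$ is nonnegative. Feasibility holds because any rank-one matrix $\bar{\x}\bar{\x}^\top$ built from a genuine essential matrix satisfies all the constraints of~\eqref{equ:op_obj6}. Hence the whole proof reduces to producing one strictly feasible point.

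First I would emphasize the subtlety that makes this nontrivial: the natural feasible points $\bar{\x}\bar{\x}^\top$ are rank one and therefore lie on the boundary of the PSD cone, so they cannot serve as a Slater point; the crux is to move into the interior of the cone while still meeting all seven equality constraints~\eqref{equ:sub1}--\eqref{equ:sub7}. The key observation is that restricting to diagonal matrices decouples these constraints. If $\X = \mathrm{diag}(d_1,\dots,d_{12})$, then the defining matrices of the cross-term constraints~\eqref{equ:sub4}--\eqref{equ:sub6} have zero diagonal, so those constraints are satisfied automatically, while the four remaining constraints collapse to the scalar identities $d_1+d_2+d_3 = d_{11}+d_{12}$, $d_4+d_5+d_6=d_{10}+d_{12}$, $d_7+d_8+d_9=d_{10}+d_{11}$, and $d_{10}+d_{11}+d_{12}=1$.

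Next I would exhibit a positive solution of these identities, for instance $d_{10}=d_{11}=d_{12}=\tfrac{1}{3}$ together with $d_1=\cdots=d_9=\tfrac{2}{9}$, all of which are strictly positive. The resulting matrix $\X_0 = \mathrm{diag}(\tfrac{2}{9},\dots,\tfrac{2}{9},\tfrac{1}{3},\tfrac{1}{3},\tfrac{1}{3}) \succ 0$ is primal feasible and positive definite, so it is the desired Slater point. Invoking the conic duality theorem then yields zero duality gap between~\eqref{equ:op_obj6} and~\eqref{equ:op_dual}, and I would note that, unlike Lemma~\ref{lemma:noisefree}, this argument uses nothing about the noise level and hence applies to arbitrary (noisy) observations.

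I expect the only genuine obstacle to be the conceptual point isolated above, namely recognizing that Slater demands strict feasibility rather than mere feasibility and therefore forces one off the rank-one locus; the diagonal ansatz that separates the linear constraints is the enabling trick, after which the explicit construction and the final appeal to the textbook SDP duality result are entirely routine.
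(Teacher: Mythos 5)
Your proof is correct, and it reaches the conclusion by certifying strict feasibility on the \emph{primal} side, whereas the paper certifies it on the \emph{dual} side. The paper also invokes \cite[Theorem~3.1]{vandenberghe1996semidefinite}, but its primal witness is the rank-one matrix $\X_0=\x_0\x_0^\top$ built from a genuine essential matrix, which it asserts to be ``an interior'' point of the primal feasible set; that assertion is not right in the strict sense, since a rank-one matrix lies on the boundary of the PSD cone. The paper's argument is nonetheless saved by its explicit dual Slater point $\boldsymbol{\lambda}_0=[-1,-1,-1,0,0,0,-3]^\top$, for which $\Q(\boldsymbol{\lambda}_0)=\C_0+\mathbf{I}\succ 0$, and dual strict feasibility alone already closes the duality gap. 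Your diagonal ansatz $\X_0=\mathrm{diag}(\tfrac{2}{9},\dots,\tfrac{2}{9},\tfrac{1}{3},\tfrac{1}{3},\tfrac{1}{3})\succ 0$ is a clean and correct way to produce a genuinely strictly feasible primal point (the cross-term constraints \eqref{equ:sub4}--\eqref{equ:sub6} indeed have traceless constraint matrices, and the remaining four scalar identities are satisfied), so it in fact repairs the weak half of the paper's argument. The two certificates buy different attainment guarantees: your primal Slater point gives existence of an optimal $\boldsymbol{\lambda}^\star$, while the paper's dual Slater point gives existence of an optimal $\X^\star$ (which is what the pose-recovery step actually uses); establishing both, as you could do by simply appending the paper's $\boldsymbol{\lambda}_0$ to your construction, yields attainment on both sides. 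One minor presentational point: since the lemma only claims zero duality gap, your appeal to boundedness below is fine but could be replaced by mere primal feasibility (any rank-one essential point), which already shows $p^\star<+\infty$; boundedness below is still worth recording because it rules out $p^\star=-\infty$.
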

\begin{proof}
Denote the optimal value for problem~\eqref{equ:op_obj6} and its dual problem~\eqref{equ:op_dual} as $f_{\text{primal}}$ and $f_{\text{dual}}$. The inequality $f_{\text{primal}} \ge f_{\text{dual}}$ follows from weak duality. Equality, and the existence of $\X^\star$ and $\boldsymbol{\lambda}^\star$ which attain the optimal values follow if we can show that the feasible regions of both the primal and dual problems have nonempty interiors, see~\cite[Theorem~3.1]{vandenberghe1996semidefinite} (also known as Slater's constraint qualification~\cite{boyd2004convex}.)
		
For the primal problem, let $\E_0$ be an arbitrary point on the essential manifold $\mathcal{M}_\E$: $\E_0 = [\mathbf{t}_0]_{\times} \R_0$, where $\|\mathbf{t}_0\| = 1$. Denote $\x_0 = [\text{vec}(\E_0); \mathbf{t_0}]$. It can be verified that $\X_0 = \x_0 \x_0^\top$ is an interior in the feasible domain of the primal problem. 
For the dual problem, let $\boldsymbol{\lambda}_0 = [-1, -1, -1, 0, 0, 0, -3]^\top$. Recall that $
\C \succeq 0$ and it can be verified that $\Q(\boldsymbol{\lambda}_0) = \begin{bmatrix}
\C & \mathbf{0} \\
\mathbf{0} & \mathbf{0}
\end{bmatrix} + \mathbf{I} \succ 0$. That means $\boldsymbol{\lambda}_0$ is an interior in the feasible domain of the dual problem. 
\end{proof}

\ifCLASSOPTIONcompsoc
  \section*{Acknowledgments}
\else
\fi

The author would like to thank Prof. Laurent Kneip at ShanghaiTech, Prof. Qian Zhao at XJTU, and Haoang Li at CUHK for fruitful discussions. The author also thanks Dr. Jesus Briales at the University of Malaga for providing the code of~\cite{briales2018certifiably} and Dr. Yijia He at CASIA for his help in the experiments.

\ifCLASSOPTIONcaptionsoff
  \newpage
\fi



\bibliographystyle{IEEEtran}
%
{
	\bibliography{egbib}
}

\end{document}